\newcommand{\NN}{\mathbb{N}}
\newcommand{\Psc}{\mathscr{P}}
\newcommand{\Csc}{\mathscr{C}}
\newcommand{\iv}[2]{[\![#1,#2]\!]} 
\renewcommand{\d}{\mathrm{d}}    
\newcommand{\Kcons}{\mathscr{K}} 
\newcommand{\Dsdp}{\mathbf{D}}   
\newcommand{\Bsc}{\mathscr{B}}   
\newcommand{\RR}{\mathbb{R}}     
\newcommand{\Lcal}{\mathcal{L}}  
\newcommand{\LcalS}{\mathcal{L}_S}  
\newcommand{\Ltilde}{\tilde{L}}  
\newcommand{\BB}{\mathbb{B}}     
\newcommand{\dBB}{\mathring{\mathbb{B}}} 
\newcommand{\Ktens}{\bm{\mathcal{K}}} 
\newcommand{\dH}{\mathring{H}}    
\newcommand{\Id}{\text{Id}}
\DeclareMathOperator*{\diag}{diag} 
\DeclareMathOperator*{\diam}{diam} 
\DeclareMathOperator*{\Sp}{span}   
\DeclareMathOperator*{\dom}{dom}
\DeclareMathOperator*{\cob}{\overline{co}}
\DeclareMathOperator*{\tr}{Tr}
\newcommand{\Y}{\mathscr{Y}}      
\newcommand{\X}{\mathscr{X}}      
\newcommand{\C}{\mathcal{C}}      
\newcommand{\K}{K}                
\newcommand{\F}{\mathscr{F}}      
\newcommand{\FK}{\F_{\K}}         
\newcommand{\Fk}{\F_{k}}          
\newcommand{\hatFK}{\hat{\F}_K}   
\renewcommand{\b}{\mathbf}        
\newcommand{\tb}{\textbf}         
\newcommand{\p}{\partial}         
\newcommand{\R}{\mathbb{R}}       
\newcommand{\Rnn}{\R_{+}}         
\newcommand{\N}{\mathbb{N}}       
\newcommand{\Zp}{\N^*}            
\renewcommand{\P}{\mathbb{P}}     
\newcommand{\T}{\top}             
\newcommand{\app}{\text{app}}     
\newcommand{\rel}{\text{relax}}   
\newcommand{\proj}{\text{proj}}   
\renewcommand{\S}{\mathscr{S}}    
\newcommand{\Tiv}{\mathcal{T}}    
\newcommand{\Cp}{\mathscr{C}_{\text{no-pers}}^{(k)}} 
\newcommand{\D}{\mathscr{D}}     
\newcommand{\NS}{N_S}            
\DeclareMathOperator*{\argmin}{arg\,min}
\begin{document}
	
	\title{Handling Hard Affine SDP Shape Constraints in RKHSs}
	\author{\name Pierre-Cyril Aubin-Frankowski \email  pierre-cyril.aubin@inria.fr \\
		\addr INRIA - Département d’Informatique de l’École Normale Supérieure,\\
		PSL Research University, 2 rue Simone Iff, 75012, Paris, France
		\AND
		\name Zolt{\'a}n Szab{\'o} \email z.szabo@lse.ac.uk \\
		\addr Department of Statistics,	London School of Economics\\
		Houghton Street, London, WC2A 2AE, UK}
	
    \editor{Massimiliano Pontil}
	
	\maketitle
	
	\begin{abstract}
		Shape constraints, such as non-negativity, monotonicity, convexity  or supermodularity, play a key role in various applications of machine learning and statistics. However, incorporating this side information into predictive models in a hard way (for example at \emph{all} points of an interval) for rich function classes is a notoriously challenging problem.
		We propose a unified and modular convex optimization framework, relying on second-order cone (SOC) tightening, to encode hard affine SDP constraints on function derivatives, for models belonging to vector-valued reproducing kernel Hilbert spaces (vRKHSs). The modular nature of the proposed approach allows to simultaneously handle multiple shape constraints, and to tighten an infinite number of constraints into finitely many. We prove the convergence of the proposed scheme and that of its adaptive variant, leveraging geometric properties of vRKHSs. Due to the covering-based construction of the tightening, the method is particularly well-suited to tasks with small to moderate input dimensions. The efficiency of the approach is illustrated in the context of shape optimization, safety-critical control, robotics and econometrics.
	\end{abstract}
	\begin{keywords}
		vector-valued reproducing kernel Hilbert space, shape-constrained optimization, matrix-valued kernel, kernel derivatives
	\end{keywords}
	
	\section{Introduction} 
	The design of flexible predictive models is among the most fundamental problems of machine learning. However, in several applications one is faced with a limited number of samples due to the difficulty or the cost of data acquisition.
	A well-established way to tackle this serious bottleneck and to improve sample-efficiency corresponds to incorporating qualitative priors on the shape of the model, such as non-negativity, monotonicity, convexity or supermodularity, collectively known as shape constraints \citep{guntuboyina18nonparametric}. This side information can originate from both physical and theoretical constraints on the model such as ``stay within boundaries'' in path-planning or ``be nonnegative and integrate to one'' in density estimation.
	
	Various scientific fields, including econometrics, statistics, biology, game theory or finance, impose shape constraints on their hypothesis classes. For instance, economic theory dictates increasing and concave utility functions, decreasing demand functions, or monotone link functions \citep{johnson18shape,chetverikov18econometrics}.
	In statistics, applying a monotonicity assumption on the regression function (for instance in isotonic regression;   \citealt{han19isotonic}) dates back at least to \citet{brunk55maximum}; the non-negativity requirement also arises naturally when learning the intensity function of Poisson processes and the triggering function of Hawkes processes \citep{yang19learning}. Density estimation entails non-negativity which can be paired with other constraints \citep{royset15fusion}, whereas, in quantile regression, conditional quantile functions grow w.r.t.\ the quantile level \citep{koenker05quantile}.
	In biology, monotone regression is particularly  well-suited to dose-response studies \citep{hu05analysis} and to identification of genome interactions \citep{luss12efficient}. Inventory problems, game theory and  pricing models commonly rely on the assumption of supermodularity \citep{topkis98supermodularity,simchilevi14logic}. In financial applications, call option prices should be increasing in volatility, monotone and convex in the underlying stock price \citep{aitsahalia03nonparametric}. In control theory, shape constraints are known as state constraints, and rank among the most difficult topics of the field \citep{hartl95survey,aubin2020hard_control}.
	
	A large and important class of these shape requirements takes the form of an affine SDP (positive semidefinite) inequality over the derivatives of $\b f\in\F$  where $\F$ is a hypothesis class (a set of candidate predictive models). Particularly, these constraints are requested to hold \emph{pointwise} at \emph{all elements} of a set $\Kcons\subseteq \R^d$:
	\begin{align}
		\b 0_{P\times P} & \preccurlyeq \diag(\b b)+\b D\b f(\b x)   \quad \forall \,\b x \in \Kcons \label{eq:pointwise-constraints}
	\end{align}
	for some bias $\b b\in\R^P$ and differential operator $\b D$ (e.g., the Hessian). The fundamental challenge one faces when optimizing an objective $\Lcal(\b f)$ over $\F$ is that in most relevant cases the set $\Kcons$ has non-finite cardinality, and hence there is an infinite number of constraints to satisfy. For instance, in constrained path-planning, $\Kcons$ corresponds to a time interval and the goal is to avoid collisions at all times. 
	
	In the statistics community, the main emphasis has been on designing consistent estimators and on studying their rates  \citep{han16multivariate,chen16generalized,freyberger18inference,lim20limiting,deng20isotonic,kur20optimality}. While these asymptotic results are of significant theoretical interest, imposing shape priors is generally beneficial in the small-sample regime. Since optimization  with an infinite number of constraints \eqref{eq:pointwise-constraints} is \emph{computationally intractable}, one has to either relax or tighten the problem.\footnote{\label{footnote:relax-tighten}We say that problem $(\Psc_1)$ is a relaxation (resp.\ tightening) of problem $(\Psc_2)$ if they have the same objective function and the search space of $(\Psc_1)$ contains (resp.\ is contained in) that of $(\Psc_2)$.} \emph{Relaxing} corresponds to approaches for which the constraint \eqref{eq:pointwise-constraints} is not guaranteed to be satisfied. For instance, one can choose to enforce the constraint only at a finite number of points \citep{takeuchi06nonparametric,blundell12measuring,agrell19gaussian} by replacing $\Kcons$ with a discretization $\{\b x_m\}_{m=1}^{M} \subsetneq \Kcons$ in \eqref{eq:pointwise-constraints}. An alternative approach for relaxing is to add soft penalties to the objective $\Lcal(\b f)$ \citep{sangnier16joint,koppel19projected,brault19infinite}. \emph{Tightening} on the contrary restricts the search space of functions to a smaller and more amenable subset $\F_0\subsetneq \F$. This principle can be implemented by encoding the requirement \eqref{eq:pointwise-constraints} into $\F_0$ through algebraic techniques. The approach is feasible for restrictive finite-dimensional $\F_0$ such as subsets of polynomials \citep{hall18thesis,curmei21shape} or polynomial splines \citep{turlach05shape,papp14shape,pya15shape,wu18semiparametric,meyer18framework}.  These limitations motivate the design of novel shape-constrained optimization techniques which avoid (i) restricted function classes, (ii) limited out-of-sample guarantees and (iii) the lack of modularity in terms of the shape constraints imposed. 	

	In this work the class of functions $\F$ is assumed to be a reproducing kernel Hilbert space $\F:=\FK$ (RKHS; \citealt{steinwart08support,saitoh16theory}; also referred to as abstract splines; \citealt{wahba90spline,berlinet04reproducing,wang11splines}). There are multiple advantages in selecting this family of functions. First, kernel methods rely inherently on pointwise evaluation \citep{aronszajn50theory} which are well-suited to handle the pointwise constraints \eqref{eq:pointwise-constraints}. In particular, the associated reproducing property (which also holds for derivatives; \citealt{zhou08derivative}) allows one to rephrase the inequality constraints \eqref{eq:pointwise-constraints} in $\FK$ using a geometric perspective, as it will be elaborated in Section~\ref{sec:constr1}. Moreover, RKHSs can be rich enough to approximate various function classes (including the space of continuous bounded functions, a property known as universality; \citealt{steinwart01influence,micchelli06universal,sriperumbudur11universality,simon-gabriel18kernel}).
	In addition, the models $\b f\in \FK$ obtained through kernel regression share the regularity of the underlying kernel \citep{steinwart08support}, allowing one to incorporate additional prior knowledge through the choice of $K$. Furthermore, vector-valued RKHSs (vRKHS; \citealt{micchelli05learning,brouard11semisupervised,kadri16operator,bouche21nonlinear,huusari21entangled}) induced by operator-valued kernels can efficiently encode dependency between output coordinates \citep[see][for an exhaustive review]{alvarez12kernels}. These vRKHSs have similar spectral decomposition \citep{devito13extension} and universal approximation properties \citep{carmeli10vector} as their real-valued counterpart. Finally, despite the infinite-dimensional nature of most vRKHSs of interest, kernel methods often  remain computationally tractable thanks to representer theorems \citep{scholkopf01generalized,zhou08derivative}. However, classical representer theorems only hold for a finite number of evaluations both in the objective and in the constraints. This is one of the points we address through our approach based on finite compact coverings.
	
	Other kernel approaches to deal with pointwise constraints were recently investigated based on kernel sum-of-squares turning the inequalities \eqref{eq:pointwise-constraints} into equalities to nonparametric nonnegative functions, requiring SDP optimization. These approaches either tighten a single non-negativity constraint over the whole space ($\Kcons=\R^d$; \citealt{marteauferey20nonparametric}) or relax a convexity constraint on a compact set \citep[Section 5]{muzellec22learning}. Our flexible framework can be seen as complementary to the latter since we propose a feasible tightening of constraints on a compact set, and thus achieve a certificate of optimality. We also have significantly fewer assumptions on the constraint set and on the kernel. Indeed, in \citet{muzellec22learning} the compact set considered is a finite union of Euclidean balls with the same radius and the kernel is essentially of Sobolev type.
	
	With a vRKHS choice for $\F$, our \tb{contributions}\footnote{Our main theoretical results are also gathered in Table~\ref{tab:main-theoretical-results} for the readers' convenience.} can be summarized as follows.
	\begin{enumerate}[labelindent=0em,leftmargin=1em,topsep=0cm,partopsep=0cm,parsep=0cm,itemsep=0mm]
		\item We propose two principled ways to tighten the infinite number of SDP constraints \eqref{eq:pointwise-constraints} through compact coverings in vRKHSs and through an upper bound of the modulus of continuity of $\b D \b f$. Specifically, we show that \eqref{eq:pointwise-constraints} can be tightened into a finite number of SDP inequalities with second-order cone (SOC) terms
		\begin{align}
			\eta_m \|\b f\|_K \, \b I_P & \preccurlyeq \diag(\b b)+ \b D\b f(\tilde{\b x}_m),   \quad \forall \, m \in [M]:=\{1,\ldots,M\} \label{eq:pointwise-constraints_SOC}
		\end{align}
		for a suitable choice of $\eta_m>0$ and $\tilde{\b x}_m\in \Kcons$ ($m\in [M]$).
		\item When considering supervised learning over vRKHSs, we prove an existence result and a representer theorem for the strengthened problems; this approach allows handling several shape constraints in a modular way. In addition, we establish the convergence, when refining the covering, to the solution of the original problem with constraint \eqref{eq:pointwise-constraints}.
		\item We design adaptive variants of the previous schemes, in order to enforce the constraints only where it is necessary, and show the convergence of these variants.
		\item We illustrate the efficiency of our approach in the context of shape optimization, safety-critical control, robotics and	econometrics.
	\end{enumerate}
	\begin{table}
        \begin{minipage}{\textwidth}
        \centering
        \begin{tabular}{rll}
		  \toprule
               Result & Content & Page\\ \midrule
               Lemma~\ref{lemma:reproducing} & reproducing property for derivatives with matrix-valued kernels & page~\pageref{lemma:reproducing} \\
Theorem~\ref{thm:inclusion} & tightening based on set inclusion (balls and half-spaces, $P=1$) & page~\pageref{thm:inclusion}\\
Theorem~\ref{thm:SDP} & tightening based on modulus of continuity (balls, $P\geq 1$) & page~\pageref{thm:SDP} \\
Lemma~\ref{lemma:eta_SDP_4Dtensor} & finite-dimensional description of $\eta_{m,P}$ in Theorem~\ref{thm:SDP} & page~\pageref{lemma:eta_SDP_4Dtensor} \\
Theorem~\ref{thm:certificate} & tightenings: existence of solution, certificate of optimality  & page~\pageref{thm:certificate}\\
Corollary~\ref{thm:aposteriori_bound} & tightenings: a posteriori bound & page~\pageref{thm:aposteriori_bound}\\
Proposition~\ref{thm:apriori_bound} & tightenings: a priori bound, convergence  & page~\pageref{thm:apriori_bound}\\
Proposition~\ref{prop:reproducing} & tightenings: representer theorem & page~\pageref{prop:reproducing}\\
Theorem~\ref{thm:soap_bubble} & adaptive tightening (soap bubble algorithm): convergence & page~\pageref{thm:soap_bubble}\\ \bottomrule
       \end{tabular}
               \end{minipage}
        \caption{Main theoretical results.}
        \label{tab:main-theoretical-results}
    \end{table} 
	In this paper, we thus propose a unified and modular convex optimization framework for kernel machines relying on SOC tightening to encode hard affine SDP constraints on function derivatives. Our framework is suited for a large number of settings and applications owing to the ubiquity of shape constraints. To our best knowledge, this is the first approach with similar properties. Due to the covering-based construction, the method is particularly well-suited to the setting of small to moderate input dimensions, but can face curse-of-dimensionality issues in larger dimensions. 
	
	This article extends the results of \citet{aubin20hard} by (i) considering matrix-valued rather than real-valued kernels, (ii) generalizing the shape requirements studied from real-valued to affine SDP constraints, (iii) proposing an adaptive covering scheme and showing its convergence, and (iv) providing applications complementary to the previous focus on joint quantile regression. The present article also encompasses two prior domain-specific applications with $\X=[0,T]$: convoy trajectory reconstruction \citep{aubin20kernel} and linear quadratic optimal control \citep{aubin2020hard_control}.
	
	\noindent\tb{Structure of the paper.} Our problem is introduced in Section~\ref{sec:problem-formulation}. Section~\ref{sec:constraints} discusses the handling of hard affine SDP shape constraints. The constraints are then embedded into an optimization problem in Section~\ref{sec:optimization}. In Section~\ref{sec:covering_algorithms} we present the soap bubble algorithm which is an adaptive scheme  combining the results of Section~\ref{sec:constraints} and Section~\ref{sec:optimization}. Numerical illustrations are given in Section~\ref{sec:numerical-demos}. Conclusions  are drawn in Section~\ref{sec:conclusions}. Proofs are collected in Section~\ref{sec:proofs} in the Appendix.\\
	
	\noindent\tb{Notation:}  We introduce below the notation $\N$, $\Zp$, $\Rnn$, $\iv{n_1}{n_2}$, $[a,b]$, $[N]$, $\#S$, $A\backslash B$, $\prod_{i\in [I]}S_i$, $S^I$, $\chi_S$, $\max(S)$, $\diam(\Omega)$,  $\mathring{S}$, $\bar{S}$, $\left<\b a, \b b\right>$, $\left\|\b a \right\|_2$, $\S^{d-1}$, $\b a\ge \b b$, $\b a>\b b$, $\b u \otimes \b v$, $\diag(\b v)$, $\b M^\T$, $\left<\b A, \b B\right>_F$, $\left\|\b A\right\|_F$, $\b e_i$, $\b 0_{d_1 \times d_2}$, $\b I_d$, $S_d$, $S_d^+$, $[\b V_1;\ldots;\b V_N]$, $[\b H_1,\ldots,\b H_N]$,  $\Ktens \b A$, $|\b r|$, $\p^{\b r}$, $\p^{\b r,\b q}$, $\C^{s}\left(\X,\R^{d}\right)$, $\C^{s,s}\left(\X\times \X,\R^{d_1\times d_2}\right)$, $O_{1,s}$, $O_{Q,s}$,  $H^+_{\F}(f,\rho)$, $H^-_{\F}(f,\rho)$, $H_{\F}(f,\rho)$,  $\BB_{\F}(c,r)$,  $\BB_{\X}(\b c,r)$, $V^\perp$.  Depending on the reader's background, one may skip these definitions, and return to them if necessary.
	
	\tb{Sets:}
	Let $\N=\{0,1,\ldots\}$, $\Zp = \{1,2,\ldots\}$ and $\Rnn$ denote the set of natural numbers, positive integers and non-negative reals, respectively. We write $\iv{n_1}{n_2}=\{n_1,n_1+1,\ldots,n_2\}$ for the set of integers between $n_1, n_2\in \N$ (not to be confused with the closed interval $[a,b]$) and use the shorthand $[N]:=\iv{1}{N}$ with $N\in \N$, with the convention that $[0]$ is the empty set. The cardinality of a set $S$ is denoted by $\#S$, the difference of two sets $A$ and $B$ by $A\backslash B$. Given sets $(S_i)_{i\in [I]}$, let $\prod_{i\in [I]}S_i$ be their Cartesian product; we use the shorthand $S^I$ if $S = S_1 = \ldots = S_I$. For a set $S$, its indicator function is $\chi_S$: $\chi_S(x) = 0$ if $x \in S$, $\chi_S(x)=\infty$ otherwise. The maximum of a set $S\subset \R$ with finite cardinality is denoted by $\max(S)$. Let the diameter of a set $\Omega$ contained in a normed space $(\F,\|\cdot\|_{\F})$ be denoted by $\diam(\Omega)=\sup_{\b x,\b y\in\Omega}\left\|\b x - \b y\right\|_{\F}$; $\diam(\Omega)<\infty$ if $\Omega$ is bounded. The interior of a set $S\subseteq \F$ is denoted by $\mathring{S}$, its closure by $\bar{S}$.  Throughout the paper $\X\subseteq \R^d$ denotes a set  which is contained in the closure of its interior ($\X \subseteq \bar{\mathring{\X}}$).\footnote{Examples of such sets include for instance all open sets or half intervals $[a,b)$ where $a\in \R$, $b \in \R\cup\{\infty\}$. Counter-examples are sets with isolated points, which are unsuitable for differentiation of functions.}
	
	\tb{Linear algebra:} The inner product of  vectors $\b a, \b b \in \R^d$ is denoted by $\left<\b a, \b b\right>=\sum_{i\in [d]}a_ib_i$; the Euclidean norm is written as $\left\|\b a\right\|_2=\sqrt{\left<\b a, \b a\right>}$. The $d$-dimensional sphere is denoted by $\S^{d-1}=\left\{\b x\in\R^d\,:\, \left\|\b x\right\|_2 = 1\right\}$. For vectors $\b a$ and $\b b\in \R^d$, $\b a\ge \b b$ means that $a_i \ge b_i$ for all $i \in [d]$. Similarly, $\b a >\b b$ is defined as $a_i>b_i$ for all $i\in [d]$.  Let the tensor product of vector $\b u\in \R^{d_1}$ and $\b v\in \R^{d_2}$ be defined as $\b u \otimes \b v = [u_i v_j]_{i\in [d_1],\, j\in [d_2]}\in \R^{d_1 \times d_2}$. The $d\times d$-sized matrix with diagonal $\b v \in \R^d$ is $\diag(\b v)$. The transpose of a matrix $\b M$ is $\b M^\T$. The Frobenius product of the matrices $\b A, \b B \in \R^{d_1\times d_2}$ is $\left<\b A, \b B\right>_F = \sum_{i\in [d_1],\,j\in [d_2]}A_{ij}B_{ij}$; the associated Frobenius norm is $\left\|\b A\right\|_F = \sqrt{\left<\b A, \b A\right>_F }$. The $i^{th}$ canonical basis vector is $\b e_i$; the zero matrix is $\b 0_{d_1\times d_2} \in \R^{d_1\times d_2}$; the identity matrix is denoted by $\b I_d \in \R^{d\times d}$. The set of $d\times d$ symmetric (resp.\ positive semi-definite) matrices is denoted by $S_d$ (resp.\ $S_d^+$). 
	The vertical concatenation of matrices $\b V_1\in \R^{d_1\times d}, \ldots,\b V_N \in \R^{d_N \times d}$ is $[\b V_1;\ldots;\b V_N]\in \R^{\left(\sum_{n\in [N]}d_n\right) \times d}$; similarly the horizontal concatenation of $\b H_1 \in \R^{d \times d_1}, \ldots, \b H_N \in \R^{d \times d_N}$ is $[\b H_1,\ldots,\b H_N] \in \R^{d \times \left(\sum_{n\in[N]}d_n\right)}$. A tensor  $\Ktens\in \R^{d\times d \times d \times d}$ defines an $\R^{d\times d} \rightarrow \R^{d\times d}$ bounded linear operator by acting on a matrix $\b A\in \R^{d\times d}$ as $(\Ktens \b A)_{i,j}:=\sum_{n,m\in[d]}a_{n,m}\Ktens_{i,j,n,m}$ with $i,j\in [d]$.
	
	\tb{Analysis:}	Given a multi-index $\b r \in \N^d$ let $|\b r| = \sum_{j\in [d]} r_j$ be its length, and let the $\b r^{th}$ order partial derivative of a function $f$ be denoted by $\p^{\b r}f (\b x) = \frac{\p^{|\b r|}f(\b x)}{\p x_1^{r_1}\ldots \p x_d^{r_d}}$. Similarly for multi-indices $\b r,\b q \in \N^d$, let $\p^{\b r,\b q}f (\b x,\b y) = \frac{\p^{|\b r|,|\b q|}f(\b x,\b y)}{\p x_1^{r_1}\ldots \p x_d^{r_d}\p y_1^{q_1}\ldots \p y_d^{q_d}}$. For a fixed $s \in \N$, let the set of $\R^{d}$-valued functions on $\X$ with continuous derivatives up to order $s$  be denoted by $\C^s\left(\X,\R^{d}\right)$. 
	The set of $\R^{d_1\times d_2}$-valued functions on $\X\times \X$ for which  $\p^{\b r,\b r}f$ exists and is continuous up to order $|\b r|\le s\in \N$  is denoted by $\C^{s,s}\left(\X\times \X,\R^{d_1\times d_2}\right)$. Let the set of linear differential operators  of order at most $s\in \N$ on real-valued functions be denoted by $O_{1,s} = \left\{D\,:\, D(f)(\b x)= \sum_{j \in J}c_j \p^{\b r_j}f(\b x),\, \#J<\infty,\, |\b r_j| \le s,\, c_j \in \R\,\, (\forall j\in J)\right\}$. The set of linear differential operators  of order at most $s\in \N$ on $\R^Q$-valued functions is $O_{Q,s}=\left\{D: D(\b f)(\b x) = \sum_{q\in [Q]}\beta_q D_q(f_q)(\b x),\, \beta_q \in \R,\, D_q \in O_{1,s}\right\}$.
	
	\tb{Hilbert spaces:}
	Let $\F$ be a Hilbert space. For $f \in \F$ and  
	$\rho \in \R$, let the closed half-spaces and the affine hyperplane associated to the pair $( f,\rho)$ be defined as $H^+_{\F}(f,\rho) =\left\{ g\in \F\,:\, \left<f,g\right>_\F\geq \rho\right\}$,
	$H^-_{\F}(f,\rho)=\left\{g\in \F\,:\, \left< f,g \right>_\F\leq \rho\right\}$, $H_{\F}(f,\rho)=\left\{g\in \F\,:\, \left<f,g\right>_\F= \rho\right\}$. 
	The closed ball in $\F$ with center $c\in \F$ and radius $r>0$ is $\BB_{\F}(c,r)=\left\{f \in \F\,:\, \left\|c - f\right\|_{\F}\le r\right\}$. When $\F = \X \subseteq \R^d$ is equipped with a norm $\left\|\cdot\right\|_\X$, we write $\BB_{\X}(\b c,r)$ for balls. Let $V$ be a closed subspace of a Hilbert space $\F$, the orthogonal complement of $V$ in $\F$ is $V^\perp = \left\{f\in \F\,:\, \left<f, g\right>_{\F}=0\,\,\, \forall g \in V\right\}$.  
	
	\section{Problem Formulation} \label{sec:problem-formulation}
	In this section we formulate our problem after recalling the definition of vector-valued reproducing kernel Hilbert spaces (vRKHS).
	
	\tb{vRKHS:}	A function $\K: \X \times \X \rightarrow \R^{Q \times Q}$ is called a matrix-valued kernel on $\X$ if
	$K(\b x,\b x')=K(\b x',\b x)^\T$ for all $\b x,\b x' \in \X$ and $\sum_{i,j\in[N]} \b v_i^\T K(\b x_i,\b x_j)\b v_j\ge 0$ for all 
	$N\in \Zp$, $\left\{\b x_n\right\}_{n\in [N]} \subset \X$	and $\{\b v_n\}_{n\in [N]} \subset \R^Q$.  For $\b x\in \X$, let $K(\cdot,\b x)$ be the mapping $\b x' \in \X \mapsto K\left(\b x',\b x\right) \in \R^{Q\times Q}$. Let $\FK$ denote the vRKHS associated to the kernel $K$; we use the shorthand $\left\|\cdot\right\|_K :=
	\left\|\cdot\right\|_{\FK}$ and $\left<\cdot,\cdot\right>_K := \left<\cdot,\cdot\right>_{\FK}$ for the norm and the inner product on $\FK$. The Hilbert space $\FK$ consists of $\X \rightarrow \R^Q$ functions for which (i) $K(\cdot,\b x)\b c \in \FK$ for all $\b x\in\X$ and $\b c \in \R^Q$, and (ii) $\left<\b f,K(\cdot,\b x)\b c\right>_{\K}=\left<\b f(\b x),\b c\right>$ for all $\b f\in \FK$, $\b x\in\X$ and $\b c\in \R^Q$. The first property of vRKHSs describes the basic elements of $\FK$, the second one is called the reproducing property; this property can be extended to function derivatives, see Lemma~\ref{lemma:reproducing} below. Constructively, $\FK = \overline{\Sp}\left\{K(\cdot,\b x)\b c\,:\, \b x \in \X, \b c \in \R^Q\right\}$ where $\Sp$ denotes the linear hull of its argument and the bar stands for closure w.r.t.\ $\|\cdot\|_K$.  Given a vRKHS $\FK$, we use the shorthands $H^+_{K}(\b f,\rho)$, $H^-_{\K}(\b f,\rho)$, $H_{\K}(\b f,\rho)$ and $\BB_{\K}(\b c,r)$ for $H^+_{\FK}(\b f,\rho)$, $H^-_{\FK}(\b f,\rho)$, $H_{\FK}(\b f,\rho)$ and $\BB_{\FK}(\b c,r)$. For differential operators $D, \tilde{D} \in O_{Q,s}$ defined as 
	$D(\b f)(\b x) = \sum_{q\in [Q]}\beta_q D_{q} (f_q)(\b x)$ and $\tilde{D}(\b f)(\b x') = \sum_{q\in [Q]}\tilde{\beta}_q \tilde{D}_{q} (f_q)(\b x')$ and for a kernel $\K \in \C^{s,s}\left(\X\times \X,\R^{Q\times Q}\right)$, indicating by a subscript $\b x$ or $\b x'$ the variable w.r.t.\ which the derivation is taken, let
	\begin{align}\label{def:DtopD}
		\tilde{D}^\top D K(\b x', \b x)=\sum_{q,q'\in[Q]} \tilde{\beta}_{q'} \beta_{q}\b e_{q'}^\top \tilde{D}_{q',\b x'} D_{q,\b x}K(\b x', \b x) \b e_{q}\in\R.   
	\end{align}
	
	In this paper we \tb{focus} on optimization problems over vRKHSs with hard affine SDP shape constraints on derivatives. Typical examples can be formulated in the empirical risk minimization framework. Assume that we have access to samples $S=(\b x_n,\b y_n)_{n\in [N]} \in (\X \times \R^Q)^N$ which are supposed to be fixed and $\X \subseteq \R^d$ is assumed to be contained in the closure of its interior. We are given a kernel $K:\X \times \X \rightarrow \R^{Q\times Q}$ with associated vRKHS $\FK$; $K$ is assumed to belong to $\C^{s,s}\left(\X\times \X, \R^{Q\times Q}\right)$ with order $s\in \N$. The function family $\FK$ is used to capture the relation between the random variables $\b x$ and $\b y$ via the samples $S$, with the optional usage of a bias term $\b b \in \R^B$. The goodness of the estimated pair $(\b f, \b b)\in \FK \times \R^B$ is measured through a loss function $L$ (with the samples $S$ kept fixed) which can take into account both function values and function derivatives at the input points $\b x_n$; their number $\#J_n$ is allowed to differ for each $n$. The function values and derivatives of interest at each point $\b x_n$ are represented by the linear differential operators $(D^0_{n,j})_{j \in J_n} \subset O_{Q,s}$. With these notations, an objective function to minimize for given $S$ is 
	\begin{align}
		\LcalS(\b f,\b b) & = L\left(\b b,\left(\left(D^0_{n,j}(\b f)(\b x_n)\right)_{j\in J_n}\right)_{n\in [N]}\right) + R\left(\|\b f\|_K\right)+\chi_\Bsc(\b b), \label{eq:obj-function}
	\end{align}
	where $L:\R^B \times \R^{\sum_{n\in [N]}\#J_n} \rightarrow \R \cup \{\infty\}$, $R: \Rnn \rightarrow \R$ is a regularizer, and $\Bsc \subseteq \R^B$ is a closed convex set. 
	The pair $(\b f,\b b)$ is required to satisfy $I\in \Zp$ \emph{hard affine SDP shape constraints} on given sets $\Kcons_i \subseteq \X$ which are assumed to be compact\footnote{\label{footnote:non-compact}While in general we assume the $\Kcons_i$-s to be compact in $\X$, this requirement can be relaxed to boundedness of their image in $\FK$ under additional assumptions; see remark 'Non-compact $\Kcons$' in  Section~\ref{sec:constr1}.}:
	\begin{align}
		C & =\left\{(\b f,\b b)\,:\, \b 0_{P_i\times P_i} \preccurlyeq  \Dsdp_i  ( \b f - \b  f_{0,i})(\b x)+ \diag(\bm{\Gamma}_i\b b- \b b_{0,i} ), \forall \, \b x \in \Kcons_i, \forall\, i\in [I]\right\}. \label{def_mixded_constraint}\tag{$\Csc$}
	\end{align}
	\noindent In \eqref{def_mixded_constraint} the operator $\Dsdp_i$ aggregates $s^{th}$ order derivatives to the SDP constraints, i.e.\  
	\begin{align}
		\Dsdp_i (\b f)(\b x) = \left[D^i_{p_1,p_2}(\b f)(\b x)\right]_{p_1,p_2 \in [P_i]} \in S_{P_i} 
	\end{align}
	is a symmetric matrix with elements $D^i_{p_1,p_2} \in O_{Q,s}$. For instance, when $Q=1$, $s=2$, $I=1$, $P_1=d$, $\b 0_{d\times d} \preccurlyeq \Dsdp_1:=[\p^{\b e_i + \b e_j}]_{i,j\in [d]}$ requires the estimated function to be convex when its domain is restricted to a (convex) compact set $\Kcons_1$; requiring the function to be convex only in a subset of its arguments can be achieved by setting $P_i < d$. Possible shifts in \eqref{def_mixded_constraint} are expressed by the terms $\b b_{0,i}\in \R^{P_i}$ and $\b f_{0,i} \in \FK$. The matrices $\bm\Gamma_i\in \R^{P_i\times B}$ allow linear interaction between the bias coordinates. The bias $\b b \in \R^B$ can be both variable (e.g.\ $f_q + b_q$) and constraint-related (such as $b_1 \le f(x)$, $b_2 \le f'(x)$); hence $B$ can differ from $Q$. The geometric intuition of the $(\b f,\b b)$ pair follows that of the classical support vector machines where $\b f$ controls the direction, whereas $\b b$  determines the bias of the optimal hyperplane. Thus our \tb{problem of interest} combining objective functions more general\footnote{Throughout the manuscript, objective functions are denoted by $\Lcal$; specifically, when they depend on samples $S$, we write $\Lcal_S$.} than \eqref{eq:obj-function} and the hard affine SDP constraints \eqref{def_mixded_constraint} can be written as
	\begin{align}
		\left(\bar{\b f},\bar{\b b}\right) &\in \argmin_{\substack{\b f\,\in\,\FK,\, \b b\,\in\,\R^B,\\\,(\b f, \b b)\,\in \, C
		}} \Lcal(\b f,\b b). \label{opt-cons}\tag{$\Psc$}
	\end{align}
	
	\noindent\tb{Remarks}:
	\begin{itemize}[labelindent=0em,leftmargin=1em,topsep=0cm,partopsep=0cm,parsep=0cm,itemsep=2mm]
		\item Rewriting SDP constraints as \eqref{def_mixded_constraint}: Using \eqref{def_mixded_constraint} one can incorporate affine SDP constraints of the form
		\begin{align*}
			\{(\b f,\b b)\,|\, \b 0_{P\times P} \preccurlyeq  \tilde{\Dsdp} ( \b f - \b  f_{0})(\b x)+ \b M, \forall \, \b x \in \Kcons\},
		\end{align*}
		where $\b M\in S_{P}$. Indeed, by setting $\bm{\Gamma}=\b 0$ and $\b b_{0}=-\text{eig}(\b M)$ to be the negative of the eigenvalues of $\b M$, and using the spectral theorem
		\begin{align*}
			\b 0_{P\times P} &\preccurlyeq  \tilde{\Dsdp} ( \b f - \b  f_{0})(\b x)+ \b M = \tilde{\Dsdp} ( \b f - \b  f_{0})(\b x)+ \b U^\T \text{eig}(\b M) \b U\\
			& = \tilde{\Dsdp} ( \b f - \b  f_{0})(\b x)+ \b U^\T \diag(\bm{\Gamma}\b b- \b b_{0}) \b U \Leftrightarrow\\
			\b 0_{P\times P} &\preccurlyeq  \underbrace{\b U \tilde{\Dsdp} ( \b f - \b  f_{0})(\b x) \b U^\T}_{=:\Dsdp  ( \b f - \b  f_{0})(\b x)} + \diag(\bm{\Gamma}\b b- \b b_{0}).
		\end{align*}
		\item Further specific cases of \eqref{def_mixded_constraint}: Examples of \eqref{def_mixded_constraint} beyond the more classical cases of non-negativity, monotonicity or convexity include for instance $n$-monotonicity, monotonicity w.r.t.\ various partial orderings, $n$-alternating monotonicity, or supermodularity \citep[Section~C]{aubin20hard}.
		\item Cases not covered in \eqref{def_mixded_constraint}: Examples \emph{not} covered directly by \eqref{def_mixded_constraint} include for instance the Slutzky shape constraint and the quasi-convexity formula which are alternative assumptions on demand or utility functions. These non-affine requirements write as $\frac{\p f(x_1,x_2)}{\p x_1} + f(x_1,x_2)\frac{\p f(x_1,x_2)}{\p x_2} \le 0,\, \forall x_1, x_2$ and $f(\alpha \b x + (1-\alpha)\b x')\le \max\left(f(\b x),f\left(\b x'\right)\right)\,\forall \alpha \in [0,1], \b x, \b x'$, respectively.
		\item Equality constraints in \eqref{opt-cons}:  In this article, our primary focus is on convex \emph{inequality} constraints, handled through an interior approximation. When considering equality constraints, since convex equalities are affine, they would effectively restrict the hypothesis class to a closed affine subspace of $\FK$. A closed subspace of a vRKHS is also a vRKHS, possibly with a different kernel. Finitely many equality constraints can be handled in our framework without difficulty and without changing kernel; see our example on shape optimization in Section~\ref{sec:app:catenary}. On the other hand, an infinite number of equality requirements may require to determine explicitly the kernel of the subspace, which can be difficult. Nevertheless this is possible for instance in the case of a linear control problem (see Section~\ref{sec:app:safety-crit-control} and footnote~\ref{footnote:control:FK}). 
	\end{itemize}\vspace{0.1cm}
	
	\noindent\tb{Examples}: It is instructive to consider a few examples for the problem family \eqref{opt-cons}.
	\begin{itemize}[labelindent=0em,leftmargin=1em,topsep=0cm,partopsep=0cm,parsep=0cm,itemsep=2mm]
		\item \underline{Joint quantile regression} (JQR; as for instance defined by \citealt{sangnier16joint}): Assume that we are given samples $S=(\b x_n,y_n)_{n\in [N]}$ from the random variable $(X, Y)$ with values in $\X \times  \R \subseteq \R^{d+1}$, as well as $Q$ levels $0<\tau_1<\ldots<\tau_Q<1$. Our goal is to estimate \emph{jointly} the $\tau_q$-quantiles of the conditional distributions $\P(Y|X=\b x)$ for $q\in [Q]$. In the JQR problem, the estimated $\tau_q$-quantile functions $(f_q + b_q)_{q\in [Q]}$ (modulo the biases $b_q\in\R$) belong to a real-valued RKHS $\F_k$ associated to a kernel $k:\X \times \X \rightarrow \R$, and they have to satisfy jointly a monotonically increasing property w.r.t.\ the quantile level $\tau$. It is natural to require this non-crossing property on the smallest rectangle containing the input points $(\b x_n)_{n\in [N]}$, in other words on $\Kcons=\prod_{j\in [d]} \left[\min\left\{ (\b x_n)_j \right\}_{n\in [N]}, \max\left\{ (\b x_n)_j \right\}_{n\in [N]}\right]$. Hence, the optimization problem in JQR takes the form 
		\begin{mini*}|s|
			{\substack{\b f \in (\F_k)^Q,\\ \b  b \in \R^Q}}{\LcalS\left(\b f,\b b\right) &:=\frac{1}{N}\sum_{q\in [Q]}\sum_{n \in [N]} \ell_{\tau_q}\left(y_n-[f_q(\b x_n)+b_q]\right) +\lambda_{\b b} \|\b b\|^2_2 + \lambda_f \sum_{q \in [Q]}\|f_q\|^2_{\F_k}}
			{}
			{}
			\addConstraint{f_{q}(\b x)+b_{q}}{\le f_{q+1}(\b x)+b_{q+1},\, \forall q\in [Q-1],\, \forall  \b x\in \Kcons},
		\end{mini*}
		where $\lambda_{\b b} > 0$, $\lambda_f > 0$,\footnote{\citet{sangnier16joint} used the same loss function but a soft non-crossing inducing regularizer inspired by matrix-valued kernels, and also set $\lambda_{\b b} = 0$.
		} and the so-called ``pinball loss'' is defined as $\ell_{\tau} (e) = \max(\tau e, (\tau - 1)e)$ with $\tau \in  (0, 1)$. This problem can be obtained as a specific case of \eqref{opt-cons} by choosing $B=Q$,  $s=0$, $I=Q-1$, $P_i = 1$, $D_i \b f = f_{i+1}-f_i$, $\bm \Gamma_i \b b =b_{i+1}-b_i$ ($\forall i\in [I]$), $K\left(\b x,\b x'\right) = k\left(\b x,\b x'\right) \b I_Q$, $\b f_{0,i} = \b 0$, $\b b_{0} = \b 0$, $\Bsc = \R^B$. Further details and numerical illustration on the JQR problem are provided by \citet{aubin20hard}.
		\item \underline{Convoy trajectory reconstruction} (CTR): Here, the goal is to estimate vehicle trajectories based on noisy observations. This is a typical situation with GPS measurements, where the imprecision can be compensated through side information, not using only the position of every vehicle but also that of its neighbors. Assume that there are $Q$ vehicles forming a convoy (i.e.\ they do not overtake and keep a minimum inter-vehicular distance between each other) with speed limits on the vehicles. For each vehicle $q$ we have $N_q$ noisy position measurements $(y_{q,n})_{n\in [N_q]}\subset\, \R$, each corresponding to vehicle-specific time points $(x_{q,n})_{n\in [N_q]}\subset\X:=[0,T]$; this results in the samples $S=(x_{q,n},y_{q,n})_{q\in[Q],n\in [N_q]}$. Without loss of generality, let the vehicles be ordered in the lane according to their indices ($q=1$ is the first, $q=Q$ is the last one). Let $d_{\text{min}}\ge 0$ be the minimum inter-vehicular distance, and $v_{\text{min}}$ be the minimal speed to keep.\footnote{The requirement $v_{\text{min}}=0$ means that the vehicles go forward. A maximum speed constraint can be imposed similarly.} By modelling the location of the $q^{th}$ vehicle at time $x$ as $b_q+f_q(x)$ where $b_q\in \R$, $f_q \in \Fk$ and $k: \X \times \X \rightarrow \R$ is a real-valued kernel,  the CTR  task can be formulated as 
		\begin{mini*}|s|
			{\substack{\b f =[f_q]_{q\in [Q]} \in (\F_k)^Q,\\ \b b \in \R^Q}}{\hspace{-0.2cm}\LcalS\left(\b f,\b b\right) &:= \frac{1}{Q}\sum_{q=1}^{Q}\left[\left(\frac{1}{N_q}\sum_{n=1}^{N_q} |y_{q,n}- \left(b_q+f_q(x_{q,n})\right)|^2 \right) + \lambda\|f_q\|^2_{\F_k}\right]}
			{}
			{}
			\addConstraint{d_{\text{min}}+b_{q+1}+f_{q+1}(x)}{\le b_q+f_q(x), \quad \forall q  \in [Q-1],\,  \forall x \in \X}
			\addConstraint{v_{\text{min}}}{\le f_{q}'(x) \quad \forall q \in [Q],\, \forall x \in\X.}
		\end{mini*}
		This problem can be obtained as a specific case of \eqref{opt-cons} by choosing $B=Q$, $s=1$, $I=2Q-1$, $P_i=1$ $(i\in [I])$, $K\left(\b x,\b x'\right) = k\left(\b x,\b x'\right) \b I_Q$, 
		$\b D_i(\b f) = f_i -f_{i+1}$ ($i\in [Q-1]$), $\bm \Gamma_i\b b = b_i- b_{i+1}$ ($i\in [Q-1]$), $b_{0,i} = d_{\text{min}}$ ($i\in [Q-1]$), $\b  D_i(\b f) = f'_{i-(Q-1)}$ ($i\in \{Q,Q+1,\ldots,2Q-1\}$), $\bm \Gamma_i =\b 0_{1,Q}$ ($i\in \{Q,Q+1,\ldots,2Q-1\}$), $b_{0,i} = v_{\text{min}}$ ($i\in \{Q,Q+1,\ldots,2Q-1\}$).
		This  application was investigated by \citet{aubin20kernel}.
		\item \underline{Further examples}: In Section~\ref{sec:numerical-demos} we consider four complementary problems with numerical illustration. The examples cover a shape optimization task (minimizing the deformation of a catenary under its weight, with a stand underneath), safety-critical control (piloting an underwater vehicle while avoiding obstacles), robotics (estimation of robotic  arm position), and econometrics (learning production functions).
	\end{itemize}
	
	\section{Constraints} \label{sec:constraints}
	In this section we propose two approaches to handle a single hard affine SDP shape constraint ($I=1$) appearing in \eqref{def_mixded_constraint} over a (non-finite) compact\textsuperscript{\ref{footnote:non-compact}} set $\Kcons$
	\begin{align}
		C_{P} & =\left\{(\b f,\b b)\,:\, \b 0_{P\times P} \preccurlyeq  \Dsdp  ( \b f - \b  f_{0})(\b x)+ \diag(\bm{\Gamma}\b b- \b b_{0}), \forall \, \b x \in \Kcons\right\}. \label{def:mixed-constraint:P>=1}\tag{$\Csc_{P}$}
	\end{align}
	Multiple shape constraints ($I>1$) can be addressed by stacking the presented results.
	
	There are two main challenges to tackle: 
	(i) $C_{P}$  cannot be directly implemented since $\Kcons$ is non-finite, (ii) deriving a representer theorem is also problematic as the number of evaluations of $\b f$ is non-finite. To address these challenges, we propose two complementary approaches (depending on the value of $P$) to tighten $C_{P}$ through finite coverings\footnote{By considering finite coverings, we make the problem amenable to optimization. This computational aspect is elaborated in \Cref{sec:optimization}.}:
	\begin{enumerate}[labelindent=0em,leftmargin=1em,topsep=0cm,partopsep=0cm,parsep=0cm,itemsep=2mm]
		\item \tb{Compact covering in $\FK$ with balls and half-spaces, $P=1$}: This first approach focuses on the real-valued case of $P=1$, i.e.
		\begin{align}
			C_{1} & =\left\{(\b f,\b b)\,:\, 0 \le  D  ( \b f - \b  f_{0})(\b x)+ \bm{\Gamma}\b b-  b_{0}, \forall \, \b x \in \Kcons\right\}. \label{def:mixed-constraint:P=1}\tag{$\Csc_{1}$}
		\end{align}
		We show that \eqref{def:mixed-constraint:P=1} can be written as the inclusion in the vRKHS $\FK$ of a compact set in a half-space. We then tighten this inclusion by taking a finite covering of the compact set through balls and half-spaces in $\FK$, and present a general theorem which enables one to translate such inclusions into convex equations.
		\item \tb{Upper bounding the modulus of continuity, $P\ge 1$}: Our second approach tackles the general case of $P\ge 1$, i.e.\ \eqref{def:mixed-constraint:P>=1}, through an upper bound on the modulus of continuity of $\b D (\b f-\b f_0)$ defined on a finite covering of $\Kcons$. The upper bound has the form $\eta_{m,P}\|\b f-\b f_0\|_K$ which leads to second-order cone (SOC) constraints instead of affine inequalities. We will see that the two methods coincide when $P=1$ and when only ball coverings are considered.\vspace{0.1cm}
	\end{enumerate}
	
	We start with a lemma stating the reproducing property for derivatives of matrix-valued kernels.
	\begin{lemma}[Reproducing property for derivatives with matrix-valued kernels]\label{lemma:reproducing}
		Let $s \in \N$, $\X \subseteq \R^d$ be a set which is contained in the closure of its interior, $\K$ be a matrix-valued kernel such that $\K \in \C^{s,s}\left(\X\times \X,\R^{Q\times Q}\right)$, and $D \in O_{Q,s}$ be a differential operator such that
		$D(\b f)(\b x) = \sum_{q\in [Q]}\beta_q D_q f_q(\b x)$. Let 
		\begin{align}
			D K(\b x',\b x) &:=\sum_{q\in [Q]} \beta_q [D_{q,\b x} K(\b x',\b x)] \b e_q \in \R^Q, \label{eq:DK-def}
		\end{align}
		where $D_{q,\b x} K(\b x',\b x) := D_q[\b x''  \mapsto K(\b x',\b x'')](\b x) \in \R^{Q\times Q}$ and $\b e_q \in \R^Q$ is the  $q^{th}$ canonical basis vector. Then
		\begin{align}
			\b f&\in \C^s \left(\X, \R^Q\right),&
			DK(\cdot,\b x)&\in\FK,&
			D (\b f)(\b x) &= \left<\b f, DK(\cdot,\b x)\right>_\K \label{eq:repr-prop}
		\end{align}
		for  all $\b f\in\FK$  and  $\b x \in \X$.
	\end{lemma}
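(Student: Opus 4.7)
The plan is to decompose $D$ by linearity into a finite sum of pure partial derivatives acting on individual coordinate functions, and to prove an atomic version of the three claims for each such piece. Concretely, writing $D_q = \sum_{j \in J_q} c_{q,j} \partial^{\b r_{q,j}}$ with $|\b r_{q,j}| \le s$, it suffices to establish: for every multi-index $\b r$ with $|\b r| \le s$, every $q \in [Q]$ and every $\b x \in \X$, the element $g_{\b r,q,\b x} := \partial^{\b 0,\b r}_{(\b x',\b y)} K(\b x', \b y) \b e_q \big|_{\b y = \b x}$ belongs to $\FK$ and satisfies $\partial^{\b r} f_q(\b x) = \langle \b f, g_{\b r,q,\b x} \rangle_K$ for every $\b f \in \FK$. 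The full statement for $D$ then follows by taking appropriate linear combinations in \eqref{eq:DK-def}.

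I would prove the atomic claim by induction on $|\b r|$. The base case $|\b r| = 0$ is the defining reproducing property of $\FK$, namely $g_{\b 0,q,\b x} = K(\cdot, \b x) \b e_q$. For the inductive step from $\b r$ to $\b r + \b e_k$, I would consider the difference quotient $\Delta_h := h^{-1}\bigl(g_{\b r,q,\b x + h \b e_k} - g_{\b r,q,\b x}\bigr)$ in $\FK$. Using the inductive reproducing identity applied to $\Delta_h$ itself, the squared $\FK$-norm $\|\Delta_h - \Delta_{h'}\|_K^2$ expands into a second-order finite difference of $\b e_q^\top \partial^{\b r,\b r} K(\b x_1, \b x_2) \b e_q$ evaluated at four perturbations of $(\b x, \b x)$. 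Since $K \in \C^{s,s}(\X\times\X, \R^{Q\times Q})$, this entry is jointly continuous, so these second-order differences tend to zero as $h, h' \to 0$; hence $\{\Delta_h\}$ is Cauchy in $\FK$. Its limit $\ell$ is forced to coincide with $g_{\b r + \b e_k, q, \b x}$, because pointwise evaluation on $\FK$ is continuous and $\Delta_h(\b x')$ converges pointwise to $g_{\b r + \b e_k, q, \b x}(\b x')$ for every $\b x' \in \X$ by the $\C^{s,s}$ assumption. Applying the inductive reproducing identity to $\Delta_h$ and passing to the $\FK$-limit against a fixed $\b f \in \FK$ via Cauchy-Schwarz yields $\partial^{\b r + \b e_k} f_q(\b x) = \langle \b f, g_{\b r + \b e_k, q, \b x}\rangle_K$, closing the induction.

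For the smoothness claim $\b f \in \C^s(\X, \R^Q)$, the representation just derived gives $\partial^{\b r} f_q(\b x) = \langle \b f, g_{\b r,q,\b x}\rangle_K$, and the Cauchy argument above additionally shows that $\b x \mapsto g_{\b r,q,\b x}$ is continuous from $\X$ into $\FK$; Cauchy-Schwarz then transfers this continuity to $\partial^{\b r} f_q$ for every $|\b r| \le s$ and $q\in [Q]$. The main obstacle I anticipate is the bookkeeping in the second-order finite-difference step, namely correctly identifying the four kernel evaluations that appear when expanding $\|\Delta_h - \Delta_{h'}\|_K^2$ via repeated applications of the reproducing property, and verifying that the $\C^{s,s}$ hypothesis on $K$ is exactly what is needed for these differences to converge; the matrix-valued setting adds notational weight but no new analytic content beyond the scalar derivative reproducing property.
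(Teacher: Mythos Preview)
Your strategy matches the paper's: reduce by linearity to atomic operators $\partial^{\b r}$ acting on a single coordinate, induct on $|\b r|$, and show the difference quotients $\Delta_h$ form a Cauchy sequence in $\FK$ whose limit represents $\partial^{\b r+\b e_k} f_q(\b x)$. Two corrections are needed.

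First, the Cauchy estimate: mere continuity of $\b e_q^\top \partial^{\b r,\b r} K \b e_q$ does not make the second-order finite differences vanish, because the expansion of $\|\Delta_h - \Delta_{h'}\|_K^2$ carries a factor $1/(hh')$. What you actually need is existence and continuity of the next-order mixed partial $\partial^{\b r',\b r'} K$ with $\b r' = \b r + \b e_k$ (which the $\C^{s,s}$ hypothesis does supply). The paper makes this precise via the integral representation
\[
\langle \Delta_{h_1}, \Delta_{h_2}\rangle_K = \int_0^1\!\int_0^1 \b e_q^\top \partial_1^{\b r'}\partial_2^{\b r'} K(\b x + \alpha h_1 \b e_k,\, \b x + \beta h_2 \b e_k)\,\b e_q \,\d\alpha\,\d\beta,
\]
and then bounds $\|\Delta_{h_1} - \Delta_{h_2}\|_K$ by the modulus of continuity of $\partial_1^{\b r'}\partial_2^{\b r'}K$. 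Second, and more substantively, you never use the hypothesis $\X \subseteq \bar{\mathring{\X}}$: forming $\Delta_h$ requires $\b x + h\b e_k \in \X$ for small $h$, which can fail at boundary points of $\X$. The paper resolves this by first running the entire inductive step on $\mathring{\X}$ (where difference quotients are always available), and then for $\b x \in \X \setminus \mathring{\X}$ choosing $\b x_n' \in \mathring{\X}$ with $\b x_n' \to \b x$, showing that $(g_{\b r',q,\b x_n'})_n$ is Cauchy in $\FK$ via the same modulus-of-continuity bound, and defining $g_{\b r',q,\b x}$ together with the reproducing identity at $\b x$ as the limit.
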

	
	\noindent\tb{Remark}: Specifically for $s=0$, one has that $D(\b f)(\b x)=\sum_{q\in[Q]}\beta_q f_q(\b x)=\bm{\beta}^\top \b f(\b x)$ and \eqref{eq:repr-prop} reduces to the classical reproducing property in vRKHSs, i.e.\ $\bm{\beta}^\top \b f(\b x) = \left<\b f, K(\cdot,\b x)\bm{\beta}\right>_{\K}$.
	The reproducing property for kernel derivatives has been studied over open sets $\X$, for real-valued \citep{saitoh16theory} and matrix-valued \citep{micheli14matrix} kernels, and over compact sets which are the closure of their interior for real-valued kernels \citep{zhou08derivative}. In Lemma~\ref{lemma:reproducing} we generalize these results to matrix-valued kernels and to sets $\X$ which are contained in the closure of their interior.
	
	\subsection{Constraints by Compact Covering in $\FK$}\label{sec:constr1}
	In our \tb{first approach}, applying Lemma~\ref{lemma:reproducing}, we rephrase constraint \eqref{def:mixed-constraint:P=1} as an inclusion of sets using the nonlinear embedding $\bm{\Phi}_D:\b x \in \Kcons \mapsto DK(\cdot,\b x)\in\FK$
	\begin{align}
		(\b f, \b b) \in C_{1} &\Leftrightarrow  b_{0}-\bm{\Gamma}\b b \le  D ( \b f - \b  f_{0})(\b x)=\left<\b f - \b f_0, D K(\cdot,\b x)\right>_{\K}\, \forall \, \b x \in \Kcons \nonumber\\
		&\Leftrightarrow \bm{\Phi}_D(\Kcons):=\left\{D K(\cdot,\b x)\,:\, \b x \in \Kcons\right\} \subseteq H^{+}_{\K}(\b f - \b f_0, b_{0}-\bm{\Gamma}\b b). \label{eq:inclusion}
	\end{align}
	The set $\bm{\Phi}_D(\Kcons)$ is compact in $\FK$ since $\Kcons$ is compact in $\X$ and $\bm{\Phi}_D$ is continuous.	However it is intractable to directly ensure the inclusion described in \eqref{eq:inclusion} whenever $\Kcons$ is not finite. We thus consider an approximation with a ``simpler'' set $\bar{\Omega}$ containing $\bm{\Phi}_D(\Kcons)$,\footnote{\label{footnote:tr-invariant_unitBall}A simple example for translation-invariant kernels $K(\b x,\b y)=K_0(\b x-\b y)$ is the (coarse) approximation $\bm{\Phi}_D(\Kcons) \subseteq \BB_\K\left(\b 0,\sqrt{D^\top D K_0(\b 0)}\right)$. Indeed,   $\left\|DK(\cdot,\b x)\right\|_K = \sqrt{\left<DK(\cdot,\b x), DK(\cdot,\b x)\right>_K} = \sqrt{D^\top D K(\b x,\b x)} = \sqrt{D^\top DK_0(\b 0)}$ for any $\b x\in \X$ by Lemma~\ref{lemma:reproducing}.} and require the inclusion
	\begin{align}
		\bm{\Phi}_D(\Kcons) \subseteq\bar{\Omega} \subseteq H^{+}_{\K}(\b f - \b f_0,b_{0}-\bm{\Gamma}\b b) \label{eq:tightened-inclusion}
	\end{align}
	which implies \eqref{eq:inclusion}.
	Since $\bm{\Phi}_D(\Kcons)$ is compact, drawing upon compact coverings, we assume that 
	\begin{align}
		\bar{\Omega} = \cup_{m\in [M]} \bar{\Omega}_m,    \label{eq:Omega}
	\end{align}
	where each $\bar{\Omega}_m$ is the closure of a non-empty finite intersection ($J_{B,m}, J_{H,m} \in \N$) of non-trivial ($r_{m,j}>0$, $\b v_{m,j} \neq \b 0$) open balls and open half-spaces
	\begin{align}
		\Omega_m &= \left(\bigcap_{j\in [J_{B,m}]} \dBB_{\K}\left(\b c_{m,j},r_{m,j}\right)\right) \cap \left(\bigcap_{j \in [J_{H,m}]} \dH^{-}_{\K} \left(\b v_{m,j},\rho_{m,j}\right)\right).
		\label{eq:Omega_m}
	\end{align}
	\begin{figure}
		\centering
		\begin{minipage}[c]{.48\linewidth}
			\begin{center}			
				\subfloat[][]{\resizebox{\linewidth}{!}{\label{Diagram_ball_covering}\includegraphics[page={2},keepaspectratio, width=\textwidth]{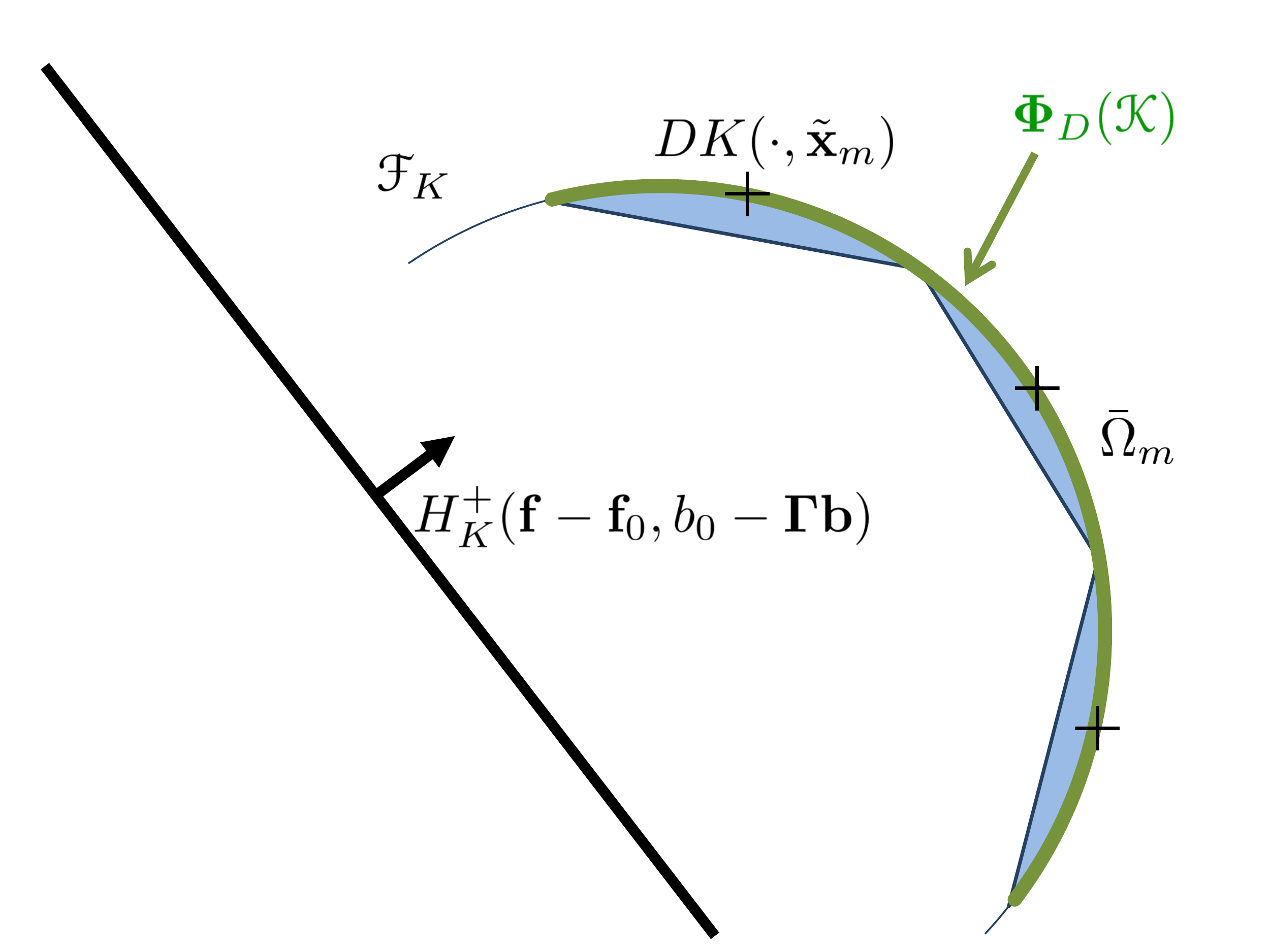}}}
			\end{center}
		\end{minipage}
		\begin{minipage}[c]{.48\linewidth}
			\begin{center}			
				\subfloat[][]{\resizebox{\linewidth}{!}{\label{Diagram_halfspace_covering}\includegraphics[page={1},keepaspectratio, width=\textwidth]{FIG/Soap_bubble.pdf}}}
			\end{center}
		\end{minipage}
		
		\caption{Two examples of coverings in $\FK$ of $\bm{\Phi}_D(\Kcons)$ by a set $\bar{\Omega}= \cup_{m\in [M]} \bar{\Omega}_m$ contained in the halfspace $H^{+}_{\K}(\b f - \b f_0,b_{0}-\bm{\Gamma}\b b)$. \protect\subref{Diagram_ball_covering}: covering through balls $\Omega_m=\mathring{\BB}_{\K}\left( DK(\cdot, \tilde{\b x}_{m}), \eta_m\right)$. \protect\subref{Diagram_halfspace_covering}: covering through a ball intersected with halfspaces ($J_{B,m}=J_{H,m}=1$).}
		\label{Diagram_covering}
	\end{figure}
	\noindent\tb{Remarks}: 
	\begin{itemize}[labelindent=0em,leftmargin=1em,topsep=0cm,partopsep=0cm,parsep=0cm,itemsep=2mm]
		\item Form of \eqref{eq:Omega_m}:
		The motivation for considering $\bar{\Omega}$ and $\bar{\Omega}_m$ of the form \eqref{eq:Omega} and \eqref{eq:Omega_m} is several-fold. Having a finite description enables one to derive a representer theorem. However, only a few sets (mainly points, balls and half-spaces) enjoy explicit convex separation formulas.\footnote{\label{footnote:convex-separation}Since $\Omega_m$ is convex, the inclusion $\Omega_m \subseteq H^{+}_{\K}(\b f - \b f_0,b_{0}-\bm{\Gamma}\b b)$ in \eqref{eq:tightened-inclusion} is equivalent to $\Omega_m \cap \dH^{-}_{\K}(\b f - \b f_0,b_{0}-\bm{\Gamma}\b b)=\emptyset$ which can be interpreted as a convex separation.} Focusing on \emph{points} leads to a discretization of $\bm{\Phi}_D(\Kcons) $ and greedy strategies (such as the Frank-Wolfe algorithm), but without guarantees outside of the points considered. A finite union of \emph{balls} can approximate any compact set in the Hausdorff metric, but balls result in enforcing ``buffers'' in every direction of $\FK$. A finite intersection of \emph{half-spaces} can approximate any convex set,\footnote{To motivate the use of half-spaces in \eqref{eq:Omega_m}: notice that since the half-space on the r.h.s.\ of \eqref{eq:inclusion} is closed and convex, \eqref{eq:inclusion} is equivalent to the fact that the closed convex hull $\cob (\Phi_{D}(\Kcons))$ is a subset of $H^{+}_{\K}(\b f - \b f_0,b_{0}-\bm \Gamma\b b)$. Using the support function characterization of closed convex sets, we have that $\cob (\Phi_{D}(\Kcons))=\bigcap_{\b g\in \FK}H^-_K(\b g,\, \sigma_{\Kcons}(\b g))$ where  $\sigma_{\Kcons}(\b g):=\sup_{x\in \Kcons} D (\b g)(x)$ has to be computed. Considering any finite collection ($\tilde{\F}_K$) of $g\in\FK$, the resulting intersection $\bigcap_{\b g\in \tilde{\F}_K}H^-_K(\b g,\, \sigma_{\Kcons}(\b g))$ has finite description and contains $\cob (\Phi_{D}(\Kcons))$.} but this finite intersection is always unbounded for infinite-dimensional $\FK$ resulting in a poor approximation of compact sets. Motivated by obtaining guarantees, we thus consider combinations of balls and half-spaces as in \eqref{eq:Omega}-\eqref{eq:Omega_m}.
		\item Non-compact $\Kcons$: 
		Coverings of the form \eqref{eq:Omega} and \eqref{eq:Omega_m} exist for any bounded $\bm{\Phi}_D(\Kcons)$. In particular, if $D^\top DK(\cdot,\cdot)$, defined in \eqref{def:DtopD}, is bounded over $\X\times\X$ (in other words, $\sup_{\b x,\b x' \in \X}|D^\top DK\left(\b x,\b x'\right)| <\infty$), then for any set $\Kcons\subseteq\X$, $\bm{\Phi}_D(\Kcons)\subset \FK$ is bounded as well. Consequently the proposed method can be applied to non-compact $\Kcons$ provided that the derivatives of the chosen kernel are bounded.
		\vspace{0.1cm}
	\end{itemize}
	
	Theorem \ref{thm:inclusion} below provides an explicit convex formula to be satisfied that is equivalent to the tightened inclusion \eqref{eq:tightened-inclusion} under the choice \eqref{eq:Omega}-\eqref{eq:Omega_m}. Since $\bar{\Omega}_m$ is the closure of the non-empty open set $\Omega_m$, and the half-spaces $ H^{+}_{\K}$ are closed,
	\begin{align}
		\bar{\Omega}  = \cup_{m\in [M]} \bar{\Omega}_m \subseteq H^{+}_{\K}(\b f - \b f_0,b_{0}-\bm{\Gamma}\b b)\, &\Leftrightarrow\, \bar{\Omega}_m \subseteq H^{+}_{\K}(\b f - \b f_0,b_{0}-\bm{\Gamma}\b b)\, \forall\,m\in [M]\nonumber\\
		&\Leftrightarrow\, \Omega_m \subseteq H^{+}_{\K}(\b f - \b f_0,b_{0}-\bm{\Gamma}\b b)\, \forall\,m\in [M]. \label{eq:inclusion:m-fixed}
	\end{align}
	Hence we can consider separately the inclusion of each $\Omega_m$, formulate the theorem for $M=1$ and drop the index $m$.
	
	\begin{theorem}[Inclusion formula for balls and half-spaces]\label{thm:inclusion} Let $\FK$ be the vRKHS associated to an $\R^{Q\times Q}$-valued kernel $\K$.
		Then the following statements are equivalent:
		\begin{enumerate}[labelindent=0em,leftmargin=1em,topsep=0.2cm,partopsep=0cm,parsep=0cm,itemsep=2mm]
			\item $\emptyset \neq\Omega :=\left(\bigcap_{j\in [J_B]} \dBB_{\K}(\b c_{j},r_{j})\right) \cap \left(\bigcap_{j \in [J_H]} \dH^{-}_{\K} (\b v_{j},\rho_{j})\right) \subseteq H^{+}_{\K}\left(\b f-\b f_0,b_0-\bm{\Gamma} \b b\right)\neq \FK$.
			\item There exists $J_B$ functions $(\b g_j)_{j\in [J_B]} \subset \Sp\left(\b f-\b f_0, \left\{\b c_j\right\}_{j\in [J_B]},\left\{\b v_j\right\}_{j\in [J_H]}\right)$ and $J_H$ non-negative coefficients $(\xi_j)_{j\in [J_H]} \in \Rnn^{J_H}$ such that
			\begin{equation} \label{ineq-eq_thm_DM} 
				\begin{split}
					-  b_0+\bm{\Gamma} \b b + \sum_{j \in [J_B]} \left<\b g_j,\b c_j\right>_{\K} - \sum_{j\in [J_H]} \xi_j \rho_j - \sum_{j\in [J_B]} r_j \left\|\b g_j\right\|_{\K} &\ge 0, \\
					-(\b f-\b f_0) + \sum_{j\in [J_B]}\b g_j - \sum_{j \in [J_H]}\xi_j \b v_j &= \b 0.
				\end{split}
			\end{equation}
		\end{enumerate}
	\end{theorem}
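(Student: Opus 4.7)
The plan is to establish the equivalence by convex duality in $\FK$. The direction $(2)\Rightarrow(1)$ is a direct Cauchy--Schwarz estimate; the non-trivial direction $(1)\Rightarrow(2)$ reduces to an inf-convolution formula for support functions, which I handle cleanly via a finite-dimensional subspace.

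For $(2)\Rightarrow(1)$, take any $\b h \in \Omega$ and substitute the second identity of \eqref{ineq-eq_thm_DM} into $\langle \b f - \b f_0, \b h\rangle_\K$. Ball membership yields $\langle \b g_j, \b h\rangle_\K = \langle \b g_j, \b c_j\rangle_\K + \langle \b g_j, \b h - \b c_j\rangle_\K \geq \langle \b g_j, \b c_j\rangle_\K - r_j \|\b g_j\|_\K$ by Cauchy--Schwarz and $\|\b h - \b c_j\|_\K < r_j$, and half-space membership gives $\langle \b v_j, \b h\rangle_\K \leq \rho_j$. Summing with $\xi_j \geq 0$ and invoking the first inequality of \eqref{ineq-eq_thm_DM} yields $\langle \b f - \b f_0, \b h\rangle_\K \geq b_0 - \bm{\Gamma}\b b$, so $\b h \in H^+_\K(\b f - \b f_0, b_0 - \bm{\Gamma}\b b)$.

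For $(1)\Rightarrow(2)$, I rephrase the inclusion as $\inf_{\b h \in \Omega} \langle \b f - \b f_0, \b h\rangle_\K \geq b_0 - \bm{\Gamma}\b b$ and restrict to the finite-dimensional subspace $V := \Sp(\b f-\b f_0, \{\b c_j\}_{j\in [J_B]}, \{\b v_j\}_{j\in [J_H]})$. Orthogonally decomposing a feasible $\b h = \b h^{\parallel} + \b h^{\perp}$, the objective and the half-space constraints depend only on $\b h^{\parallel}$, while the ball constraints tighten under projection because $\|\b h - \b c_j\|_\K^2 = \|\b h^{\parallel} - \b c_j\|_\K^2 + \|\b h^{\perp}\|_\K^2$; hence the primal infimum over $\Omega$ coincides with that of a finite-dimensional second-order cone program on $V$ with identical constraints. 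The hypothesis $\Omega \neq \emptyset$ yields a strict Slater point, so standard convex duality applies, the Lagrange dual is attained, and Rockafellar's inf-convolution formula decomposes the support function of the intersection into a sum of per-constraint contributions. Direct calculation gives $\sigma_{\BB_\K(\b c_j,r_j)}(\b u^*) = \langle \b u^*, \b c_j\rangle_\K + r_j \|\b u^*\|_\K$ (Cauchy--Schwarz, saturated at $\b c_j + r_j \b u^*/\|\b u^*\|_\K$) and $\sigma_{H^-_\K(\b v_j,\rho_j)}(\b u^*) = \xi_j \rho_j$ when $\b u^* = \xi_j \b v_j$ with $\xi_j \geq 0$ ($+\infty$ otherwise). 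Applying the decomposition at $\b u^* = -(\b f - \b f_0)$ and setting $\b g_j := -\b u_j^*$ produces exactly the two conditions in \eqref{ineq-eq_thm_DM}, with $\b g_j \in V$ by construction.

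I expect the main obstacle to be justifying strong duality and dual attainment, since in an infinite-dimensional Hilbert space the inf-convolution formula needs delicate constraint qualifications. The finite-dimensional reduction to $V$ neutralises this: on $V$, the Slater point provided by $\Omega \neq \emptyset$ gives strong duality and dual attainment by classical convex analysis. A final sanity check confirms that any candidate dual solution can be forced into $V$ without loss: projecting each $\b g_j$ onto $V$ preserves the second equation of \eqref{ineq-eq_thm_DM} (whose right-hand side lies in $V$), leaves $\langle \b g_j, \b c_j\rangle_\K$ unchanged (since $\b c_j \in V$), and can only decrease $\|\b g_j\|_\K$, thereby preserving the first inequality.
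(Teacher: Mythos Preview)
Your argument is correct, and it follows a genuinely different route from the paper's own proof. The paper invokes the Dubovitskii--Milyutin convex separation theorem directly in the infinite-dimensional space $\FK$: that theorem produces multipliers $\b g_{\b f},(\b g_{B,j}),(\b g_{H,j})$ (not all zero) satisfying a sum-of-infima inequality together with a sum-to-zero identity; two short lemmas evaluate the infima over a ball and over a half-space (yielding $\langle \b g,\b c\rangle_\K - r\|\b g\|_\K$ and the ray constraint $\b g = -\xi \b v$, $\xi\ge 0$, respectively); then the paper projects the $\b g_{B,j}$ onto $V=\Sp(\b f-\b f_0,\{\b c_j\},\{\b v_j\})$ to obtain the span condition, and finally argues that the multiplier $\xi_{\b f}$ attached to $H^-_\K(\b f-\b f_0,b_0-\bm{\Gamma}\b b)$ is nonzero (using $\Omega\ne\emptyset$ and $H^+\ne\FK$) so one can normalise. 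By contrast, you first reduce the primal infimum to the finite-dimensional subspace $V$ via orthogonal projection (which is legitimate since $\b f-\b f_0,\b c_j,\b v_j\in V$ and projection only tightens the ball constraints), and then apply the finite-dimensional inf-convolution formula for support functions of intersections under Slater's condition; attainment in that formula is exactly the dual attainment you need, and it delivers \eqref{ineq-eq_thm_DM} with $\b g_j\in V$ automatically.

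What each buys: your approach is arguably more elementary in that it avoids the less commonly cited Dubovitskii--Milyutin theorem and relies only on standard Rockafellar-type results in $\R^n$; the span condition on the $\b g_j$ also falls out for free rather than via a post-hoc projection. The paper's approach is more uniform in that both implications are handled simultaneously by the Dubovitskii--Milyutin equivalence, and it makes explicit why the non-degeneracy hypotheses $\Omega\ne\emptyset$ and $H^+_\K(\b f-\b f_0,b_0-\bm{\Gamma}\b b)\ne\FK$ are needed (they rule out $\xi_{\b f}=0$). Your final paragraph, showing that any dual solution can be projected onto $V$ without loss, is the same observation the paper uses---so the two proofs converge at that step.
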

	\tb{Remarks}: 
	\begin{itemize}[labelindent=0em,leftmargin=1em,topsep=0cm,partopsep=0cm,parsep=0cm,itemsep=2mm]
		\item \eqref{def:mixed-constraint:P=1_SOC} is tighter than \eqref{def:mixed-constraint:P=1}:
		Theorem~\ref{thm:inclusion} provides a general finite-dimensional formula for separating convex sets combining balls and half-spaces. It can be of independent interest for studies in RKHSs. Specifically,  using the notation
		\begin{align}
			C_{1,\Omega} & =\left\{(\b f,\b b)\,:\, \eqref{ineq-eq_thm_DM} \text{ holds for all $m\in[M]$} \right\}, \label{def:mixed-constraint:P=1_SOC}\tag{$\Csc_{1,\Omega}$}
		\end{align}
		requiring $(\b f,\b b)\in C_{1,\Omega}$ is equivalent to having $\bar{\Omega}  \subseteq H^{+}_{\K}(\b f - \b f_0,b_{0}-\bm{\Gamma}\b b)$ by Theorem \ref{thm:inclusion} and \eqref{eq:tightened-inclusion}. Hence, owing to \eqref{eq:inclusion}, \eqref{def:mixed-constraint:P=1_SOC} is a tighter constraint than \eqref{def:mixed-constraint:P=1}, i.e.\ $C_{1,\Omega}\subseteq C_{1}$.\textsuperscript{\ref{footnote:relax-tighten}} 
		\item Illustration: A visual illustration of the inclusion relation for a single ball ($J_B=1, J_H=0$), and for one ball and one half-space ($J_B=J_H=1$) is given in Fig.~\ref{Diagram_covering}(a) and Fig.~\ref{Diagram_covering}(b), respectively.
		\item Case of a single ball ($J_B=1$, $J_H=0$): In the simplest case where there is a single ball and no half-spaces, i.e.\ $\Omega=\dBB_{\K}(\b c, r)$, then \eqref{ineq-eq_thm_DM} reduces to $\b g =\b f-\b f_0$ and $\left<\b g,\b c\right>_{\K} \ge r \left\|\b g\right\|_{\K}+  b_0-\bm{\Gamma} \b b$, thus
		\begin{align}
			\left<\b f-\b f_0,\b c\right>_{\K} \ge r \left\|\b f-\b f_0\right\|_{\K}+  b_0-\bm{\Gamma} \b b. \label{eq:1-ball}
		\end{align}
		\item Case of one ball and one half-space ($J_B=J_H=1$): In this case $\Omega=\dBB_{\K}(\b c, r)\cap \dH^{-}_{\K} (\b v,\rho)$ and  \eqref{ineq-eq_thm_DM} writes as 
		$\b f - \b f_0 = \b g -\xi \b v$ and $\left<\b g,\b c\right>_{\K} \ge r \left\|\b g\right\|_{\K}+  b_0-\bm{\Gamma} \b b +\xi \rho$, thus
		\begin{align}
			\left<\b f-\b f_0+\xi \b v,\b c\right>_{\K} \ge r \left\|\b f-\b f_0+\xi \b v\right\|_{\K}+  b_0-\bm{\Gamma} \b b +\xi \rho. \label{eq:construction1:1ball-1hyperplane}
		\end{align}
		\item Constructing $\Omega$ using the compactness of $\Kcons$: A natural choice of $\Omega$ of the form \eqref{eq:Omega} can be obtained by leveraging the compactness of $\Kcons$. Indeed, let us take any finite covering 
		of $\Kcons$ through balls centered at $M$ points $\{\tilde{\b x}_{m}\}_{m\in[M]}$ with radius $\delta_m>0$. Then one can cover the sets  $\Phi_{D}(\BB_{\X}(\tilde{\b x}_{m},\delta_m)) \subset \FK$ by balls $\Omega_m=\mathring{\BB}_{\K}( DK(\cdot, \tilde{\b x}_{m}), \eta_m)$ with radii 
		\begin{align}
			\eta_m &= \sup_{\b x \,\in\,\BB_{\X}\left(\tilde{\b x}_{m},\delta_{m}\right)} \hspace*{-0.4cm} \|DK(\cdot, \tilde{\b x}_{m})-DK(\cdot, \b x)\|_K,\, m\in [M]. \label{eq:eta-m-def}
		\end{align}
		In other words, $\Phi_{D}(\Kcons)\subseteq\bigcup_{m\in [M]}\Phi_{D}(\BB_{\X}(\tilde{\b x}_{m},\delta_m))\subseteq \bigcup_{m\in [M]} \bar{\Omega}_m =:\bar{\Omega}$, hence  $\bar{\Omega}$ satisfies \eqref{eq:tightened-inclusion} and $(\b f,\b b)\in C_{1,\Omega}$. In this case  \eqref{def:mixed-constraint:P=1} has been strengthened to the SOC constraints
		\begin{align}
			\left<\b f-\b f_0,\b c_m\right>_{\K} \ge r_m \left\|\b f-\b f_0\right\|_{\K}+  b_0-\bm{\Gamma} \b b,\quad \forall m\in [M],
			\label{eq:approach1:ball-only-specialization}
		\end{align}
		where $c_m := DK(\cdot, \tilde{\b x}_{m})$ and $r_m := \eta_m$, by  using \eqref{eq:1-ball}.\vspace{0.1cm}
	\end{itemize}
	The tightening we detailed in this section allows for a large class of coverings based on balls and half-spaces.  However the reformulation \eqref{eq:inclusion} heavily relies on the assumption of $P=1$. 
	
	\subsection{Constraints by Upper Bounding the Modulus of Continuity}
	We now present a \tb{second approach} capable of handling $P\ge 1$, i.e.\ the affine SDP constraint \eqref{def:mixed-constraint:P>=1}. The method relies on an upper bound of the modulus of continuity of $\b D (\b f-\b f_0)$ over a finite covering of a compact $\Kcons\subseteq \bigcup_{m\in[M]} \BB_{\X}(\tilde{\b x}_{m},\delta_{m})$. For simplicity, we present the high-level idea for $P=1$. Let the modulus of continuity of $D (\b f-\b f_0)$ on $\BB_{\X}\left(\tilde{\b x}_{m},\delta_{m}\right)$ be defined as 
	\begin{equation} \label{eq:modulus_continuity}
		\omega_{D (\b f-\b f_0)}(\tilde{\b x}_{m},\delta_{m}):=\sup_{\substack{\b x \,\in\,\BB_{\X}\left(\tilde{\b x}_{m},\delta_{m}\right)}}\left|D (\b f-\b f_0)(\b x)-D (\b f-\b f_0)\left(\tilde{\b x}_{m}\right)\right|.
	\end{equation}
	Assume that we have an exact finite covering, in other words  $\Kcons= \bigcup_{m\in[M]} \BB_{\X}(\tilde{\b x}_{m},\delta_{m})$. If $\omega_{D (\b f-\b f_0)}(\tilde{\b x}_{m},\delta_{m})$ was known for every $m\in[M]$, then the constraint $(\b f,\b b)\in C_{P}$ would be implied by
	\begin{equation}
		\omega_{D (\b f-\b f_0)}(\tilde{\b x}_{m},\delta_{m}) \le  D  ( \b f - \b  f_{0})(\tilde{\b x}_{m})+ \bm{\Gamma}\b b-  b_{0},\,\forall\,m\in[M]. \label{eq:MOC}
	\end{equation}
	The implication follows from \eqref{eq:modulus_continuity} since the modulus of continuity is the smallest upper bound on the variations of the values. Applying the reproducing property for derivatives (Lemma \ref{lemma:reproducing}) and the Cauchy-Schwarz inequality, we obtain an upper bound
	\begin{equation} \label{eq:modulus_continuity_bound}
		\omega_{D (\b f-\b f_0)}(\tilde{\b x}_{m},\delta_{m}) =  \sup_{\substack{\b x \,\in\,\BB_{\X}(\tilde{\b x}_{m},\delta_{m})}}\left|\left<\b f-\b f_0, DK(\cdot,\b x) -DK(\cdot,\tilde{\b x}_{m})\right>_\K\right|\le \eta_{m} \|\b f-\b f_0\|_K
	\end{equation}
	with  $\eta_{m}$ defined as in \eqref{eq:eta-m-def}. While the  original quantity $\omega_{D (\b f-\b f_0)}(\tilde{\b x}_{m},\delta_{m})$ can be hard to evaluate, the bound $\eta_m \left\| \b f - \b f_0\right\|_\K$ is much more favourable from a computational perspective. Indeed, the term $\eta_m$
	has an explicit finite-dimensional description (see Lemma~\ref{lemma:eta_SDP_4Dtensor} below), and combining \eqref{eq:modulus_continuity_bound} with \eqref{eq:MOC} gives rise to the tightened second-order cone (SOC) constraints
	\begin{align*}
		\eta_m \left\| \b f - \b f_0\right\|_\K & \le  D  ( \b f - \b  f_{0})(\tilde{\b x}_{m})+ \bm{\Gamma}\b b-  b_{0},\,\forall\,m\in[M]
	\end{align*}
	for which the term $\left\| \b f - \b f_0\right\|_\K$  of \eqref{eq:modulus_continuity_bound} ensures that the problem is still convex and implementable.
	
	The following theorem extends the idea presented in \eqref{eq:modulus_continuity_bound} to affine SDP constraints and states our result on how to translate a finite ball-covering of $\Kcons$ (meant w.r.t.\ a norm $\left\|\cdot\right\|_{\X}$) into a SOC tightening of \eqref{def:mixed-constraint:P>=1}.
	
	\begin{theorem}[Tighter constraint for ball covering in $\X$ and $P\geq 1$]\label{thm:SDP} Assume that the points $\{\tilde{\b x}_{m}\}_{m\in[M]}~\subset~\X$ associated with radii $\delta_m>0$ form a ball-covering of $\Kcons$, i.e.\  $\Kcons\subseteq \bigcup_{m\in[M]} \BB_{\X}(\tilde{\b x}_{m},\delta_{m})$. Let
		$\b D=\left[D_{p_1,p_2}\right]_{p_1,p_2\in [P]}$ ($D_{p_1,p_2}\in O_{Q,s}$), and 
		\begin{gather}\label{def:eta_SDP}
			\eta_{m,P} := \sup_{\substack{\b x \,\in\,\BB_{\X}(\tilde{\b x}_{m},\delta_{m}),\\\b u = [u_p]_{p\in [P]}\in \S^{P-1}}} \left\|\sum_{p_1,\, p_2\in[P]} u_{p_1}u_{p_2}\big[ D_{p_1,p_2}K(\cdot,\tilde{\b x}_{m})-D_{p_1,p_2}K(\cdot,\b x) \big] \right\|_K,\\
			C_{P,\text{SOC}} :=\left\{(\b f,\b b)\,:\, \eta_{m,P}\|\b f-\b f_0\|_K \b I_{P} \preccurlyeq \b D(\b f-\b f_0)(\tilde{\b x}_{m}) + \diag(\bm{\Gamma}\b b - \b b_{0} ), \forall \, m\in[M]\right\}.\ \label{def:mixed-constraint:P>=1_SOC}\tag{$\Csc_{P,\text{SOC}}$}
		\end{gather}
		Then \eqref{def:mixed-constraint:P>=1_SOC} is tighter than \eqref{def:mixed-constraint:P>=1}, i.e. $C_{P,\text{SOC}}\subseteq C_{P}$.
	\end{theorem}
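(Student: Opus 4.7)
\textbf{Proof plan for Theorem~\ref{thm:SDP}.} The plan is to take an arbitrary $(\b f,\b b)\in C_{P,\text{SOC}}$ and an arbitrary $\b x\in\Kcons$, locate $\b x$ inside some covering ball $\BB_{\X}(\tilde{\b x}_{m},\delta_{m})$, and then verify positive semidefiniteness of $\b D(\b f-\b f_{0})(\b x)+\diag(\bm{\Gamma}\b b-\b b_{0})$ by testing against unit vectors $\b u\in\S^{P-1}$. Concretely, I write
\begin{align*}
\b u^{\top}\!\bigl[\b D(\b f-\b f_{0})(\b x)+\diag(\bm{\Gamma}\b b-\b b_{0})\bigr]\b u
&=\b u^{\top}\!\bigl[\b D(\b f-\b f_{0})(\tilde{\b x}_{m})+\diag(\bm{\Gamma}\b b-\b b_{0})\bigr]\b u\\
&\qquad+\b u^{\top}\!\bigl[\b D(\b f-\b f_{0})(\b x)-\b D(\b f-\b f_{0})(\tilde{\b x}_{m})\bigr]\b u,
\end{align*}
and show that the first term is at least $\eta_{m,P}\|\b f-\b f_{0}\|_{K}$ while the second is bounded below by $-\eta_{m,P}\|\b f-\b f_{0}\|_{K}$.

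For the first term, the SOC constraint defining $C_{P,\text{SOC}}$ says exactly that $\eta_{m,P}\|\b f-\b f_{0}\|_{K}\b I_{P}\preccurlyeq \b D(\b f-\b f_{0})(\tilde{\b x}_{m})+\diag(\bm{\Gamma}\b b-\b b_{0})$, so sandwiching by any $\b u\in\S^{P-1}$ (with $\b u^{\top}\b u=1$) immediately yields the desired lower bound. For the second term, I would expand $\b u^{\top}\b D(\cdot)\b u=\sum_{p_{1},p_{2}}u_{p_{1}}u_{p_{2}}D_{p_{1},p_{2}}(\b f-\b f_{0})$ and invoke the reproducing property for derivatives (Lemma~\ref{lemma:reproducing}) to rewrite each scalar $D_{p_{1},p_{2}}(\b f-\b f_{0})(\b y)$ as $\langle \b f-\b f_{0},\,D_{p_{1},p_{2}}K(\cdot,\b y)\rangle_{K}$. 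Linearity of the inner product then gives
\[
\b u^{\top}\!\bigl[\b D(\b f-\b f_{0})(\b x)-\b D(\b f-\b f_{0})(\tilde{\b x}_{m})\bigr]\b u
=\Bigl\langle \b f-\b f_{0},\;\sum_{p_{1},p_{2}}u_{p_{1}}u_{p_{2}}\bigl[D_{p_{1},p_{2}}K(\cdot,\b x)-D_{p_{1},p_{2}}K(\cdot,\tilde{\b x}_{m})\bigr]\Bigr\rangle_{K}.
\]

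Applying Cauchy--Schwarz, noting that reversing the sign of the bracket does not change its $\|\cdot\|_{K}$-norm, and bounding by the supremum over $\b x\in\BB_{\X}(\tilde{\b x}_{m},\delta_{m})$ and $\b u\in\S^{P-1}$ recovers precisely the quantity $\eta_{m,P}$ defined in \eqref{def:eta_SDP}. Thus the error term is at least $-\eta_{m,P}\|\b f-\b f_{0}\|_{K}$, and summing the two contributions gives $\b u^{\top}\!\bigl[\b D(\b f-\b f_{0})(\b x)+\diag(\bm{\Gamma}\b b-\b b_{0})\bigr]\b u\ge 0$ for every $\b u\in\S^{P-1}$, which is the matrix inequality of $C_{P}$.

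The routine points are the covering step and Cauchy--Schwarz; the only real subtlety is recognizing that $\eta_{m,P}$ has been engineered to be exactly the worst-case $\FK$-norm of the element $\sum_{p_{1},p_{2}}u_{p_{1}}u_{p_{2}}[D_{p_{1},p_{2}}K(\cdot,\tilde{\b x}_{m})-D_{p_{1},p_{2}}K(\cdot,\b x)]$ produced by applying the reproducing property to the quadratic form $\b u^{\top}\b D(\cdot)\b u$. This dual role of $\b u$---both as the test vector for SDP positivity and as the weighting in the sup defining $\eta_{m,P}$---is what I expect to be the main conceptual hurdle to present cleanly; once this matching is made explicit, the rest follows by chaining the two bounds.
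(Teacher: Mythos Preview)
Your proposal is correct and follows essentially the same route as the paper's proof: both pick a covering ball containing $\b x$, test the SDP inequality against an arbitrary $\b u\in\S^{P-1}$, decompose $\b u^{\top}\b D(\b f-\b f_{0})(\b x)\b u$ as its value at $\tilde{\b x}_{m}$ plus an increment, rewrite the increment via the reproducing property (Lemma~\ref{lemma:reproducing}) as an inner product with $\sum_{p_{1},p_{2}}u_{p_{1}}u_{p_{2}}[D_{p_{1},p_{2}}K(\cdot,\b x)-D_{p_{1},p_{2}}K(\cdot,\tilde{\b x}_{m})]$, and then apply Cauchy--Schwarz together with the definition of $\eta_{m,P}$ to cancel against the SOC lower bound. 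The only cosmetic difference is that the paper first derives the lower bound $\b u^{\top}\b D(\b f-\b f_{0})(\b x)\b u\ge\b u^{\top}\b D(\b f-\b f_{0})(\tilde{\b x}_{m})\b u-\eta_{m,P}\|\b f-\b f_{0}\|_{K}$ and then invokes the SOC constraint, whereas you split into two terms up front; the content is identical.
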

	The following lemma provides a more explicit, finite-dimensional  description of 	$\eta_{m,P}$.
	
	\begin{lemma}[Finite-dimensional description of $\eta_{m,P}$]\label{lemma:eta_SDP_4Dtensor} 
		For $\b z \in \X$, $\delta >0$, and a differential operator $\b D=\left[D_{p_1,p_2}\right]_{p_1,p_2\in [P]}$ ($D_{p_1,p_2}\in O_{Q,s}$), let 
		\begin{align}
			\eta(\b z, \delta; \b D) &:= \sup_{\substack{\b x \,\in\,\BB_{\X}\left(\b z,\delta\right),\\\b u\in \S^{P-1}}} |\langle \b u\otimes\b u,\left[\Ktens(\b z,\b z)+\Ktens(\b x,\b x)-2\Ktens(\b z,\b x)\right](\b u\otimes\b u) \rangle_F|^{1/2}, \label{eq:eta-function}
		\end{align}
		with the symmetric 4D-tensor $\Ktens(\b x',\b x)=\left[\Ktens(\b x',\b x)_{p_1,\,p_2,\,p'_1,\,p'_2}\right]_{p_1,\,p_2,\,p_1',\,p_2'\in [P]}\in \R^{P\times P\times P\times P}$, having elements $\Ktens(\b x',\b x)_{p_1,\,p_2,\,p'_1,\,p'_2}:=D_{p'_1,\,p'_2}^\top D_{p_1,\,p_2}K(\b x',\b x)$ and acting as a linear operator over matrices of $\R^{P\times P}$. Then, the quantity $\eta_{m,P}$ defined in \eqref{def:eta_SDP} can be written as 
		\begin{align}\label{eq:eta_SDP_explicit}
			\eta_{m,P} &= \eta\left(\tilde{\b x}_{m},\delta_{m};\b D\right).
		\end{align}
	\end{lemma}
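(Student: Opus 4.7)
The plan is to expand the squared RKHS-norm inside \eqref{def:eta_SDP}, apply the reproducing property for derivatives of matrix-valued kernels (Lemma~\ref{lemma:reproducing}) to each resulting inner product, and recognize the outcome as a Frobenius quadratic form in $\b u \otimes \b u$ against the tensor $\Ktens$.

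\textbf{Step 1 (setup).} Fix $\b u = [u_p]_{p\in [P]}\in \S^{P-1}$ and $\b x \in \BB_{\X}(\tilde{\b x}_m,\delta_m)$, and abbreviate $\b z := \tilde{\b x}_m$. Since $K \in \C^{s,s}(\X\times\X,\R^{Q\times Q})$ and each $D_{p_1,p_2} \in O_{Q,s}$, Lemma~\ref{lemma:reproducing} guarantees that $D_{p_1,p_2}K(\cdot,\b y) \in \FK$ for any $\b y\in\X$. Consequently the element
\begin{equation*}
  \b h_{\b x,\b u} := \sum_{p_1,p_2\in[P]} u_{p_1}u_{p_2}\bigl[D_{p_1,p_2}K(\cdot,\b z)-D_{p_1,p_2}K(\cdot,\b x)\bigr]
\end{equation*}
belongs to $\FK$, and its norm is precisely the quantity being supremized in \eqref{def:eta_SDP}.

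\textbf{Step 2 (expand the norm).} Bilinearity of the inner product yields
\begin{equation*}
  \|\b h_{\b x,\b u}\|_K^2 = \sum_{p_1,p_2,p_1',p_2' \in [P]} u_{p_1}u_{p_2}u_{p_1'}u_{p_2'} \bigl\langle D_{p_1,p_2}K(\cdot,\b z)-D_{p_1,p_2}K(\cdot,\b x),\; D_{p_1',p_2'}K(\cdot,\b z)-D_{p_1',p_2'}K(\cdot,\b x)\bigr\rangle_K.
\end{equation*}
Each of these inner products splits into four pieces. To evaluate them, apply Lemma~\ref{lemma:reproducing} twice: for any $\b y_1,\b y_2 \in \X$, setting $\b f = D_{p_1',p_2'}K(\cdot,\b y_2)$ in the reproducing identity $\langle \b f, D_{p_1,p_2}K(\cdot,\b y_1)\rangle_K = D_{p_1,p_2}\b f(\b y_1)$, one recovers exactly the quantity $D_{p_1',p_2'}^\top D_{p_1,p_2}K$ of \eqref{def:DtopD} evaluated at a pair involving $\b y_1$ and $\b y_2$. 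By the definition of $\Ktens$ in the statement, this reads
\begin{equation*}
  \langle D_{p_1,p_2}K(\cdot,\b y_1),\, D_{p_1',p_2'}K(\cdot,\b y_2)\rangle_K \;=\; \Ktens(\cdot,\cdot)_{p_1,p_2,p_1',p_2'}\text{ at }(\b y_1,\b y_2),
\end{equation*}
where the kernel symmetry $K(\b x',\b x)=K(\b x,\b x')^\top$ together with Schwarz's theorem for mixed partials makes the two cross terms evaluated at $(\b z,\b x)$ and $(\b x,\b z)$ pair up to $2\Ktens(\b z,\b x)$ once contracted against the symmetric rank-one tensor $\b u\otimes\b u$.

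\textbf{Step 3 (reassemble as a Frobenius quadratic form).} Recognizing $\sum_{p_1,p_2,p_1',p_2'} (\b u\otimes\b u)_{p_1,p_2}\, M_{p_1,p_2,p_1',p_2'}\, (\b u\otimes\b u)_{p_1',p_2'}$ as $\langle \b u\otimes\b u,\, \bm{M}(\b u\otimes\b u)\rangle_F$ for any 4D-tensor $\bm M$ acting as a linear operator on $\R^{P\times P}$, Steps~1–2 collapse to
\begin{equation*}
  \|\b h_{\b x,\b u}\|_K^2 \;=\; \bigl\langle \b u\otimes\b u,\; [\Ktens(\b z,\b z)+\Ktens(\b x,\b x)-2\Ktens(\b z,\b x)](\b u\otimes\b u)\bigr\rangle_F.
\end{equation*}
Since this quantity is nonnegative, taking the square root and then the supremum over $\b x\in \BB_{\X}(\b z,\delta_m)$ and $\b u\in \S^{P-1}$ delivers \eqref{eq:eta_SDP_explicit}.

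\textbf{Main obstacle.} The only nontrivial step is the bookkeeping: tracking which variable slot each operator $D_{p_i,p_j}$ (and its transposed counterpart in \eqref{def:DtopD}) acts on so that the four cross-product terms line up with the correct entries of $\Ktens(\cdot,\cdot)$, and using the kernel's symmetry at the right moment to combine the two "mixed" terms into the single coefficient $-2\Ktens(\b z,\b x)$. Once this indexing is settled, the remainder is routine bilinear algebra.
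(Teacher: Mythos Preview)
Your proof is correct and follows essentially the same route as the paper's: expand the squared $\FK$-norm of the difference, use the reproducing property (Lemma~\ref{lemma:reproducing}) to turn each inner product $\langle D_{p_1,p_2}K(\cdot,\b y_1), D_{p_1',p_2'}K(\cdot,\b y_2)\rangle_K$ into the tensor entry $\Ktens(\b y_1,\b y_2)_{p_1,p_2,p_1',p_2'}$, and recognise the resulting four-index sum as a Frobenius quadratic form in $\b u\otimes\b u$. The only cosmetic difference is that the paper first writes $\|a-b\|_K^2=\|a\|_K^2+\|b\|_K^2-2\langle a,b\rangle_K$ (three terms) rather than expanding directly into four, which spares the symmetry argument you invoke to merge the two cross terms.
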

	
	\noindent\tb{Remarks}: 
	\begin{itemize}[labelindent=0em,leftmargin=1em,topsep=0cm,partopsep=0cm,parsep=0cm,itemsep=2mm]
		\item Relation of $\eta_{m,P}$ to the eigenvalues of $\Ktens$: Since $\b u \in \S^{p-1}$,
		$\left\|\b u \otimes \b u\right\|_F^2 =1$. This means that $\eta_{m,P}$ can be upper bounded by the supremum over the ball $\BB_{\X}\left(\tilde{\b x}_{m},\delta_{m}\right)$ of the square root of the maximal eigenvalue  $\lambda_{\text{max}}$ of the 4D-tensor $\Ktens(\tilde{\b x}_{m},\tilde{\b x}_{m})+\Ktens(\b x,\b x)-2\Ktens(\tilde{\b x}_{m},\b x)$. Indeed,
		\begin{align*}
			\eta_{m,P} &= \sup_{\substack{\b x \,\in\,\BB_{\X}\left(\tilde{\b x}_{m},\delta_{m}\right),\\\b u\in \S^{P-1}}} \left|\langle\b u\otimes\b u,\left[\Ktens(\tilde{\b x}_{m},\tilde{\b x}_{m})+\Ktens(\b x,\b x)-2\Ktens(\tilde{\b x}_{m},\b x)\right](\b u\otimes\b u) \rangle_F\right|^{1/2}\\
			&\le \sup_{\substack{\b x \,\in\,\BB_{\X}\left(\tilde{\b x}_{m},\delta_{m}\right),\\\b U\in \R^{P\times P}\,:\, \left\|\b U\right\|_F=1}} \left|\left<\b U,\left[\Ktens(\tilde{\b x}_{m},\tilde{\b x}_{m})+\Ktens(\b x,\b x)-2\Ktens(\tilde{\b x}_{m},\b x)\right]\b U \right>_F\right|^{1/2}\\
			& = \sup_{\b x \,\in\,\BB_{\X}\left(\tilde{\b x}_{m},\delta_{m}\right)} \lambda_{\text{max}}^{1/2}\left(\Ktens(\tilde{\b x}_{m},\tilde{\b x}_{m})+\Ktens(\b x,\b x)-2\Ktens(\tilde{\b x}_{m},\b x)\right).
		\end{align*}
		In particular, by continuity of the spectral radius, this ensures that $\eta_{m,P}=\eta_{m,P}(\delta_{m})$ converges to zero when $\delta_{m}$ goes to zero. Hence when the discretization steps $(\delta_{m})_{m\in[M]}$ decrease to zero, we recover the original constraint \eqref{def:mixed-constraint:P>=1}. 
		\item Equivalence of Theorem \ref{thm:inclusion} and Theorem \ref{thm:SDP} for balls and $P=1$: When $P=1$, $u \in \S^0$ means that $|u|=1$. Hence  $|u_1 u_2|=1$ can be pulled out from \eqref{def:eta_SDP} and $\eta_{m,1}$ reduces to
		\begin{align*}
			\eta_{m,1} & =\sup_{\b x \,\in\,\BB_{\X}(\tilde{\b x}_{m},\delta_{m})} \hspace*{-0.4cm} \|DK(\cdot, \tilde{\b x}_{m})-DK(\cdot, \b x)\|_K, 
		\end{align*}
		so we recover $\eta_m$ as defined in \eqref{eq:eta-m-def}, and as anticipated in \eqref{eq:modulus_continuity_bound}. In other words, for $P=1$, when choosing a ball covering $\Omega_m=\mathring{\BB}_{\K}(DK(\cdot,\tilde{\b x}_{m}),\eta_{m})$, Theorem \ref{thm:inclusion} coincides with Theorem \ref{thm:SDP}. This specific choice  was followed by \citet{aubin20hard}. The two theorems presented here have complementary advantages: for real-valued constraints, Theorem \ref{thm:inclusion} allows more general coverings than just balls, whereas Theorem \ref{thm:SDP} is able to handle affine SDP constraints with $P> 1$.
		\item Computation of $\eta_{m,P}$: The value of 
		$\eta_{m,P}$ can be computed analytically in various cases. For instance, for $P=1$ and $D=\Id$, with a monotonically decreasing radial kernel $K(\b x,\b y)=K_0(\left\|\b x-\b y\right\|_\X)$ (such as the Gaussian kernel), \eqref{eq:eta_SDP_explicit} simplifies to 
		\begin{align}\label{eq:eta_SDP_simple}
			\eta_{m,1}(\delta_{m}) =\sup_{\b x\in \BB_{\X}(\b 0,\delta_{m})}\hspace*{-0.1cm} \sqrt{ \left|2 K_0(0)- 2K_0\left(\left\|\b x\right\|_\X\right)\right|} = 
			\sqrt{ \left|2 K_0(0)- 2K_0\left(\delta_{m}\right)\right|}.
		\end{align}
		Depending on the choice of the kernel, similar computations could be carried out for higher-order derivatives. For translation-invariant kernels, $\eta_{m,P}$ can be computed on a single $\delta_{m}$-ball around the origin as in \eqref{eq:eta_SDP_simple}. A fast approximation of $\eta_{m,P}$ can also for instance be obtained by sampling $\b x$ (resp.\ $\b u$) in the ball $\BB_{\X}(\tilde{\b x}_{m},\delta_{m})$ (resp. sphere $\S^{P-1}$). Moreover, as $\eta_{m,P}$ is related to the modulus of continuity of $DK$, the smoother the kernel, the smaller $\eta_{m,P}$ and the tighter the approximation of $\bm{\Phi}_D(\Kcons)$. As intuitively explained in \eqref{eq:modulus_continuity_bound}, $\eta_{m,P}$ is \emph{one} possible upper bound on the modulus of continuity, enabling guarantees for hard shape constraints. Depending on the objective function $\Lcal$, this bound is also tight in the equality case of the Cauchy-Schwarz inequality \eqref{eq:modulus_continuity_bound}.
	\end{itemize}
		
	\section{Objective Function} \label{sec:optimization}
	In Section \ref{sec:constraints} we detailed how one can tighten an infinite number of affine SDP constraints over a compact set of $\X$ into finitely many convex constraints in RKHSs through finite coverings of compact sets in $\X$ or in $\FK$. The proposed construction tightens the constraints  \eqref{def_mixded_constraint} into the ones defined in \eqref{def:mixed-constraint:P>=1_SOC} and \eqref{def:mixed-constraint:P=1_SOC}. In this section we show the existence of solution and a certificate of optimality (Theorem~\ref{thm:certificate}) using these tightenings, followed by a posteriori and a priori bounds and convergence guarantees (Corollary~\ref{thm:aposteriori_bound},   Proposition~\ref{thm:apriori_bound}). Then we derive a representer theorem (Proposition~\ref{prop:reproducing}) which expresses the tightened optimization problem as a finite-dimensional one and hence enables numerical solutions. 
	
	\begin{theorem}[Existence, Certificate]\label{thm:certificate}
		Let $\X\subseteq \R^d$ be a set which is contained in the closure of its interior and is endowed with a  matrix-valued kernel $K\in \C^{s,s}\left(\X \times \X,\R^{Q\times Q}\right)$ for some $s\in \N$. Partition $[I]$ with $I\in \N$ into two disjoint index sets $\mathcal{I}_{\text{SOC}}$ and $\mathcal{I}_{\Omega}$ (i.e.\ $[I]=\mathcal{I}_{\text{SOC}} \mathop{\dot\cup} \mathcal{I}_{\Omega}$). Define the optimization problem 
		\begin{align}
			\left(\bar{\b f}_{\app}, \bar{\b b}_{\app}\right) &\in \argmin_{\substack{\b f\,\in\,\hatFK,\, \b b\,\in\,\R^B\\ (\b f,\b b) \in C_{\app}}} \Lcal\left(\b f,\b b\right),\label{opt_cons_SOC}\tag{$\Psc_{app}$}
		\end{align}
		where  $\Lcal:\FK\times\R^B\rightarrow \R\cup\{\infty\}$, $\hatFK$ is a closed subspace of $\FK$ equipped with $\|\cdot\|_K$, and $C_{\app} := \left(\bigcap_{i\in\mathcal{I}_{\text{SOC}}} C^i_{P_i,\text{SOC}}\right)\cap \left(\bigcap_{i\in\mathcal{I}_{\Omega}} C^i_{1,\Omega}\right)$,  $\left\{C^i_{P_i,\text{SOC}}\right\}_{i\in\mathcal{I}_{\text{SOC}}}$ and $\left\{C^i_{1,\Omega}\right\}_{i\in\mathcal{I}_{\Omega}}$ being specified in \eqref{def:mixed-constraint:P>=1_SOC} and \eqref{def:mixed-constraint:P=1_SOC}.
		\begin{enumerate}[labelindent=0em,leftmargin=1em,topsep=0.2cm,partopsep=0cm,parsep=0cm,itemsep=2mm]
			\item Existence: Assume that (i) $\Lcal$ is weakly lower semi-continuous (or shortly w-l.s.c) and coercive over $\FK\times \R^B$, and (ii) there exists an admissible pair $(\b f, \b b)\in C_{\app}\cap \dom(\Lcal)\cap (\hatFK\times \R^B)$.\footnote{An extended real-valued function $g:\Y\rightarrow \R\cup\{\infty\}$ over a Hilbert space $\Y$ is w-l.s.c.\ if its  sublevel sets $lev_\gamma(g):=\{y\in\Y\,:\, g(y)\le \gamma \}$ are weakly closed in $\Y$ for all $\gamma\in\R$, and coercive if the sets $lev_\gamma(g)$ are all bounded in $\Y$ \citep[see e.g.][Chapter 3.2]{attouch14variational}. The (effective) domain of $g$ is defined as $\dom(g)=\{y \in \Y  \,:\, g(y)<\infty\}$.\label{footnote:def:wlsc-coercive}} Then there exists a minimizer $\left(\bar{\b f}_{\app}, \bar{\b b}_{\app}\right)$ of \eqref{opt_cons_SOC} and a solution $(\bar{\b f},\bar{\b b})$ to \eqref{opt-cons}.
			\item Certificate of optimality: Let $v_\app$, $\bar{v}$ and $v_{\text{relax}}$ be the optimal values of \eqref{opt_cons_SOC}, \eqref{opt-cons} and of any given relaxation\textsuperscript{\ref{footnote:relax-tighten}} of \eqref{opt-cons}, then $v_{\text{relax}}\leq \bar{v} \leq v_{\app}$.
		\end{enumerate}
	\end{theorem}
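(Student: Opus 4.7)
My plan is to establish existence through the direct method in the calculus of variations, with the optimality certificate then following tautologically from the tightening/relaxation relation. The substance of the argument lies in verifying that the feasible sets of both $(\Psc_{\text{app}})$ and $(\Psc)$ are weakly closed in $\FK\times\R^B$; after that, a standard minimizing-sequence argument concludes.

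First I would show that $C_{\text{app}}\cap(\hatFK\times\R^B)$ is weakly closed. Since $\hatFK$ is closed in norm and convex it is weakly closed (Mazur), so it suffices to treat each constraint piece. For a set $C^i_{P_i,\text{SOC}}$, the affine map $(\b f,\b b)\mapsto \b D_i(\b f-\b f_{0,i})(\tilde{\b x}_m)+\diag(\bm\Gamma_i\b b-\b b_{0,i})$ is weakly continuous by the reproducing property for derivatives (Lemma~\ref{lemma:reproducing}), while $\b f\mapsto \|\b f-\b f_{0,i}\|_K$ is weakly l.s.c. Rewriting the SDP inequality as the scalar condition $\b u^\top \bigl[\b D_i(\b f-\b f_{0,i})(\tilde{\b x}_m)+\diag(\bm\Gamma_i\b b-\b b_{0,i})-\eta_{m,P_i}\|\b f-\b f_{0,i}\|_K \b I_{P_i}\bigr]\b u\geq 0$ for each $\b u\in\S^{P_i-1}$, the left-hand side is weakly u.s.c., so its zero-superlevel set is weakly closed; intersecting over $\b u$, $m$, and $i$ preserves weak closedness. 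For $C^i_{1,\Omega}$ I would use the geometric description exploited in Theorem~\ref{thm:inclusion}: membership is equivalent to $\Omega^i_m\subseteq H^+_K(\b f-\b f_{0,i},\,b_{0,i}-\bm\Gamma_i\b b)$ for every $m$, i.e.\ $\inf_{\b g\in\Omega^i_m}\langle \b g,\b f-\b f_{0,i}\rangle_K\geq b_{0,i}-\bm\Gamma_i\b b$. For each fixed $\b g$ the inner product is weakly continuous in $\b f$, so the infimum is weakly u.s.c.\ and the resulting superlevel set is weakly closed.

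With weak closedness in hand, the direct method finishes existence for $(\Psc_{\text{app}})$: pick a minimizing sequence (non-empty by assumption (ii)); coercivity of $\Lcal$ bounds it in $\FK\times\R^B$; Banach--Alaoglu extracts a weakly convergent subsequence; weak closedness places the limit in $C_{\text{app}}\cap(\hatFK\times\R^B)$; and weak l.s.c.\ of $\Lcal$ ensures it attains the infimum, yielding $(\bar{\b f}_{\text{app}},\bar{\b b}_{\text{app}})$. The same recipe handles $(\Psc)$: the containment $C_{\text{app}}\subseteq C$ (from Theorem~\ref{thm:SDP} and the remark following Theorem~\ref{thm:inclusion}) promotes the admissible pair to admissibility in $(\Psc)$, and $C$ is weakly closed because each of its defining affine SDP inequalities, indexed by $\b x\in\Kcons_i$, reduces via Lemma~\ref{lemma:reproducing} to an affine, weakly continuous scalar condition in $(\b f,\b b)$, and intersections of weakly closed sets are weakly closed.

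The optimality certificate is then immediate from the definitions: a relaxation of $(\Psc)$ enlarges the feasible set while keeping the same objective, hence $v_{\text{relax}}\leq\bar{v}$; conversely the inclusion $C_{\text{app}}\cap(\hatFK\times\R^B)\subseteq C\cap(\FK\times\R^B)$ makes $(\Psc_{\text{app}})$ a tightening of $(\Psc)$, forcing $\bar{v}\leq v_{\text{app}}$. The only delicate point, to my mind, is that the term $\|\b f-\b f_0\|_K$ entering the SOC cones is weakly l.s.c.\ rather than weakly continuous; but it appears with the sign that renders the scalar inequality weakly u.s.c., which is precisely the direction of semicontinuity needed for a closed superlevel set, so no genuine obstacle materialises.
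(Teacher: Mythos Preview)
Your proposal is correct and follows essentially the same strategy as the paper: verify weak closedness of the feasible sets $C_{\text{app}}\cap(\hatFK\times\R^B)$ and $C$, then apply the direct method (the paper packages this as \citet[Theorem~3.2.5]{attouch14variational}). The only minor difference is that the paper, via its auxiliary Lemma~\ref{lemma:closed_constraints}, establishes \emph{strong} closedness and convexity of $C_{\text{app}}$ and $C$ and then invokes the fact that closed convex sets are weakly closed, which sidesteps the sign-tracking you flag around the weak l.s.c.\ norm term; your direct weak-closedness argument is equally valid.
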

    
\noindent\emph{Proof idea.} After showing that $C_{\app}$ is weakly closed, the existence of solution stems from a classical result in optimization, see \citet[Theorem 3.2.5]{attouch14variational}. The certificate is a direct consequence of the fact we provided a tightening.
\vspace{0.1cm}

\noindent\tb{Remark (Nyström method)}:
	In the RKHS literature one often reduces the search space $\FK$ to a subspace $\hatFK$, for instance by performing subsampling (known as Nyström approximation). Since a finite-dimensional subspace of a Hilbert space is closed, the Nyström scheme is specifically encompassed in Theorem~\ref{thm:certificate}.

	We now derive a posteriori bounds on the error of the variables in the strongly convex case, and a priori bounds which underline the role of the tightness of the covering.
	
	\begin{corollary}[A Posteriori Bound]\label{thm:aposteriori_bound}
		With the notations of \Cref{thm:certificate}, if $\Lcal$ is w-l.s.c and $(\mu_{\b f},\mu_{\b b})$-strongly convex w.r.t.\ $(\b f,\b b)$ and there exists an admissible pair $(\b f, \b b)\in C_{\app}\cap \dom(\Lcal)\cap (\hatFK\times \R^B)$, then $(\bar{\b f},\bar{\b b})$ and $\left(\bar{\b f}_{\app}, \bar{\b b}_{\app}\right)$ exist, are unique, and
		\begin{align}\label{ineq_aposteriori}
			\left\|\bar{\b f}_{\app}-\bar{\b f}\right\|_K &\leq \sqrt{\frac{2 (v_{\app}-v_{\text{relax}})}{\mu_{\b f}}}, &
			\left\|\b b_{\app}-\bar{\b b}\right\|_2 &\leq \sqrt{\frac{2 (v_{\app}-v_{\text{relax}})}{\mu_{\b b}}}.
		\end{align}
	\end{corollary}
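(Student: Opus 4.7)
The plan is to combine the strong convexity of $\Lcal$ with a convex-combination argument in the feasible set $C$ of \eqref{opt-cons}, using the tightening inclusion $C_{\app}\subseteq C$ and the ordering $v_{\text{relax}}\le\bar{v}\le v_{\app}$ provided by Theorem~\ref{thm:certificate}.

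First, I would establish existence and uniqueness. Strong convexity implies that every sublevel set of $\Lcal$ is bounded (standard consequence of the quadratic lower bound), so $\Lcal$ is coercive; combined with the assumed weak lower semi-continuity and the existence of an admissible pair in $C_{\app}\cap\dom(\Lcal)\cap(\hatFK\times\R^B)\subseteq C\cap\dom(\Lcal)\cap(\FK\times\R^B)$, Theorem~\ref{thm:certificate} yields both minimizers $(\bar{\b f},\bar{\b b})$ and $(\bar{\b f}_{\app},\bar{\b b}_{\app})$. Uniqueness of each minimizer then follows from strict convexity of $\Lcal$ together with the convexity of $C$ and $C_{\app}$ (the latter being an intersection of affine SOC constraints).

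For the quantitative bound, I would exploit the fact that $C_{\app}\subseteq C$ (Theorem~\ref{thm:SDP} and Theorem~\ref{thm:inclusion}), so $(\bar{\b f}_{\app},\bar{\b b}_{\app})\in C$. Set
\begin{equation*}
(\b f_\lambda,\b b_\lambda):=\lambda(\bar{\b f},\bar{\b b})+(1-\lambda)(\bar{\b f}_{\app},\bar{\b b}_{\app}),\qquad \lambda\in[0,1].
\end{equation*}
Convexity of $C$ gives $(\b f_\lambda,\b b_\lambda)\in C$, hence $\Lcal(\b f_\lambda,\b b_\lambda)\ge\bar{v}$. The $(\mu_{\b f},\mu_{\b b})$-strong convexity of $\Lcal$ yields
\begin{equation*}
\Lcal(\b f_\lambda,\b b_\lambda)\le \lambda\bar{v}+(1-\lambda)v_{\app}-\tfrac{\lambda(1-\lambda)}{2}\bigl(\mu_{\b f}\|\bar{\b f}_{\app}-\bar{\b f}\|_K^2+\mu_{\b b}\|\bar{\b b}_{\app}-\bar{\b b}\|_2^2\bigr).
\end{equation*}
Chaining the two inequalities and dividing by $(1-\lambda)>0$ gives $v_{\app}-\bar{v}\ge \tfrac{\lambda}{2}\bigl(\mu_{\b f}\|\bar{\b f}_{\app}-\bar{\b f}\|_K^2+\mu_{\b b}\|\bar{\b b}_{\app}-\bar{\b b}\|_2^2\bigr)$; letting $\lambda\uparrow 1$ and using $\bar{v}\ge v_{\text{relax}}$ produces
\begin{equation*}
v_{\app}-v_{\text{relax}}\ge \tfrac{1}{2}\bigl(\mu_{\b f}\|\bar{\b f}_{\app}-\bar{\b f}\|_K^2+\mu_{\b b}\|\bar{\b b}_{\app}-\bar{\b b}\|_2^2\bigr).
\end{equation*}
The two claimed inequalities of \eqref{ineq_aposteriori} follow by retaining one term at a time on the right-hand side and rearranging.

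The only delicate step is the interpretation of $(\mu_{\b f},\mu_{\b b})$-strong convexity as \emph{joint} strong convexity in the product norm $\mu_{\b f}\|\cdot\|_K^2+\mu_{\b b}\|\cdot\|_2^2$, which is required for the quadratic lower-bound step above; once this is fixed, the rest is routine. Everything else—existence, uniqueness, and the inequality chain—is either a direct invocation of Theorem~\ref{thm:certificate} or a standard convex-analysis manipulation.
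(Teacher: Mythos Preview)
Your proposal is correct and follows essentially the same route as the paper: both use that $(\bar{\b f}_{\app},\bar{\b b}_{\app})\in C$ together with strong convexity of $\Lcal+\chi_C$ to obtain the quadratic lower bound $v_{\app}-\bar v\ge\tfrac{\mu_{\b f}}{2}\|\bar{\b f}_{\app}-\bar{\b f}\|_K^2+\tfrac{\mu_{\b b}}{2}\|\bar{\b b}_{\app}-\bar{\b b}\|_2^2$, and then use $\bar v\ge v_{\text{relax}}$. The only difference is cosmetic: the paper cites this quadratic growth inequality as a black-box result (\citealt[Proposition~3.23]{peypouquet15convex}), whereas you rederive it inline via the convex-combination argument and the limit $\lambda\uparrow 1$.
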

	\noindent\emph{Proof idea.} This bound comes from a general result on strongly convex functions, \citep[Proposition~3.23]{peypouquet15convex}.
	\begin{proposition}[A Priori Bound]\label{thm:apriori_bound} 
		Let us use the notations of \Cref{thm:certificate}.\footnote{Recall that (i) $\bm{\Gamma}_i$ $(i\in \mathcal{I})$ is from \eqref{def_mixded_constraint}, (ii) the points $\left\{\tilde{\b x}_{i,m}\right\}_{m\in [M_i]}$ form a covering of the compact set $\Kcons_i$ $(i\in \mathcal{I}_{\text{SOC}})$, (iii) $\eta_{i,m,P_i}$ $(i\in \mathcal{I}_{\text{SOC}})$ is specified in \eqref{def:eta_SDP}, (iv) $\Omega_{i,m}$ ($i\in \mathcal{I}_{\Omega}$, $m\in[M_i]$) is defined according to \eqref{eq:Omega_m}.}		Assume that (i) $\left(\bar{\b f}_{\app}, \bar{\b b}_{\app}\right)$ of \eqref{opt_cons_SOC} and $(\bar{\b f},\bar{\b b})$ exist, (ii) $\hatFK=\FK$ and $\dom(\Lcal(\bar{\b f},\cdot))=\RR^B$, (iii) there exists $\bm{\beta}\in\R^B$ such that $\bm{\Gamma}_i \bm{\beta}>\b 0$ for all $i\in \mathcal{I}$, (iv) $\left\{\tilde{\b x}_{i,m}\right\}_{m\in [M_i]}\subseteq\Kcons_i,\, \forall i\in \mathcal{I}_{\text{SOC}}$,  and (v) $\Lcal( \bar{\b f},\cdot)$ is $L_b-$Lipschitz continuous on 
		$\BB_{\left\|\cdot\right\|_2}\left(\bar{\b b},\eta_\infty c_f\big\| \bm{\beta}\big\|_2 \right)$ where 
		$c_f:=\frac{\max_{i\in [I]}  \left\|\bar{\b f} -\b f_{0,i}\right\|_K}{\min_{i\in [I],p\in P_i}(\bm{\Gamma}_i \bm{\beta})_p}$ and
		\begin{align}\label{def:eta_infty}
			\eta_\infty &:=\max\left(\max_{i\in\mathcal{I}_{\text{SOC}},\, m\in[M_i]} \eta_{i,m,P_i}, \max_{i\in \mathcal{I}_{\Omega},\,m\in[M_i]} \diam(\Omega_{i,m})\right).
		\end{align}
		Then
		\begin{align}\label{ineq_apriori}
			0\le v_{\app}  - \bar{v} \le L_b c_f\big\| \bm{\beta}\big\|_2  \eta_\infty.	
		\end{align}
	\end{proposition}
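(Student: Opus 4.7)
The plan is to establish the two inequalities in \eqref{ineq_apriori} separately. The lower bound $\bar v \le v_{\app}$ follows directly from the certificate already furnished in Theorem~\ref{thm:certificate}, because the tightening $C_{\app}$ is by construction a subset of $C$. The heart of the argument is the upper bound, for which I would exhibit an explicit admissible pair of \eqref{opt_cons_SOC} obtained by perturbing \emph{only} the bias component of the original minimizer. Concretely, I would try the ansatz $(\bar{\b f},\bar{\b b}+t\bm{\beta})$ with $t:=c_f\,\eta_\infty$ and close the proof by invoking the local Lipschitz assumption on $\Lcal(\bar{\b f},\cdot)$.

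To check feasibility of this ansatz in the SOC part ($i\in\mathcal{I}_{\text{SOC}}$), assumption (iv) gives $\tilde{\b x}_{i,m}\in\Kcons_i$, so the original constraint $(\bar{\b f},\bar{\b b})\in C$ evaluated at $\tilde{\b x}_{i,m}$ yields
\[
\b 0\preccurlyeq \Dsdp_i(\bar{\b f}-\b f_{0,i})(\tilde{\b x}_{i,m})+\diag(\bm{\Gamma}_i\bar{\b b}-\b b_{0,i}).
\]
Adding $t\diag(\bm{\Gamma}_i\bm{\beta})$, which is a positive diagonal matrix by assumption (iii), and using $\diag(\bm{\Gamma}_i\bm{\beta})\succcurlyeq \min_{p}(\bm{\Gamma}_i\bm{\beta})_p\,\b I_{P_i}$, the right-hand side dominates $t\min_{p}(\bm{\Gamma}_i\bm{\beta})_p\,\b I_{P_i}$. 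The definitions of $c_f$ and $\eta_\infty$ deliver $\eta_{i,m,P_i}\|\bar{\b f}-\b f_{0,i}\|_K\le \eta_\infty\cdot c_f\min_{p}(\bm{\Gamma}_i\bm{\beta})_p = t\min_{p}(\bm{\Gamma}_i\bm{\beta})_p$, which is exactly the SOC constraint in \eqref{def:mixed-constraint:P>=1_SOC}.

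For the $\Omega$ part ($i\in\mathcal{I}_{\Omega}$, with $P_i=1$), Theorem~\ref{thm:inclusion} recasts the constraint as $\bar{\Omega}_{i,m}\subseteq H^+_{\K}(\bar{\b f}-\b f_{0,i},\,b_{0,i}-\bm{\Gamma}_i(\bar{\b b}+t\bm{\beta}))$ for every $m$. Feasibility of $(\bar{\b f},\bar{\b b})$ for \eqref{opt-cons} reads $\bm{\Phi}_{D_i}(\Kcons_i)\subseteq H^+_{\K}(\bar{\b f}-\b f_{0,i},\,b_{0,i}-\bm{\Gamma}_i\bar{\b b})$. Picking a reference point $\b g'\in\bm{\Phi}_{D_i}(\Kcons_i)\cap\bar{\Omega}_{i,m}$ (which the covering construction provides, after discarding pieces not meeting $\bm{\Phi}_{D_i}(\Kcons_i)$), for any $\b g\in\bar{\Omega}_{i,m}$ Cauchy--Schwarz gives
\[
\langle\bar{\b f}-\b f_{0,i},\b g\rangle_{\K}\ge b_{0,i}-\bm{\Gamma}_i\bar{\b b}-\|\bar{\b f}-\b f_{0,i}\|_K\diam(\bar{\Omega}_{i,m}).
\]
Using $\diam(\bar{\Omega}_{i,m})\le\eta_\infty$ and $\|\bar{\b f}-\b f_{0,i}\|_K\le c_f(\bm{\Gamma}_i\bm{\beta})$ (valid since $P_i=1$), one obtains the slack $\|\bar{\b f}-\b f_{0,i}\|_K\diam(\bar{\Omega}_{i,m})\le t(\bm{\Gamma}_i\bm{\beta})$, securing the desired inclusion.

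Since the ansatz lies in $\hatFK\times\R^B=\FK\times\R^B$ by assumption (ii) and satisfies all tightened constraints, it is admissible for \eqref{opt_cons_SOC}. The perturbation has norm $\|t\bm{\beta}\|_2=c_f\eta_\infty\|\bm{\beta}\|_2$, which is precisely the radius on which $\Lcal(\bar{\b f},\cdot)$ is $L_b$-Lipschitz by assumption (v), hence
\[
v_{\app}\le \Lcal(\bar{\b f},\bar{\b b}+t\bm{\beta})\le \Lcal(\bar{\b f},\bar{\b b})+L_b\|t\bm{\beta}\|_2=\bar v+L_b\,c_f\,\eta_\infty\,\|\bm{\beta}\|_2,
\]
which is \eqref{ineq_apriori}. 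The delicate step in this plan is the $\Omega$ part: one must justify that each $\bar{\Omega}_{i,m}$ admits a reference point in $\bm{\Phi}_{D_i}(\Kcons_i)$ so that the diameter bound can be chained with the original feasibility of $(\bar{\b f},\bar{\b b})$; this is achieved by a preliminary pruning of the covering. The SOC part is by contrast essentially algebraic thanks to the anchoring hypothesis (iv).
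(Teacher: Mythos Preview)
Your proposal is correct and follows essentially the same route as the paper's proof: perturb only the bias by $\tilde{\bm{\beta}}=c_f\eta_\infty\bm{\beta}$, verify feasibility of $(\bar{\b f},\bar{\b b}+\tilde{\bm{\beta}})$ in $C_{\app}$ by splitting into the SOC and $\Omega$ branches, then apply the Lipschitz bound. Your observation about the $\Omega$ part is well taken: the paper asserts $\bar{\Omega}_i\subseteq\bm{\Phi}_{D_i}(\Kcons_i)+\BB_K(\b 0,\eta_\infty)$ ``by the definition of $\eta_\infty$'' and then works with an arbitrary $\b x\in\Kcons_i$, $\b g\in\BB_K(\b 0,\eta_\infty)$, which implicitly presumes each $\bar{\Omega}_{i,m}$ meets $\bm{\Phi}_{D_i}(\Kcons_i)$---precisely the pruning you flag. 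So your version is arguably more careful on that point; otherwise the arguments coincide.
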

	\noindent\emph{Proof idea.} The assumption that $\FK=\hatFK$ and $\dom(\Lcal(\bar{\b f},\cdot))=\RR^B$ ensures that $\bar{\b f}\in \hatFK$, and $(\bar{\b f},\bar{\b b}+\bm{\tilde \beta})$ is hence admissible for \eqref{opt_cons_SOC} for a well-chosen $\bm{\tilde \beta}$. We then have that $\Lcal(\bar{\b f},\bar{\b b})\le \Lcal\left(\bar{\b f}_{\app}, \bar{\b b}_{\app}\right)\le \Lcal(\bar{\b f},\bar{\b b}+\bm{\tilde \beta})\le \Lcal(\bar{\b f},\bar{\b b}) + L_b \|\bm{\tilde \beta}\|_2$.
	\vspace{0.1cm}
	
	\noindent{\tb{Remarks} (Proposition~\ref{thm:apriori_bound})}:
	\begin{itemize}[labelindent=0em,leftmargin=1em,topsep=0cm,partopsep=0cm,parsep=0cm,itemsep=2mm]
 	    \item If $\Lcal$ is also strongly convex and w-l.s.c., then 
	    one can insert \eqref{ineq_apriori} to \eqref{ineq_aposteriori} with $v_{\text{relax}}=\bar{v}$, by taking as relaxation the original problem itself, making more explicit the role of $\eta_\infty$.
    	\item Relating the a priori bound to the fill distance: The constant $\eta_\infty$ can be seen as a bound on the Hausdorff distance between $\Phi(\Kcons)$ and its covering in $\FK$ as depicted in \Cref{Diagram_covering}. It is also related to the fill distance. Indeed, the fill distance of a family of points $\{\tilde{\b x}_{i,m}\}_{m\in[M]}$ to a compact set $\Kcons_i$ is defined as the largest distance from a point of $\Kcons_i$ to the samples, and we then take the maximum over $i\in[I]$:
	\begin{align}\label{eq:fill_distance}
		h&:=\max_{i\in[I]}\max_{\b x \in \Kcons_i}\min_{m\in[M]}\|\b x-\tilde{\b x}_{i,m}\|_\X.
	\end{align}
	Assume that the functions $\b x\mapsto D_iK(\cdot, \b x)\in\FK$ are $c_K$-Lipschitz. Then for a ball-covering with $\eta_{m,P_i}$ as in \eqref{def:eta_SDP}, $\eta_{i,m,P_i}\le c_K h$. Since $h=\mathcal{O}\left(M^{-1/d}\right)$ one faces a curse of dimensionality issue due to the covering procedure which makes the solution best-suited for smaller scale problems (this can be mitigated through adaptive coverings, see \Cref{sec:covering_algorithms}). This bound \eqref{ineq_apriori} on the approximation error is similar to the one of \citet{muzellec22learning} where, following \citet{rudi20finding}, they get a bound $|v_{\app}  - \bar{v}|\le C_{kSoS}M^{-s/d}$ with $s>d/2$,  though with a constant $C_{kSoS}$ that is exponential in $d$ and for a $(s+2)$-smooth Sobolev-like kernel $K$. The important difference is that our bound results from a tightening, so it is an upper bound, while their bound does not fall within the tightening/relaxation ordering. Furthermore we do not assume any extra smoothness of the kernel. Besides, \citet{muzellec22learning} do not guarantee that the constraints are satisfied for a given iterate, only asymptotically. Their analysis proceeds from scattering inequalities in approximation theory turning inequalities into equalities. They thus lose the sparsity of coefficients of the solution, which is a property induced by inequalities and  classically exploited with support vector machines (the Lagrange multiplier vanishing when the inequality is inactive).\\
	\end{itemize}
	A natural choice of relaxation of \eqref{opt-cons} is to set $\hatFK=\FK$ and to discretize \eqref{eq:pointwise-constraints} at a finite number of points, in which case the classical representer theorem holds. The first part of our next result shows that the tightened task \eqref{opt_cons_SOC}---assuming that there exists a minimizer of  \eqref{opt_cons_SOC}---can also be reduced to a finite-dimensional optimization problem. Its second part guarantees existence owing to Theorem~\ref{thm:certificate}. These results hold  under mild conditions for objectives based on a finite number of samples.
	\color{black}
	\begin{proposition}[Representer theorem for \eqref{opt_cons_SOC}, $\hatFK = \FK$]\label{prop:reproducing}
		Let $\X$, $K$, $\mathcal{I}_{\text{SOC}}$ and $\mathcal{I}_{\Omega}$ be defined according to Theorem~\ref{thm:certificate} with $\hatFK = \FK$.\footnote{Recall that (i) the covering points  in $C^i_{P_i,SOC}$ are $\left\{\tilde{\b x}_{i,m}\right\}_{m\in [M_i]}$ ($i \in \mathcal{I}_{\text{SOC}}$), (ii) the centers and the normal vectors of $\Omega_{i,m}$ ($i \in \mathcal{I}_{\Omega}$, $m\in [M_i]$) associated to $C^i_{1,\Omega}$ are $\{\b c_{i,m,j}\}_{j\in[J_{B,i,m}]}$  and $\{\b v_{i,m,j}\}_{j\in[J_{B,i,m}]}$ respectively, (iii) the affine biases are $\b f_{0,i}$ ($i\in [I]$) as in \eqref{def_mixded_constraint}.} Assume there exists a minimizer to \eqref{opt_cons_SOC}, and that for fixed samples $S=(\b x_n)_{n\in [N]} \subset \X$ the objective writes as
		\begin{align}
			\LcalS(\b f,\b b) & = L\left(\b b,\left(\left(D^0_{n,j}(\b f)(\b x_n)\right)_{j\in J_n}\right)_{n\in [N]}\right) + R\left(\|\b f\|_K\right), \label{eq:obj-function_new}
		\end{align}
		with some linear differential operators\footnote{The number of differential operators ($\#J_n$) associated to a given sample $\b x_n$ can differ for different $n$-s.} $(D^0_{n,j})_{j \in J_n} \subset O_{Q,s}$, loss $L:\R^B \times \R^{\sum_{n\in [N]}\#J_n} \rightarrow \R \cup \{\infty\}$, and non-decreasing regularizer $R: \Rnn \rightarrow \R$. Then there also exists a minimizer $\bar{\b f}_{\app}$   such that
		\begin{align}
			\bar{\b f}_{\app} & = \underbrace{\sum_{n \in [N]} \sum_{j \in J_n} a_{L,n,j} D_{n,j}^0K(\cdot,\b x_n)}_{\text{input samples $\b x_n$}} + \underbrace{\sum_{i \in \mathcal{I}_{\text{SOC}}} \sum_{p_1,p_2\in[P_i]}\sum_{m\in [M_i]} a_{S,i,p_1,p_2} D^i_{p_1,p_2}K(\cdot,\tilde{\b x}_{i,m})}_{\text{virtual points $\tilde{\b x}_{i,m}$ in $C^i_{P_i,SOC}$}} \nonumber\\
			& + \underbrace{\sum_{i \in \mathcal{I}_{\Omega}} \sum_{m\in [M_i]} \left[\sum_{j\in[J_{B,i,m}]} a_{B,i,m,j} \b c_{i,m,j} + \sum_{j\in[J_{H,i,m}]} a_{H,i,m,j} \b v_{i,m,j}\right]}_{\text{centers $\b c_{i,m,j}$ and normal vectors $\b v_{i,m,j}$ of $\Omega_{i,m}$ associated to $C^i_{1,\Omega}$}}+\underbrace{\sum_{i \in [I]} a_{0,i} \b f_{0,i}}_{\text{affine biases $\b f_{0,i}$}},
			\label{eq:f-reprT}
		\end{align}
		with some coefficients   $\{a_{L,n,j}\}_{n\in[N],\, j\in J_n}$, $\{a_{S,i,\,p_1,\,p_2}\}_{i\in \mathcal{I}_{\text{SOC}},\, p_1,\, p_2 \in [P_i]}$, $\{a_{B,i,m,j}\}_{i\in \mathcal{I}_{\Omega},\, m\in [M_i],\, j \in [J_{B,i,m}]}$,\linebreak $\{a_{H,i,m,j}\}_{i\in \mathcal{I}_{\Omega},\, m\in [M_i],\, j \in [J_{H,i,m}]}$,  $\{a_{0,i}\}_{i\in [I]} \subset \R$, where the functions $D_{n,j}^0 K(\cdot,\b x_n)$ and $D^i_{p_1,p_2}K(\cdot,\b x_{i,m})$ are defined as in \eqref{eq:DK-def}.
		
		\tb{Existence}: Furthermore $\LcalS + \chi_{C_{\app}}$ is weakly lower semi-continuous and coercive provided that: (i) $R$ satisfies $\lim_{z\rightarrow \infty}R(z)=\infty$, (ii) $L$ is ``uniformly'' coercive in $\b b$, i.e.\ $\lim_{\|\b b\|_2\rightarrow \infty} \inf_{\b y \in \BB_{\left\|\cdot\right\|_2}(\b 0,r)} L\left(\b b,\b y \right)=\infty$ for any $r>0$, (iii) $L$ is lower bounded over $C_{\app}$, (iv) the functions $L$ and $R$ are lower semi-continuous.
	\end{proposition}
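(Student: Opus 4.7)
The plan is a classical representer-theorem argument by orthogonal decomposition, adapted to the nonstandard $\Omega$-type constraints. Let $V\subset\FK$ be the (finite-dimensional) subspace spanned by all the generators appearing on the right-hand side of \eqref{eq:f-reprT}: the loss sections $D^0_{n,j}K(\cdot,\b x_n)$, the SOC sections $D^i_{p_1,p_2}K(\cdot,\tilde{\b x}_{i,m})$ at the covering points, the centers $\b c_{i,m,j}$ and normals $\b v_{i,m,j}$ of each $C^i_{1,\Omega}$, and the biases $\b f_{0,i}$. Given any minimizer $(\bar{\b f}_{\app},\bar{\b b}_{\app})$ (which exists by assumption), I write $\bar{\b f}_{\app}=\b f_V+\b f_\perp$ with $\b f_V$ the orthogonal projection onto $V$ and $\b f_\perp\in V^\perp$, and show that $(\b f_V,\bar{\b b}_{\app})$ is also admissible with no larger objective. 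By Lemma~\ref{lemma:reproducing} each $D^0_{n,j}(\b f)(\b x_n)=\langle\b f,D^0_{n,j}K(\cdot,\b x_n)\rangle_K$ depends only on $\b f_V$ since $D^0_{n,j}K(\cdot,\b x_n)\in V$, so the loss term is unchanged; Pythagoras together with monotonicity of $R$ yields $R(\|\b f_V\|_K)\le R(\|\bar{\b f}_{\app}\|_K)$. For each $C^i_{P_i,\text{SOC}}$ constraint the same reproducing identity shows the right-hand side $\b D_i(\bar{\b f}_{\app}-\b f_{0,i})(\tilde{\b x}_{i,m})$ is $\b f_V$-dependent only (using $\b f_{0,i}\in V$), while $\|\b f_V-\b f_{0,i}\|_K\le\|\bar{\b f}_{\app}-\b f_{0,i}\|_K$ by Pythagoras, so the SOC term $\eta_{i,m,P_i}\|\cdot\|_K\b I_{P_i}$ on the left can only decrease.

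The main technical obstacle is the $C^i_{1,\Omega}$ constraint, because the multipliers $\b g_j$ in the characterization \eqref{ineq-eq_thm_DM} provided by Theorem~\ref{thm:inclusion} are chosen in $\Sp(\bar{\b f}_{\app}-\b f_{0,i},\{\b c_{i,m,j}\}_j,\{\b v_{i,m,j}\}_j)$, a span that itself depends on the component $\b f_\perp$ I want to remove. I will write any admissible $\b g_j=\alpha_j(\bar{\b f}_{\app}-\b f_{0,i})+\b h_j$ with $\b h_j\in\Sp(\{\b c_{i,m,j}\}_j,\{\b v_{i,m,j}\}_j)\subset V$, and project the equality line of \eqref{ineq-eq_thm_DM} onto $V^\perp$: only the $\b f_\perp$-terms survive, giving $(\sum_j\alpha_j-1)\b f_\perp=\b 0$, hence either $\b f_\perp=\b 0$ (and we are done) or $\sum_j\alpha_j=1$. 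In the latter case I define $\tilde{\b g}_j:=\alpha_j(\b f_V-\b f_{0,i})+\b h_j\in\Sp(\b f_V-\b f_{0,i},\{\b c_{i,m,j}\}_j,\{\b v_{i,m,j}\}_j)$; a direct check gives $\sum_j\tilde{\b g}_j-\sum_j\xi_j\b v_{i,m,j}=\b f_V-\b f_{0,i}$, the inner products $\langle\tilde{\b g}_j,\b c_{i,m,j}\rangle_K$ coincide with $\langle\b g_j,\b c_{i,m,j}\rangle_K$ because $\b c_{i,m,j}\in V$, and Pythagoras once more yields $\|\tilde{\b g}_j\|_K\le\|\b g_j\|_K$, so the inequality in \eqref{ineq-eq_thm_DM} is preserved by the new multipliers. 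Thus $(\b f_V,\bar{\b b}_{\app})$ is admissible and optimal, and expanding $\b f_V$ in the generators of $V$ produces the representation \eqref{eq:f-reprT}.

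For the existence statement I verify the two functional-analytic properties of $\Lcal_S+\chi_{C_\app}$ required by Theorem~\ref{thm:certificate}. Each evaluation $\b f\mapsto D^0_{n,j}(\b f)(\b x_n)$ is a weakly continuous linear functional by Lemma~\ref{lemma:reproducing}, so composition with the l.s.c.\ loss $L$ from (iv) gives a w-l.s.c.\ term in $(\b f,\b b)$; $R(\|\cdot\|_K)$ is w-l.s.c.\ because $\|\cdot\|_K$ is w-l.s.c.\ in Hilbert spaces and $R$ is l.s.c.\ and non-decreasing by (iv), and $\chi_{C_\app}$ is w-l.s.c.\ since $C_\app$ is weakly closed (already established inside the proof of Theorem~\ref{thm:certificate}). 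For coercivity I work on a sublevel set $\{\Lcal_S\le\gamma\}\cap C_\app$: the lower bound on $L$ from (iii) forces $R(\|\b f\|_K)\le\gamma-\inf_{C_\app}L$, which by (i) bounds $\|\b f\|_K$; Cauchy-Schwarz then bounds the derivative evaluations $|D^0_{n,j}(\b f)(\b x_n)|\le\|\b f\|_K\|D^0_{n,j}K(\cdot,\b x_n)\|_K$ uniformly, placing the second argument of $L$ in a fixed ball of $\R^{\sum_n\#J_n}$, and the uniform coercivity of $L$ in $\b b$ from (ii) finally bounds $\|\b b\|_2$.
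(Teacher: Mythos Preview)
Your proof is correct and follows essentially the same orthogonal-projection strategy as the paper. The only notable difference is in the treatment of the $C^i_{1,\Omega}$ constraints: you parameterize each multiplier as $\b g_j=\alpha_j(\bar{\b f}_{\app}-\b f_{0,i})+\b h_j$ and then case-split on whether $\b f_\perp=\b 0$ or $\sum_j\alpha_j=1$, whereas the paper simply sets $\b g'_j:=\text{proj}_V(\b g_j)$ and checks \eqref{ineq-eq_thm_DM} directly for $(\b z,\bar{\b b}_{\app})$ by projecting the equality line onto $V$. Your $\tilde{\b g}_j$ is in fact exactly $\text{proj}_V(\b g_j)$ (since $\b g_j-\tilde{\b g}_j=\alpha_j\b f_\perp\in V^\perp$), so the two arguments coincide; the paper's formulation just avoids the case distinction and the need for $\alpha_j$ to be well-defined. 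The existence part matches the paper's proof.
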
	
	
	\noindent\tb{Remarks:}
	\begin{itemize}[labelindent=0em,leftmargin=1em,topsep=0cm,partopsep=0cm,parsep=0cm,itemsep=2mm]
		\item Existence for our examples: All the examples provided at the end of Section~\ref{sec:problem-formulation} satisfy the conditions of our existence result. For instance, for the JQR problem, $R$ is quadratic, $L$ is continuous, nonnegative and $\inf_{\b y \in \BB_{\left\|\cdot\right\|_2}(\b 0,r)}L(\b b, \b y)/\|\b b\|_2\xrightarrow{\left\|\b b\right\|_2\rightarrow \infty}\infty$ for any $r>0$.
		\item Representer theorem $\Rightarrow$ finite-dimensional optimization task: Using the parameterization of $\bar{\b f}_{\app}$ in \eqref{eq:f-reprT} with the reproducing property (Lemma~\ref{lemma:reproducing}), the finite-dimensional optimization problem over the coefficients of \eqref{eq:f-reprT} immediately follows. Such a reformulation was exemplified by \citet{aubin20hard} for $Q=P=1$. \vspace{0.1cm}
	\end{itemize}
	
	In the next section, we present the ``soap bubble'' algorithm which is capable of achieving convergence \emph{without} having to refine the covering everywhere.
	
	\section{Adaptive Covering Algorithm of Compact Sets in RKHSs}\label{sec:covering_algorithms}
	In this section we present an adaptive approach for the solution of \eqref{opt-cons}, the soap bubble algorithm which provides a non-uniform covering relying on the objective $\Lcal$. The rationale behind this algorithm is to avoid (i) applying a uniformly refined covering and (ii) tightening \eqref{opt-cons} independently of $\Lcal$.  Instead, the soap bubble algorithm starts from a coarse covering (which allows faster computation), and then it gradually refines the covering where the constraints are saturated. It is moreover well-suited for ``warm starting``, i.e.\ initializing at the previous iterate, when performing the iterations.
	
	Throughout this section we assume to have access to some covering oracles: Alg.~\ref{alg:covering} and Alg.~\ref{alg:covering_Omega}. 	The first algorithm operates in $\X$, and for any compact set $\Kcons\subseteq \X$ and radius $\delta_{\text{max}}$ it outputs a covering of $\Kcons$ with balls of radius at most $\delta_{\text{max}}$. The second one is performed over
	$\FK$, and  for any compact set $\Kcons\subseteq \X$ and diameter $d_{\text{max}}$ it outputs a covering of $\bm{\Phi}_D(\Kcons)$ with sets $\bar{\Omega}_m$ of diameter at most $d_{\text{max}}$ where $\Omega_m$-s are of the form \eqref{eq:Omega_m}.

	The soap bubble algorithm iterates between solving a tightened optimization problem given a covering of $\bm{\Phi}_D(\Kcons)$  and refining the covering by a factor of $\gamma$ for the covering subsets in $\FK$ which saturate the constraints. The resulting algorithm (Alg.~\ref{alg:soap_Omega}) is instantiated in the framework of Theorem~\ref{thm:inclusion} with sets $\bar{\Omega}_m$ and using the covering oracle  Alg.~\ref{alg:covering_Omega}. The method writes as Alg.~\ref{alg:soap} in the framework of Theorem~\ref{thm:SDP} with ball-coverings and using the covering oracle  Alg.~\ref{alg:covering}.
	
	\noindent \tb{Remark:} For $P=I=1$ and ball covering $\bar{\Omega}^{(k)}_m=\BB_{K}\left(DK\left(\cdot,\tilde{\b x}_{m}^{(k)}\right),\eta_{m,1}^{(k)}\right)$, Alg.~\ref{alg:soap_Omega} and Alg.~\ref{alg:soap} coincide. In this case, saturating the constraints at the $k^{th}$ iteration corresponds to being tangent to the affine hyperplane $H_K\left(\b f^{(k)} - \b f_0, b_0 - \bm{\Gamma}\b  b^{(k)}\right)$. For an illustration, see Fig.~\ref{Diagram_SoapBubble}.
	
	\begin{algorithm}
		\caption{Ball covering in $\X$ (shortly Cover)}
		\label{alg:covering} 
		\begin{algorithmic}
			\STATE \tb{Input:} Compact set $\Kcons\subseteq \X$,  maximal covering radius $\delta_{\text{max}}>0$, norm $\left\|\cdot\right\|_\X$.
			\STATE \tb{Output:} Covering $\left(\tilde{\b x}_m, \b \delta_m\right)_{m \in [M]}$ s.t.\ $\Kcons \subseteq \cup_{m\in [M]}\BB_{\X}\left( \tilde{\b x}_m,\delta_m\right)$ and $\max_{m\in [M]} \delta_m\le \delta_{\text{max}}$.\footnote{We assume that superfluous covering sets, in other words for which $\BB_{\X}\left( \b x_m,\delta_m\right)\cap \Kcons=\emptyset$ are not generated (a requirement for the proof of Theorem~\ref{thm:soap_bubble}).}
		\end{algorithmic}
	\end{algorithm}
	\begin{algorithm}
		\caption{$\Omega$-covering in $\FK$ (shortly $\Omega$-Cover)}
		\label{alg:covering_Omega} 
		\begin{algorithmic}
			\STATE \tb{Input:} Compact set $\Kcons\subseteq \X$, kernel $K$, differential operator $D$, maximal covering diameter $d_{\text{max}}>0$.
			\STATE \tb{Output:} Covering $\left(\bar{\Omega}_m\right)_{m \in [M]}$ s.t.\ $\bm{\Phi}_D(\Kcons) \subseteq \bar{\Omega}:=\cup_{m\in [M]}\bar{\Omega}_m$ and $\max_{m\in [M]} \diam\left(\bar{\Omega}_m\right)\le d_{\text{max}}$, with $\Omega_m$ of the form \eqref{eq:Omega_m}.
		\end{algorithmic}
	\end{algorithm}
	
	\begin{figure}
		\centering
		\begin{minipage}[c]{.48\linewidth}
			\begin{center}			
				\subfloat[][]{\resizebox{\linewidth}{!}{\label{Diagram_SoapBubble_preBurst}\includegraphics[page={3},keepaspectratio, width=\textwidth]{FIG/Soap_bubble.pdf}}}
			\end{center}
		\end{minipage}
		\begin{minipage}[c]{.48\linewidth}
			\begin{center}			
				\subfloat[][]{\resizebox{\linewidth}{!}{\label{Diagram_SoapBubble_postBurst}\includegraphics[page={4},keepaspectratio, width=\textwidth]{FIG/Soap_bubble.pdf}}}
			\end{center}
		\end{minipage}
		\caption{Illustration of one iteration of the soap bubble algorithm Alg.~\ref{alg:soap_Omega} with ball covering (corresponding to Alg.~\ref{alg:soap} with $P=I=1$).  \protect\subref{Diagram_SoapBubble_preBurst}: After computing the optimal $\left(\b f^{(k)},\b b^{(k)}\right)$ for a given covering at step $k$, the elements  of the covering that are tangent to the hyperplane burst (red). The other elements (blue) are kept for the next iteration. \protect\subref{Diagram_SoapBubble_postBurst}: The elements that are tangent are replaced by a new covering of the subset of $\Phi_{D}(\Kcons)$ that they covered. This covering is chosen such that its radii are smaller by  at least a factor $\gamma$ than the previous radii. The covering at step $k+1$ of $\Phi_{D}(\Kcons)$ is formed by combining the elements untouched at step $k$ with the new elements. These new constraints define a new optimization problem leading to $\left(\b f^{(k+1)},\b b^{(k+1)}\right)$.}
		\label{Diagram_SoapBubble}
	\end{figure}
	\begin{algorithm}
		\caption{Soap Bubble Algorithm with $\Omega$-covering $(I=P=1)$}
		\label{alg:soap_Omega} 
		\begin{algorithmic}
			\STATE \tb{Input:} Compact set $\Kcons\subseteq \X$, kernel $K$, closed subspace $\hatFK \subseteq \FK$, objective $\Lcal$, bias $\b f_{0}$ and $b_{0}$, linear transformation $\bm \Gamma\in\R^{1\times B}$, differential operator $D$, refinement rate $\gamma \in (0,1)$, number of iterations $k_{\text{max}}\in \N$, maximal initial covering diameter $d^{(0)}_{\text{max}}>0$.
			\STATE \tb{Initialization:} $\left(\bar{\Omega}^{(0)}_m\right)_{m\in \left[M^{(0)}\right]}$ := $\Omega$-Cover$\left(\bm{\Phi}_D(\Kcons),d_{\text{max}}^{(0)}\right)$. 
			\FOR{$k=0$ \TO $k_{\text{max}}$}
			\STATE Solve the tightening with $\Omega$-covering $\bar{\Omega}^{(k)}=\bigcup_{m\in \left[M^{(k)}\right]}\bar{\Omega}^{(k)}_m$: 
			
			\begin{align}
				\left(\b f^{(k)},\,\b b^{(k)}\right) &= \argmin_{\b f \in \hatFK, \b b\in \R^B}\Lcal(\b f,\, \b b), \label{opt:soap-Omega}\tag{$\Psc\left(\bar{\Omega}^{(k)}\right)$}\\
				& \quad \text{ s.t. }  \bar{\Omega}^{(k)}_m \subseteq H^{+}_{\K}(\b f - \b f_0,b_{0}-\bm{\Gamma}\b b)\,\, \forall\,m\in \left[M^{(k)}\right].\nonumber
			\end{align} 
			\item Find the indices $\mathcal{I}^{(k)}\subseteq\left[M^{(k)}\right]$ for which the sets intersect the hyperplane:
			\STATE 	
			\vspace{-0.4cm}
			\begin{align*}
				\mathcal{I}^{(k)} & := \left\{m\in \left[M^{(k)}\right]\,:\, \bar{\Omega}^{(k)}_m \cap H_{\K}\left(\b f^{(k)} - \b f_0,b_{0}-\bm{\Gamma}\b b^{(k)}\right) \neq\emptyset\right\}.
			\end{align*}
			\vspace{-0.7cm}
			\STATE The associated $\left(\bar{\Omega}^{(k)}_m\right)_{m\in \mathcal{I}^{(k)}}$ burst and give rise to a finer covering:
			\FOR{$j\in \mathcal{I}^{(k)}$}
			\STATE New $\Omega$-covering: $\left(\bar{\Omega}^{(k+1)}_{j,m}\right)_{m\in\left[M_{j}^{(k+1)}\right]}:=\Omega$-Cover$\left(\bm{\Phi}_D(\Kcons)\cap \bar{\Omega}^{(k)}_j, \gamma\diam\left(\bar{\Omega}^{(k)}_j\right)\right)$
			\ENDFOR
			
			$\left(\bar{\Omega}^{(k+1)}_{m}\right)_{m\in\left[M^{(k+1)}\right]}=\underbrace{\left(\bar{\Omega}^{(k)}_m\right)_{m\in\left[M^{(k)}\right]\setminus \mathcal{I}^{(k)}}}_{\text{non-burst coverings}} \cup \underbrace{\bigcup_{j\in\mathcal{I}^{(k)}} \left(\bar{\Omega}^{(k+1)}_{j,m}\right)_{m\in\left[M_{j}^{(k+1)}\right]}}_{\text{burst $\Rightarrow$ refined coverings}}$
			\ENDFOR
			\STATE \tb{Final estimate:} $\left(\b f^{(k_{\text{max}}+1)}, \b b^{(k_{\text{max}}+1)}\right)$ := solution of $\Psc\left(\bar{\Omega}^{(k_{\text{max}}+1)}\right)$.
		\end{algorithmic}
	\end{algorithm}
	\begin{algorithm}
		\caption{Soap Bubble Algorithm with ball coverings $I\geq 1$, $P_i\geq 1$}
		\label{alg:soap} 
		\begin{algorithmic}
			\STATE \tb{Input:} Compact sets $\{\Kcons_i\}_{i\in [I]}$, norm $\left\|\cdot\right\|_\X$, kernel $K$, closed subspace $\hatFK \subseteq \FK$, objective $\Lcal$, biases $\{\b f_{0,i}\}_{i\in [I]}$ and $\{\b b_{0,i}\}_{i\in [I]}$, linear transformations $\{\bm{\Gamma}_i\}_{i\in [I]}$, differential operators $\{\b D_i\}_{i\in [I]}$, refinement rate $\gamma \in (0,1)$, number of iterations $k_{\text{max}}\in \N$, maximal initial covering radii $\left\{\delta_{i,\text{max}}^{(0)}\right\}_{i\in [I]}$.
			\STATE \tb{Initialization:} $\left(\tilde{\b x}_{i,m}^{(0)},\delta_{i,m}^{(0)}\right)_{m\in \left[M_i^{(0)}\right]}$ := Cover$\left(\Kcons_i,\delta_{i,\text{max}}^{(0)}, \left\|\cdot\right\|_\X\right)$ for all $ i\in [I]$. 
			\FOR{$k=0$ \TO $k_{\text{max}}$}
			\STATE Compute buffers using \eqref{eq:eta_SDP_explicit}: 
			\begin{align*}
				\bm \eta^{(k)}_i :=\left(\eta_{i,m,P_i}^{(k)}\right)_{m\in \left[M_i^{(k)}\right]} = \left(\eta\left(\tilde{\b x}_{i,m}^{(k)},\delta_{i,m}^{(k)};\b D_i\right)\right)_{m\in \left[M_i^{(k)}\right]} \quad  \forall i\in [I].
			\end{align*}
			\STATE Solve the tightening with buffers $\left\{\bm \eta_i^{(k)}\right\}_{i\in [I]}$ and anchors $\left\{\tilde{\b x}_{i,m}^{(k)}\right\}_{i\in [I],\, m\in \left[M_i^{(k)}\right]}$: 
			\vspace*{-0.4cm}
			\begin{align}
				\left(\b f^{(k)},\,\b b^{(k)}\right) &= \argmin_{\b f \in \hatFK, \b b\in\R^B}\Lcal(\b f,\, \b b),\tag{$\Psc^{(k)}$}\\
				& \quad \text{ s.t. }  \eta^{(k)}_{i,m,P_i}\|\b f-\b f_0\|_K \b I_{P_i} \preccurlyeq \b D_i\left(\b f-\b f_{0,i} \right)\left(\tilde{\b x}^{(k)}_{i,m}\right) + \diag\left(\bm{\Gamma}_i\b b- \b b_{0,i}\right)\,\, \nonumber\\
				&\hspace{4.9cm}\forall i\in [I], \forall m\in \left[M_i^{(k)}\right]. \nonumber
			\end{align} 
			\vspace*{-0.4cm}
			\item Find the indices $\mathcal{I}_i^{(k)}\subseteq\left[M_i^{(k)}\right]$ for which the constraints are saturated; the associated $\BB_{K}\left(DK\left(\cdot,\tilde{\b x}_{i,m}^{(k)}\right),\eta_{i,m,P_i}^{(k)}\right)$ balls burst and give rise  to a finer covering:
			\FOR{$i \in [I]$}
			\STATE 	
			\vspace{-0.5cm}
			\begin{align*}
				\mathcal{I}_i^{(k)} & := \left\{m\in \left[M_i^{(k)}\right]\,:\, \eta^{(k)}_{i,m,P_i}\left\|\b f^{(k)}-\b f_{0,i}\right\|_K \b I_{P_i} = \b D_i\left(\b f^{(k)}-\b f_{0,i}\right)\left(\tilde{\b x}^{(k)}_{i,m}\right)\right.\\
				&\hspace{7.6cm}\left. + \diag\left(\bm{\Gamma}_i\b b^{(k)}- \b b_{0,i}\right)\right\}.
			\end{align*}
			\vspace{-0.7cm}
			\STATE Refine the covering on $\mathcal{I}_i^{(k)}$:
			\FOR{$j\in \mathcal{I}_i^{(k)}$}
			\STATE $\delta_{i,j,\text{max}}^{(k+1)}:=$ largest solution of the equation over $\delta$:
			$\eta\left(\tilde{\b x}_{i,j}^{(k)},\delta;\b D_i\right) = \gamma \eta_{i}^{(k)}$ with $\eta$ defined in \eqref{eq:eta-function}. Implied covering in $\X$:
			\begin{align*}
			\left(\tilde{\b x}^{(k+1)}_{i,j,m},\delta^{(k+1)}_{i,j}\right)_{m\in\left[M_{i,j}^{(k+1)}\right]} := \text{Cover}\left(\Kcons\cap\BB_\X\left(\tilde{\b x}^{(k)}_{i,j}, \delta^{(k)}_{i,j}\right),\delta_{i,j,\text{max}}^{(k+1)}, \left\|\cdot\right\|_\X\right)    
			\end{align*}
    		\ENDFOR
    		\begin{align*}
    		    \left(\tilde{\b x}^{(k+1)}_{i,m},\delta_{i,m}^{(k+1)}\right)_{m\in\left[M_i^{(k+1)}\right]} &= \underbrace{\left(\tilde{\b x}^{(k)}_{i,m},\delta_{i,m}^{(k)}\right)_{m\in\left[M_i^{(k)}\right]\setminus \mathcal{I}_i^{(k)}}}_{\text{non-burst coverings}} \cup \underbrace{\bigcup_{j\in\mathcal{I}_i^{(k)}} \left(\tilde{\b x}^{(k+1)}_{i,j,m},\delta^{(k+1)}_{i,j}\right)_{m\in\left[M_{i,j}^{(k+1)}\right]}}_{\text{burst $\Rightarrow$ refined coverings}}
    		\end{align*}			
			\vspace*{-0.9cm}
			\ENDFOR			
			\ENDFOR
			\STATE \tb{Final estimate:} $\left(\b f^{(k_{\text{max}}+1)}, \b b^{(k_{\text{max}}+1)}\right)$ := solution of $\Psc^{(k_{\text{max}}+1)}$
		\end{algorithmic}
	\end{algorithm}
	Our next result shows the convergence of the soap bubble algorithm when $I=P=1$ for general covering sets of the form \eqref{eq:Omega}-\eqref{eq:Omega_m}.
	\begin{theorem}[Convergence of Alg.~\ref{alg:soap_Omega}]~ \label{thm:soap_bubble}
	Let us consider Alg.~\ref{alg:soap_Omega} relying on $\Omega$-coverings (Alg.~\ref{alg:covering_Omega}) with elements defined as in \eqref{eq:Omega}-\eqref{eq:Omega_m}, in other words, with balls and half-spaces. Let the covering of $\Omega$ generated at the $k^{th}$ iteration be denoted by $\bar{\Omega}^{(k)}$ and the associated tightened optimization problem  by $\Psc\left(\bar{\Omega}^{(k)}\right)$ for $k\in \N$. Assume that $k_{max}=\infty$ in Alg.~\ref{alg:soap_Omega}.
		\begin{enumerate}[labelindent=0em,leftmargin=1em,topsep=0cm,partopsep=0cm,parsep=0cm,itemsep=2mm]
			\item Limit covering: 
			If all the iterates $\left(\b f^{(k)},\b b^{(k)}\right)_{k\in\N}$ of Alg.~\ref{alg:soap_Omega} exist, then the corresponding coverings $\left(\bar{\Omega}^{(k)}\right)_{k\in\N}$ converge in Hausdorff distance to a limit set $\bar{\Omega}^{(\infty)}$ containing $\bm{\Phi}_D(\Kcons)$. Moreover, if $\hatFK=\FK$, the solutions of $\Psc\left(\bar{\Omega}^{(\infty)}\right)$ also solve the original problem.
			\item Convergence of $\left(\b f^{(k)},\b b^{(k)}\right)_{k\in\N}$: 
			Assume that (i) $\Lcal$ is weakly lower semi-continuous and coercive over $\FK\times \R^B$, (ii) there exists an admissible pair $\left(\hat{\b f}, \hat{\b b}\right)$ for $\Psc\left(\bar{\Omega}^{(0)}\right)$, (iii) $\dom(\Lcal(\b f,\cdot))=\RR^B$ and $\Lcal(\b f,\cdot)$ is continuous for all $\b f$ in its domain, and (iv) $\bm{\Gamma}\neq \b 0$ in \eqref{def:mixed-constraint:P=1}. Then the sequence of iterates $\left(\b f^{(k)},\b b^{(k)}\right)_{k\in\N}$ exists and is bounded in $\FK\times\R^B$. Moreover, if $\hatFK=\FK$, every weakly-converging sub-sequence converges to a solution of the original problem. If $\left(\bar{\b f},\bar{\b b}\right)$ is unique, then the iterates $\left(\b f^{(k)},\b b^{(k)}\right)_{k\in\N}$ converge weakly to $\left(\bar{\b f},\bar{\b b}\right)$.
		\end{enumerate}
	\end{theorem}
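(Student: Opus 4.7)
\noindent\emph{Proof idea.} The proof follows the two parts of the statement. For part 1, Alg.~\ref{alg:soap_Omega} constructs a nested sequence $\bar{\Omega}^{(k+1)}\subseteq\bar{\Omega}^{(k)}$: non-burst elements persist, while each burst element $\bar{\Omega}^{(k)}_j$ with $j\in\mathcal{I}^{(k)}$ is replaced by a finer covering of $\bm{\Phi}_D(\Kcons)\cap\bar{\Omega}^{(k)}_j$ of diameter at most $\gamma\cdot\diam(\bar{\Omega}^{(k)}_j)$. Each $\bar{\Omega}^{(k)}$ contains $\bm{\Phi}_D(\Kcons)$, hence so does the monotone limit $\bar{\Omega}^{(\infty)}:=\bigcap_{k\in\N}\bar{\Omega}^{(k)}$. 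Every covering element either stabilizes after finitely many steps (it is never burst beyond some step) or is burst infinitely often, in which case its diameter contracts geometrically as $\gamma^{j}$ and collapses to a single point of $\bm{\Phi}_D(\Kcons)$. Combining the two cases yields the Hausdorff convergence $\bar{\Omega}^{(k)}\to\bar{\Omega}^{(\infty)}$.

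For part 2, existence of each $(\b f^{(k)},\b b^{(k)})$ follows from Theorem~\ref{thm:certificate} applied to $\Psc(\bar{\Omega}^{(k)})$: the admissible pair $(\hat{\b f},\hat{\b b})$ for $\Psc(\bar{\Omega}^{(0)})$ remains admissible for every $\Psc(\bar{\Omega}^{(k)})$ by the nesting above, and coercivity combined with the uniform bound $\Lcal(\b f^{(k)},\b b^{(k)})\le\Lcal(\hat{\b f},\hat{\b b})$ keeps the iterates bounded in $\FK\times\R^B$. Along a weakly convergent subsequence $(\b f^{(k_j)},\b b^{(k_j)})\rightharpoonup(\b f^*,\b b^*)$, feasibility transfers to the limit: since each tightening is stricter than the original problem, every iterate belongs to the original feasibility set, and its convexity and closure yield weak closure, so $(\b f^*,\b b^*)\in C_1$. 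Weak lower semi-continuity of $\Lcal$ gives $\Lcal(\b f^*,\b b^*)\le\liminf_j v^{(k_j)}=v^*$, where $v^*$ denotes the limit of the non-increasing sequence $v^{(k)}:=\Lcal(\b f^{(k)},\b b^{(k)})$, with $v^*\ge\bar{v}$ by optimality of $\bar{v}$ for the original problem.

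The main obstacle is to establish $v^*\le\bar{v}$: combined with $\Lcal(\b f^*,\b b^*)\ge\bar{v}$ this forces $\Lcal(\b f^*,\b b^*)=\bar{v}$, i.e.\ the optimality of $(\b f^*,\b b^*)$. Using the Slater-type point $(\hat{\b f},\hat{\b b})$ together with assumption (iv) providing $\bm{\beta}\in\R^B$ with $\bm{\Gamma}\bm{\beta}>0$, I would construct admissible approximations of $(\bar{\b f},\bar{\b b})$ for $\Psc(\bar{\Omega}^{(k)})$ via convex combinations with $(\hat{\b f},\hat{\b b})$ and shifts along $\bm{\beta}$; the required convex weights and shift magnitudes are controlled by the support-function gap $\mu(k):=(b_0-\bm{\Gamma}\bar{\b b})-\inf_{\b g\in\bar{\Omega}^{(k)}}\langle\bar{\b f}-\b f_0,\b g\rangle$. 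The key technical step, where the soap-bubble refinement rule is essential, is to show $\mu(k)\to 0$: the Hausdorff convergence from part 1 passes $\mu(k)$ to its value at $\bar{\Omega}^{(\infty)}$, and the observation that elements which stopped being burst are strictly interior to the iterate's half-space (a property inherited in the weak limit) ensures that non-refined components do not spoil the support function in the direction $\bar{\b f}-\b f_0$. Continuity of $\Lcal(\bar{\b f},\cdot)$ on $\R^B$ (assumption (iii)) then delivers $v^{(k)}\to\bar{v}$, so $v^*=\bar{v}$. Applying the same machinery to the limit problem $\Psc(\bar{\Omega}^{(\infty)})$ via $\bm{\Phi}_D(\Kcons)\subseteq\bar{\Omega}^{(\infty)}$ yields the solution claim of part 1. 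Finally, if the original solution is unique, every weakly convergent sub-sequence has the same limit, so a standard sub-sequence argument forces the whole sequence of iterates to converge weakly to it.
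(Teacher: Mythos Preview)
Your central structural claim --- that the algorithm produces a nested sequence $\bar{\Omega}^{(k+1)}\subseteq\bar{\Omega}^{(k)}$ --- is not justified by Alg.~\ref{alg:soap_Omega}, and in general it is false. When a set $\bar{\Omega}^{(k)}_j$ bursts, the oracle $\Omega$-Cover is asked to cover $\bm{\Phi}_D(\Kcons)\cap\bar{\Omega}^{(k)}_j$ with pieces of diameter at most $\gamma\,\diam(\bar{\Omega}^{(k)}_j)$; nothing forces those new pieces to lie inside $\bar{\Omega}^{(k)}_j$. A new ball centered near the boundary of $\bar{\Omega}^{(k)}_j$ will typically protrude. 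This false nesting is used twice in your argument: (i) in Part~1, to define $\bar{\Omega}^{(\infty)}=\bigcap_k\bar{\Omega}^{(k)}$ and to obtain Hausdorff convergence by monotonicity; (ii) in Part~2, to conclude that $(\hat{\b f},\hat{\b b})$ remains admissible for every $\Psc(\bar{\Omega}^{(k)})$ and hence that $\Lcal(\b f^{(k)},\b b^{(k)})\le\Lcal(\hat{\b f},\hat{\b b})$. Both conclusions collapse once nesting is removed.

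The paper circumvents this by never relying on monotonicity. For Part~1 it splits each $\bar{\Omega}^{(k)}$ into a \emph{persistent} part $\bar{\Omega}^{(k)}_{\text{pers}}$ (elements that will never burst again) and the complement; it shows that the maximal diameter among non-persistent pieces tends to $0$, which yields Hausdorff convergence to $\bm{\Phi}_D(\Kcons)\cup\bar{\Omega}^{(\infty)}_{\text{pers}}$. The key equivalence with the original problem comes from the observation that persistent constraints are never active and can be dropped, so the limit problem is simultaneously a tightening (via $\bar{\Omega}^{(\infty)}$) and a relaxation (via the non-persistent part, which is contained in $\bm{\Phi}_D(\Kcons)$). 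For Part~2, since $(\hat{\b f},\hat{\b b})$ is \emph{not} automatically feasible for $\Psc(\bar{\Omega}^{(k)})$, the paper manufactures a feasible point by shifting the bias: with $d_k:=d_H(\bar{\Omega}^{(0)},\bar{\Omega}^{(k)})$ and $\bm{\Gamma}\neq\b 0$, the pair $(\hat{\b f},\hat{\b b}+d_k\|\hat{\b f}-\b f_0\|_K\,\bm{\Gamma}^\top/\|\bm{\Gamma}\|_2^2)$ is admissible for $\Psc(\bar{\Omega}^{(k)})$ by a Cauchy--Schwarz argument; boundedness of $(d_k)_k$ (from Part~1) then gives the uniform objective bound. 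The same bias-shift trick, now with $\epsilon_k:=d_H(\bar{\Omega}^{(\infty)},\bar{\Omega}^{(k)})\to 0$, turns any $(\b f,\b b)$ feasible for $\Psc(\bar{\Omega}^{(\infty)})$ into one feasible for $\Psc(\bar{\Omega}^{(k)})$ and, together with continuity of $\Lcal(\b f,\cdot)$, delivers $v^{(k)}\to\bar v$ directly --- without the support-function quantity $\mu(k)$ or the ``strictly interior'' heuristic you sketch.
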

	
	\noindent\tb{Remark:} The assumptions (i)-(iv) are stronger than that of Theorem \ref{thm:certificate} as instead of having a single problem \eqref{opt_cons_SOC} to solve, we consider a sequence of tasks  $\eqref{opt:soap-Omega}_{k\in\N}$. This requires additional regularity on the objective.
	
	\section{Numerical Experiments} \label{sec:numerical-demos}
	In this section we demonstrate the efficiency of the proposed tightened schemes.\footnote{The code replicating our numerical experiments is available at \url{https://github.com/PCAubin/Handling-Hard-Affine-SDP-Shape-Constraints-in-RKHSs}.} Particularly, we designed the following experiments:
	\begin{itemize}[labelindent=0em,leftmargin=1em,topsep=0.1cm,partopsep=0cm,parsep=0cm,itemsep=2mm]
		\item \tb{Experiment-1}: We show that the soap bubble algorithm (Section~\ref{sec:covering_algorithms}) can be more efficient both in terms of accuracy and of computation time when compared to non-adaptive techniques (Section~\ref{sec:optimization}). We illustrate this result on a 1D-shape optimization problem ($Q=1$) with a single constraint over a large domain. This simple, synthetic example serves the purpose of visualization and better understanding of the methods, thanks to its analytical solution.
		\item \tb{Experiment-2}: In our second application we tackle a linear-quadratic optimal control problem with state constraints. This is a vector-valued example ($Q>1$) where we show how the proposed hard shape-constrained technique enables one to guarantee obstacle avoidance when piloting an underwater vehicle, in contrast to classical discretization-based approaches.
		\item \tb{Experiment-3}: The third experiment is about estimating the end pose of a robotic arm based on the length of the links and the angle of the joints, using noisy observations. This is a vector-valued example ($Q=3$) with constraints on the first derivatives, which also goes beyond the state-of-the-art in terms of the input dimension considered ($d=6$), showing the applicability of our method in moderate dimensions.
		\item \tb{Experiment-4}: Our fourth example pertains to econometrics, the goal being to learn production functions based on only a few samples. This example underlines how shape constraints interpreted as side information can empirically improve generalization properties. In this case the function to be determined is real-valued ($Q=1$) with several shape constraints including an SDP one (joint convexity, $P_i>1$). 
	\end{itemize}
	
	\subsection{Experiment-1: Soap Bubble Algorithm}\label{sec:app:catenary}
	
	\begin{figure*}
		\centering
		\begin{minipage}[c]{.48\linewidth}
			\begin{center}			
				\subfloat[][]{\resizebox{\linewidth}{!}{\label{Catenary_XvsY}\includegraphics{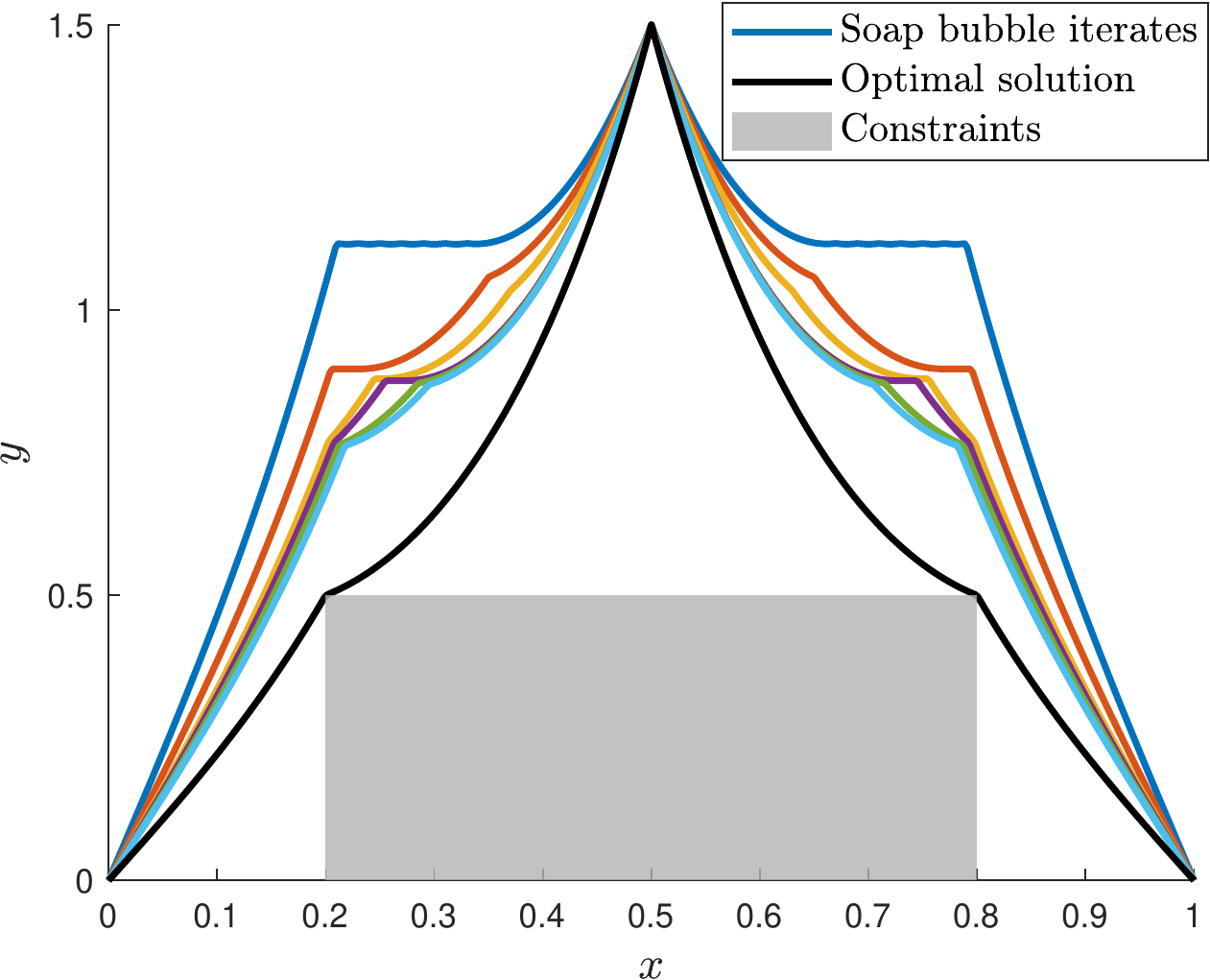}}}
			\end{center}
		\end{minipage}
		\begin{minipage}[c]{.48\linewidth}
			\begin{center}			
				\subfloat[][]{\resizebox{\linewidth}{!}{\label{Catenary_MvsDiffVal}\includegraphics{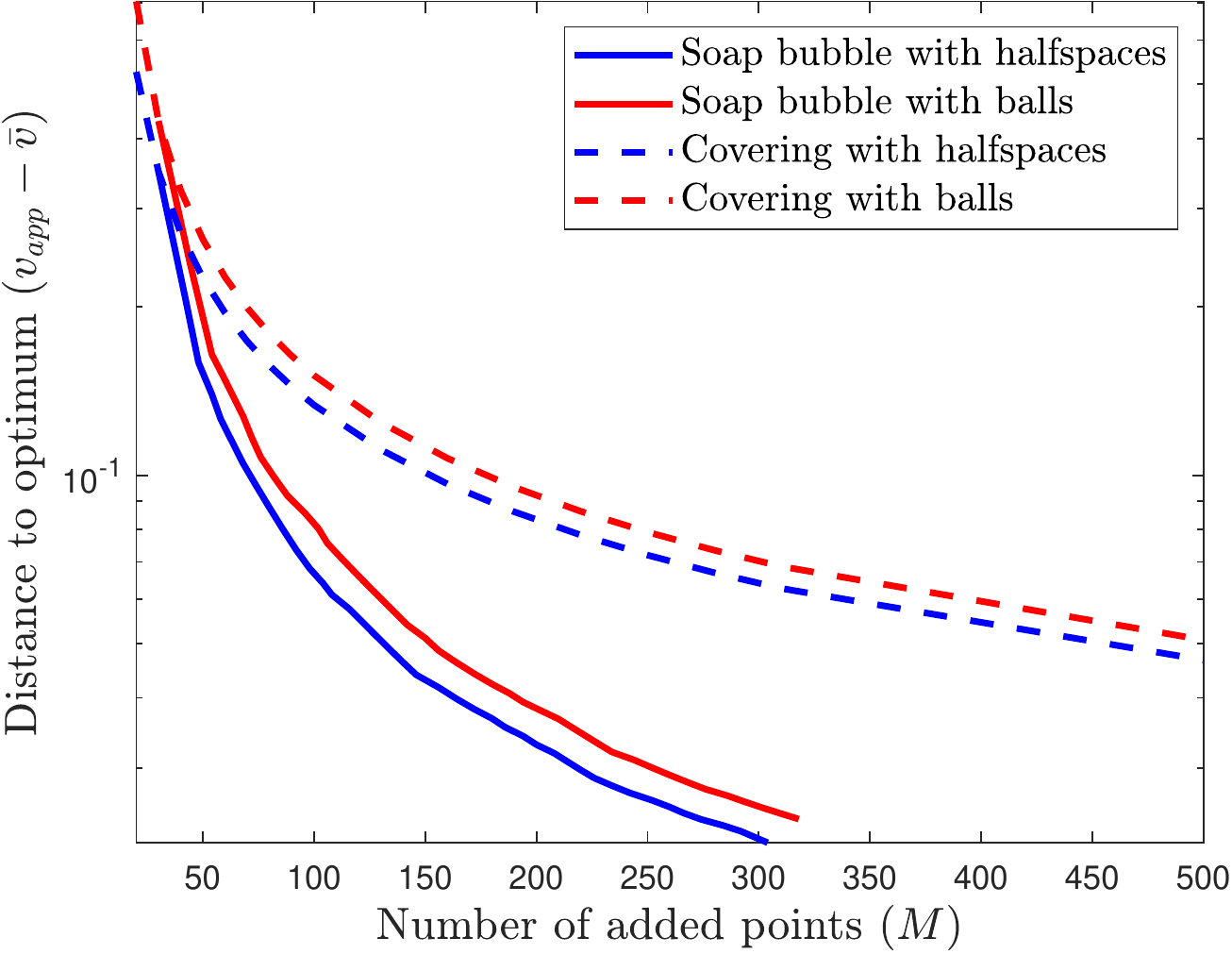}}}
			\end{center}
		\end{minipage}
		
		\begin{minipage}[c]{.48\linewidth}
			\begin{center}			
				\subfloat[][]{\resizebox{\linewidth}{!}{\label{Catenary_XvsIter}\includegraphics{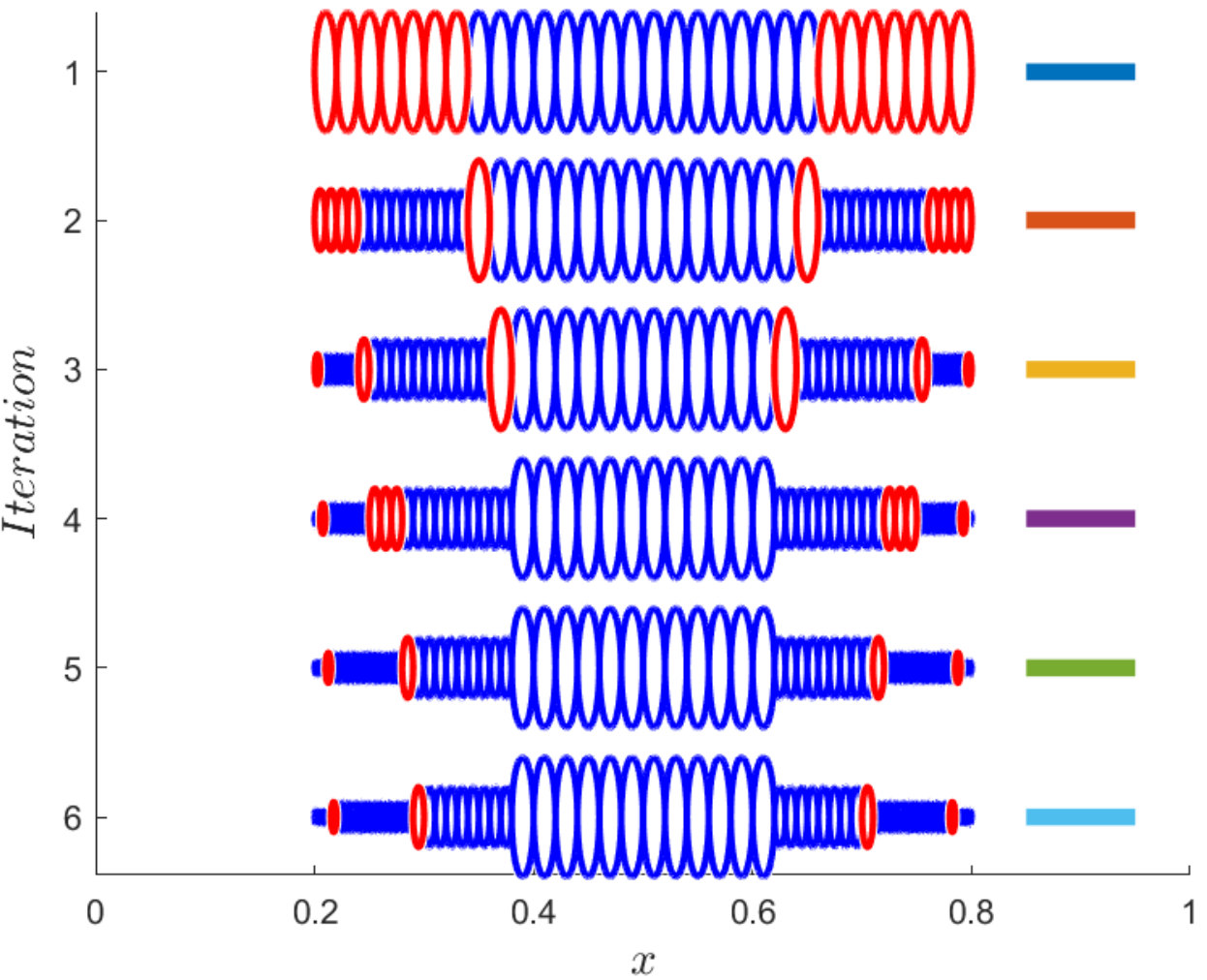}}}
			\end{center}				
		\end{minipage}
		\begin{minipage}[c]{.48\linewidth}
			\begin{center}			
				\subfloat[][]{\resizebox{\linewidth}{!}{\label{Catenary_DiffValvsT}\includegraphics{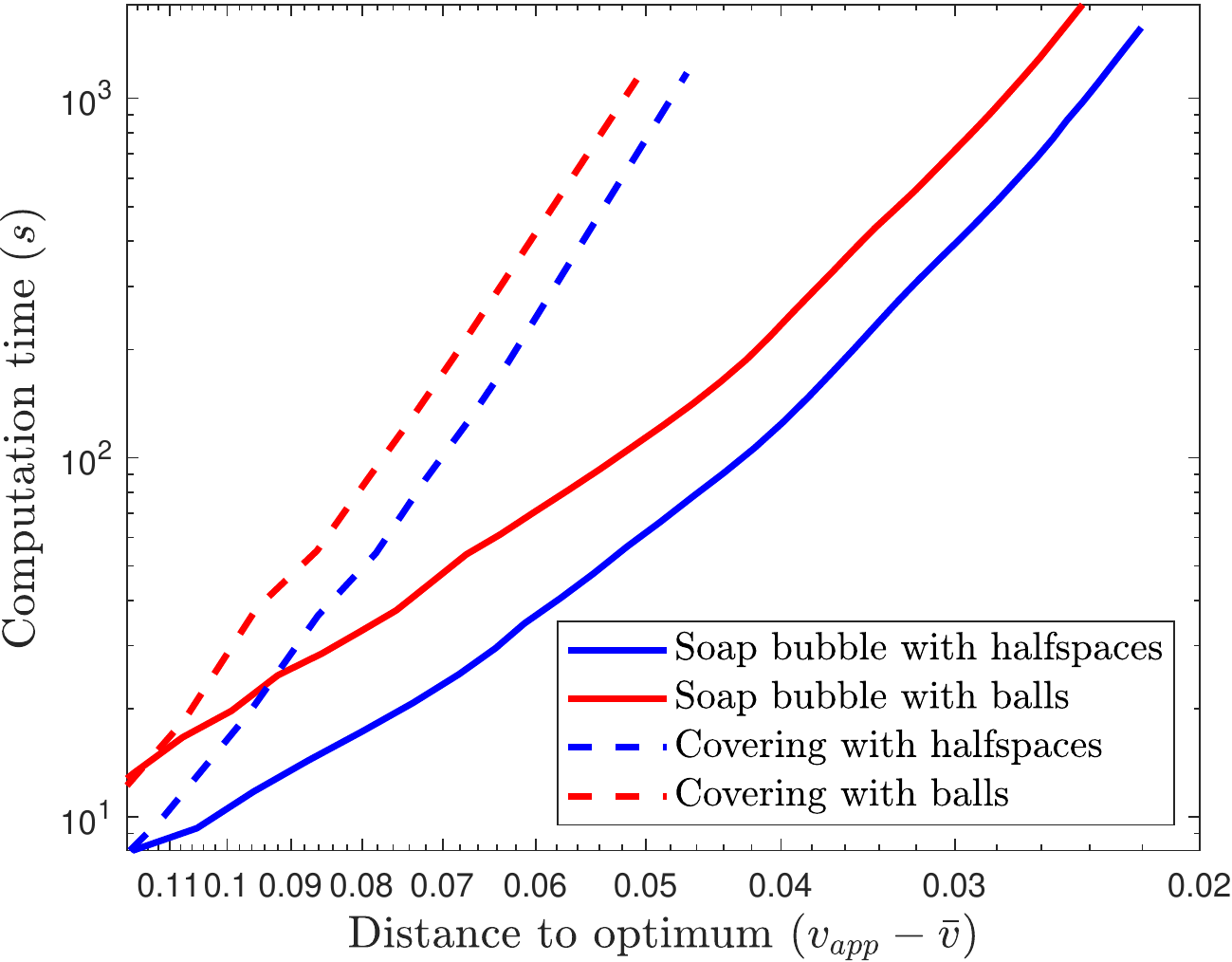}}}
			\end{center}				
		\end{minipage}
		\caption{Illustration of the soap bubble algorithm to optimize the shape of a constrained catenary. Compared techniques: (uniform) covering with balls, covering with balls and half-spaces, soap bubble covering with balls, soap bubble covering with balls and half-spaces. \protect\subref{Catenary_XvsY}: Shape constraint on $[0.2,0.8]$ (grey), optimal solution (black), first 6 iterates of the estimates using the soap bubble technique with balls (coloured curves, first: blue, sixth: cyan). 		 \protect\subref{Catenary_MvsDiffVal}: Performance  as a function of the number of elements in the covering ($M$). \protect\subref{Catenary_XvsIter}: Illustration of bursting in \protect\subref{Catenary_XvsY}; kept balls (blue); burst balls (red).  \protect\subref{Catenary_DiffValvsT}: Computational time as a function of accuracy.}
		\label{fig:catenary}
	\end{figure*}
	\noindent In our \tb{first experiment} we demonstrate the efficiency of the soap bubble algorithm (Alg.~\ref{alg:soap_Omega}) compared to non-adaptive schemes. Our benchmark task corresponds to a shape optimization problem. Particularly, the goal is to determine the deformation of a catenary under its weight. This is equivalent to minimizing the potential energy of its shape. Our domain is $\X=[0,1]$, the form of the catenary is described by a function $f \in \FK$ where
	\begin{align*}
	    \K(x,x')=e^{-\lambda|x-x'|}\quad (\lambda>0)
	\end{align*} is the Laplacian kernel.  This form is constrained at 
	$3$ points $x\in \{0,0.5,1\}$ to be equal to $0$, $1.5$ and $0$ respectively, and the catenary has to be above the value $0.5$ on the whole interval $[0.2,0.8]$. The resulting optimization problem can be expressed as
	\begin{mini*}|s|
		{\substack{f\in \FK}}{\left\|f\right\|_{K}}
		{\label{opt-catenary}}
		{}
		\addConstraint{f(0)}{=0,\, f(0.5)=1.5,\,f(1)=0}
		\addConstraint{0.5}{\le f(x),\,\forall \, x\in [0.2,0.8],}
	\end{mini*}
	with samples $S=((0,0), (0.5,1.5), (1,0))$.
	This task can be written equivalently as
	\begin{mini*}|s|
		{\substack{f\in \FK}}{\Lcal_{S}(f):=\left\|f\right\|_{K} + \chi_{\{0\}}(f(0)) + \chi_{\{1.5\}}(f(0.5)) +  \chi_{\{0\}}(f(1))}
		{\label{opt-catenary}}
		{}
		\addConstraint{0.5}{\le f(x),\,\forall \, x\in [0.2,0.8],}
	\end{mini*}
	which falls within the framework \eqref{opt-cons} with $Q=I=P=1$, $\Kcons=[0.2,0.8]$, $D(f)=f$, $f_0=0$, $b_0 = 0.5$, $\Gamma=0$ and $\Lcal(f, b) = \Lcal(f)+ \chi_{\{0\}}(b)$, in other words the bias term is zero ($b=0$). One of the advantages of this problem is that its solution can be computed analytically for some values of $\lambda$ (for an illustration, see the black solid curve in  Fig.~\ref{fig:catenary}(a)). This optimal solution can be thought of as the tilt of a circus tent, and is used as the ground truth. 
	
	In our experiments we chose the bandwidth parameter to be $\lambda = 5$.  We compared the efficiency (in terms of time and accuracy) of four different covering schemes which we detail in the following.
	\begin{enumerate}[labelindent=0em,leftmargin=1em,topsep=0cm,partopsep=0cm,parsep=0cm,itemsep=1mm]
		\item Covering with balls only: In this case the $M$ points of the covering of $\Kcons$ were equidistant over the interval $\Kcons$, i.e.\ $\tilde{x}_m =0.2+\frac{1}{2M}+(j-1)\frac{1}{M}$ and $\delta_m = \frac{0.8-0.2}{2M}$ with $m\in [M]$. The shape constraint
		$0.5\le f(x)$ for all $x \in \Kcons$ was tightened to the SOC one ($\eta_m\|f\|_K + 0.5 \le f\left(\tilde{x}_{m}\right)$ for all $ m\in[M]$) with $\eta_m=\sqrt{ \left|2- 2\tilde{\rho}_m\right|}=\sqrt{2- 2\tilde{\rho}_m}$ and $\tilde{\rho}_m:=e^{-\lambda\delta_{m}}$ according to \eqref{eq:eta_SDP_simple}. This choice corresponds to the ball covering 
		\begin{align}
			\bm \Phi(\BB_{|\cdot|}(\tilde{ x}_{m},\delta_m))&\subseteq \BB_{\K}\big(\underbrace{K\left(\cdot,\tilde x_m\right)}_{\b c_m}, \underbrace{\eta_m}_{r_m}\big)
			\label{eq:catenary:covering1}
		\end{align}
		in the RKHS $\FK$, with $J_{B,m}=1, J_{H,m}=0$ ($\forall m\in [M]$) in accordance with \eqref{eq:Omega}-\eqref{eq:Omega_m} and \eqref{eq:approach1:ball-only-specialization}. The resulting convex optimization problem was solved directly using the representer theorem (Proposition~\ref{prop:reproducing}).
		\item Covering with balls and half-spaces:  This method corresponds to the coverings \eqref{eq:Omega}-\eqref{eq:Omega_m} with $J_{B,m}=J_{H,m}=1$, as depicted on Fig.~\ref{Diagram_covering}(b).  The rationale behind this scheme is to provide a finer covering compared to the previous one, and thus a more accurate approximation. As mentioned in footnote \ref{footnote:tr-invariant_unitBall}, since for the Laplacian kernel $K(x,x)=1$ for all $x\in \X$, we have that $\bm{\Phi}(\Kcons)\subseteq \BB_{\K}(\b 0, 1)$. Moreover for $x \in \BB_{|\cdot|}(\tilde{ x}_{m},\delta_m)$, $K(\tilde{x}_{m},x)= e^{-\lambda\left|x-\tilde x_m\right|}\ge e^{-\lambda\delta_{m}}=\tilde{\rho}_m$. Hence  $-K(x,\tilde{x}_m) = \left<K(\cdot,x),-K(\cdot,\tilde{x}_m)\right>_K \le -\tilde{\rho}_m$, i.e.\ $K(\cdot,x) \in H^{-}_{\K} \left(-K(\cdot,\tilde{x}_{m}),-\tilde{\rho}_{m}\right)$, consequently 
		\begin{align}
			\bm \Phi(\BB_{|\cdot|}(\tilde{ x}_{m},\delta_m))&\subseteq \BB_{\K}\big(\underbrace{\b 0}_{\b c_m}, \underbrace{1}_{r_m}\big) \cap H^{-}_{\K} \big(\underbrace{-K(\cdot,\tilde{x}_{m})}_{\b v_m},\underbrace{-\tilde{\rho}_{m}}_{\rho_m}\big)
			\label{eq:catenary:covering2}
		\end{align}
		in line with \eqref{eq:construction1:1ball-1hyperplane}. This is indeed a covering at least as tight as \eqref{eq:catenary:covering1}, as, when considering an element $\b g$ in the r.h.s.\ of \eqref{eq:catenary:covering2}, then
		\begin{align*}
			\|\b g - K\left(\cdot,\tilde x_m\right)\|_\K^2=\|\b g\|_K^2+K\left(\tilde x_m\,\tilde x_m\right)-2\langle\b g,K\left(\cdot,\tilde x_m\right)\rangle_\K\le 2-2\tilde{\rho}_{m}=\eta_m^2,
		\end{align*}
		which gives that
		\begin{align*}
			\BB_{\K}\big(\b 0, 1\big) \cap H^{-}_{\K} \big(-K(\cdot,\tilde{x}_{m}),-\tilde{\rho}_{m}\big) \subseteq \BB_{\K}\big(K\left(\cdot,\tilde x_m\right), \eta_m\big).
		\end{align*} 
		The values of $\tilde{x}_m$, $\delta_m$, $\tilde{\rho}_m$ and $\eta_m$ were chosen similarly as in the previous point. 
		
		\item Soap bubble covering with balls only: In contrast to the direct solution with a fine covering, our first soap bubble scheme using balls (Alg.~\ref{alg:soap}) is initialized with a coarser uniform covering with an initial covering radius  $\delta_{\text{max}}^{(0)}=0.01$; the latter results in $M=\frac{0.8-0.2}{2\times 0.01} = 30$ anchor points at the beginning. This initial covering is then iteratively refined in our experiments using a rate $\gamma = 0.8$. The shape constraint were considered to be saturated when the condition $\left|\eta_m\|f\|_k + 0.5 - f(\tilde{x}_{m})\right|\le 10^{-8}$ held, determining the bursting condition of the balls in Alg.~\ref{alg:soap}.
		\item Soap bubble covering with balls and half-spaces: A combination of balls and half spaces were considered as in the second covering scheme, to which the soap bubble algorithm (Alg.~\ref{alg:soap_Omega}) was applied. The initialization was the same as in the third scheme.\\
	\end{enumerate}
	
	Our results are summarized in Fig.~\ref{fig:catenary}. The figure shows that the adaptive soap bubble technique (i)  converges to the optimal solution as the iteration proceeds (in accordance with Theorem~\ref{thm:soap_bubble}; see Fig.~\ref{fig:catenary}(a)) with illustration of the bursts in Fig.~\ref{fig:catenary}(c). (ii) It achieves the same accuracy with smaller number of covering points (Fig.~\ref{fig:catenary}(b)) and faster (Fig.~\ref{fig:catenary}(d)) compared to the non-adaptive schemes. (iii) Considering half-spaces additionally to balls results in a small performance gain. These experiments demonstrate the efficiency of the adaptive soap bubble algorithm in the context of a simple shape optimization problem.
	
	\subsection{Experiment-2: Safety-Critical Control}\label{sec:app:safety-crit-control}
	In our \tb{second experiment} we focus on a constrained path-planning problem. Particularly, in this task the trajectory of an underwater vehicle navigating in a two-dimensional cavern is described by a curve $t \in \Tiv:=[0,1] \mapsto [x(t);z(t)]\in \R^2$ corresponding to its lateral ($x$) and depth ($z$) coordinates at time $t\in \Tiv$. For simplicity, we assume that the lateral component satisfies $x(0)=0$ and $\dot{x}(t)=1$ for all $t\in \Tiv$. In this case,  $x(t)=t$ for all $t \in \Tiv$ and the control problem reduces to that of ensuring that the depth $z(t)$ stays between the floor and ceiling of the cavern ($z(t)\in \left[z_{\text{low}}(t),z_{\text{up}}(t)\right]$ for all $ t\in \Tiv$). We take as initial conditions $z(0)=0$ and $\dot{ z}(0)=0$. By denoting the control with $u\in L^2(\Tiv,\R)$ where $L^2(\Tiv,\R)$ is the set of square-integrable real-valued functions on $\Tiv$, our control task can be formulated as  
	\begin{mini}|s|
		{\substack{u(\cdot) \in L^2(\Tiv, \R)}}{\int_\Tiv |u(t)|^2 \d t}
		{\label{opt-cave} \tag{$\Psc_{\text{cave}}$}}
		{}
		\addConstraint{z(0)}{=0, \quad \dot{ z}(0)=0}
		\addConstraint{\ddot{ z}(t)}{=-  \dot{ z}(t) + u(t), \, \forall \, t \in \Tiv}
		\addConstraint{z_{\text{low}}(t)}{\le z(t) \le z_{\text{up}}(t),\,\forall \, t\in \Tiv.}
	\end{mini}
	The task \eqref{opt-cave} belongs to the class of 
	linearly-constrained linear quadratic regulator problems. As shown by \citet{aubin2020hard_control}, these tasks can be rephrased as a shape-constrained kernel regression for a kernel $K$ defined by the objective and the dynamics. By defining the full state of the vehicle as $\b f(t):=\left[z(t);\dot{ z}(t)\right]\in\R^2$, $\b f$ evolves according to the linear dynamics 
	\begin{align*}
	\dot{\b f}(t) &=\b A \b f(t) + \b B u(t)\in \R^2,&
	\b f(0)&=\b 0,&
	\b A&=\begin{bmatrix}0&1\\ 0 & -1\end{bmatrix}\in \R^{2\times 2},&
	\b B&=\begin{bmatrix}0 \\ 1 \end{bmatrix} \in\R^2.
	\end{align*}
	Using that $\b f(0)=\b 0$ the controlled trajectories $\b f$ belong to a $\R^2$-valued RKHS $\FK$ defined over $\Tiv$ with the matrix-valued kernel\footnote{\label{footnote:control:FK}The Hilbert space $\FK$ corresponding to \eqref{def_K1} is the one of controlled trajectories with zero initial condition ($\b f(0)=\b 0$) such that $\|\b f\|_K=\|u\|_{L^2(\Tiv,\R)}$.} 
	\begin{align}
	K(s,t) &:= \int_{0}^{\min(s,t)}e^{(s-\tau)\b A} \b B\b B^{\top}e^{(t-\tau)\b A^{\top}}  \d \tau, \quad s,t\in \Tiv,\label{def_K1}
	\end{align}
	where $e^{\b M}$ denotes the matrix exponential. 
	With our kernel-based formulation, the problem \eqref{opt-cave} can be rewritten as an optimization problem over full-state trajectories
	\begin{mini*}|s|
		{\substack{\b f=[f_1,f_2]\in \FK}}{\left\|\b f\right\|_{K}^2}{}{}
		\addConstraint{z_{\text{low}}(t)}{\le f_1(t) \le z_{\text{up}}(t),\,\forall \, t\in \Tiv}.
	\end{mini*}
	In our experiment we assume that the given bounds $z_{\text{low}}$ and $z_{\text{up}}$ are piecewise constant: taking a uniform $\delta$-covering $\Tiv =\cup_{m\in [M]}\Tiv_m$ with $\Tiv_m:=[t_{m}-\delta,t_{m}+\delta]$ and $t_{m+1} = t_m +2\delta$ for $m\in [M-1]$, this means that $z_{\text{low}}(t)=z_{\text{low},m}$ for all $t\in \Tiv_m$; similarly $z_{\text{up}}(t)=z_{\text{up},m}$ for all $t\in \Tiv_m$.  Hence, with the piecewise constant assumption, the control task \eqref{opt-cave} reduces to
	\begin{mini*}|s|
		{\substack{\b f\in \FK}}{\left\|\b f\right\|_{K}^2}{}{}
		\addConstraint{z_{\text{low},m}}{\le f_1(t) \le z_{\text{up},m},\,\,\forall \, t\in \Tiv_m,\, \forall m\in [M]}.
	\end{mini*}
	This optimization problem belongs to the family \eqref{opt-cons} with $N=0$, $Q=2$, $P=1$, $D_m(\b f)=f_1$ and $b_{0,m}=z_{\text{low},m}$ for $m\in[M]$ ($z_{\text{low},m} \le f_1(t)$ for $t\in \Tiv_m$ and $m\in [M]$), $D_{M+m}(\b f)=-f_1$ and $b_{0,M+m}=-z_{\text{up},m}$ for $m\in[M]$ ($-z_{\text{up},m} \le -f_1(t)$ for $t\in \Tiv_m$ and $m\in [M]$), $ \b f_{0,m}=\b 0$ and $\Gamma_m = 0$ for $m\in [2M]$, $I=2M$ and $\Bsc = \{0\}$.
	
	In Fig.~\ref{fig:cave} we compare the optimal trajectory obtained with the proposed SOC tightening (using ball covering) to the one derived when applying discretized constraints (formally corresponding to taking $\eta_m=0$). Here the piecewise constant bounds were obtained as piecewise approximations of random functions drawn in a Gaussian RKHS. As illustrated in Fig.~\ref{fig:cave}(a), the vehicle guided with discretized constraints crashes into the blue wall at multiple locations, whereas the trajectory resulting from the SOC-based tightening stays within the bounds at all times. The SOC trajectory can be described as solving a problem where $z_{\text{low},m}$ (resp.\ $z_{\text{up},m}$) was replaced by $z_{\text{low},m}+\eta_{m,1}\left\|\bar{\b f}_{\app}\right\|_{K}$ (resp.\ $z_{\text{up},m}-\eta_{m,1}\left\|\bar{\b f}_{\app}\right\|_{K}$). This acts as a supplementary buffer which we illustrate in  Fig.~\ref{fig:cave}(b) (green solid line). Even though the SOC trajectory intersects the green boundary, the buffer $\eta_{m,1}\left\|\bar{\b f}_{\app}\right\|_{K}$ is guaranteed to be large enough for the SOC trajectory to never collide with the blue boundary. This experiment demonstrates the efficiency of the SOC approach in a safety-critical application where the constraints have to be met at all times.
	\begin{figure*}
		\centering
		\begin{minipage}[c]{.48\textwidth}
			\begin{center}			
				\subfloat[][]{\resizebox{\textwidth}{!}{\label{fig:cave1}\includegraphics{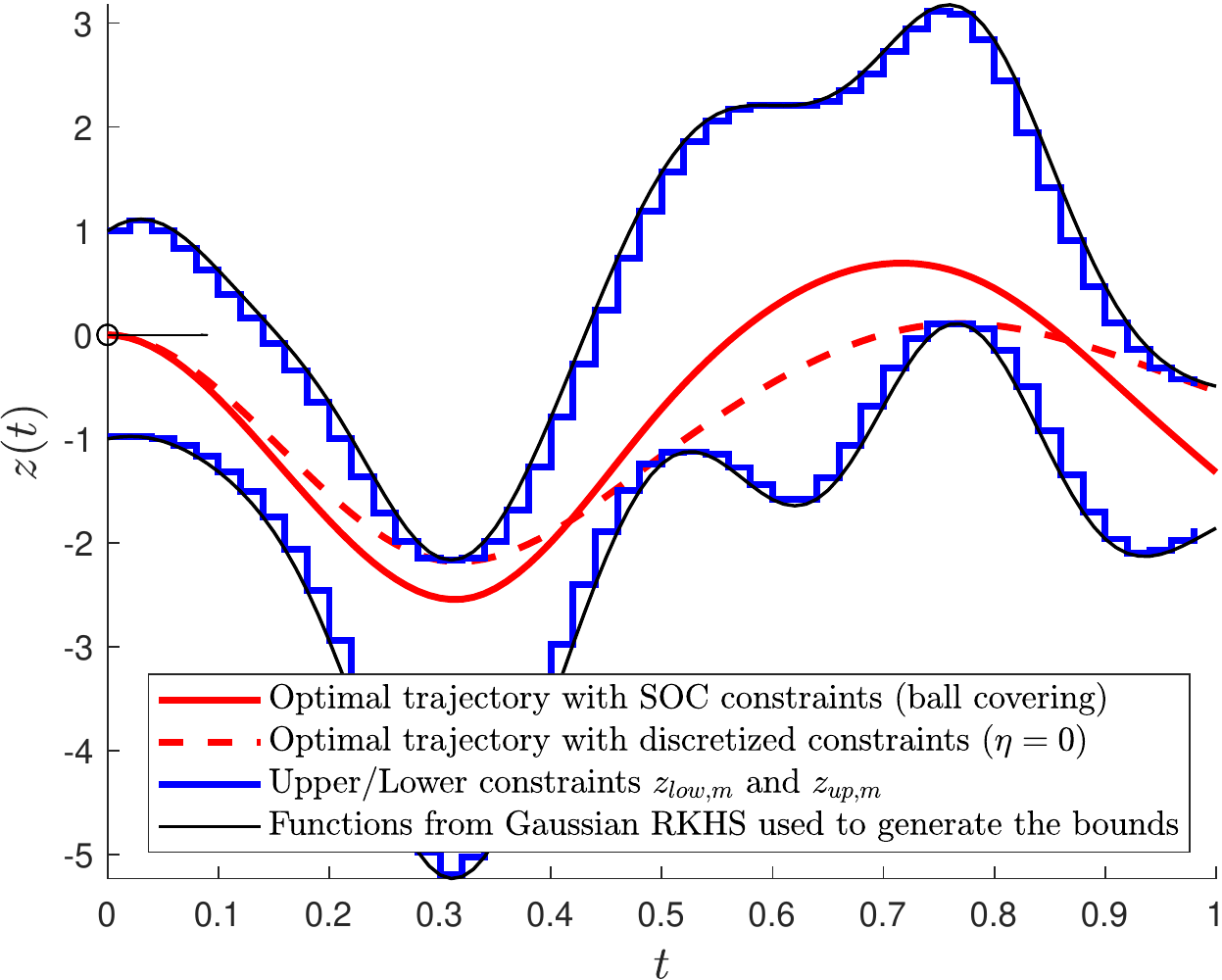}}}
			\end{center}
		\end{minipage}
		\begin{minipage}[c]{.48\textwidth}
			\begin{center}			
				\subfloat[][]{\resizebox{\textwidth}{!}{\label{fig:cave2}\includegraphics{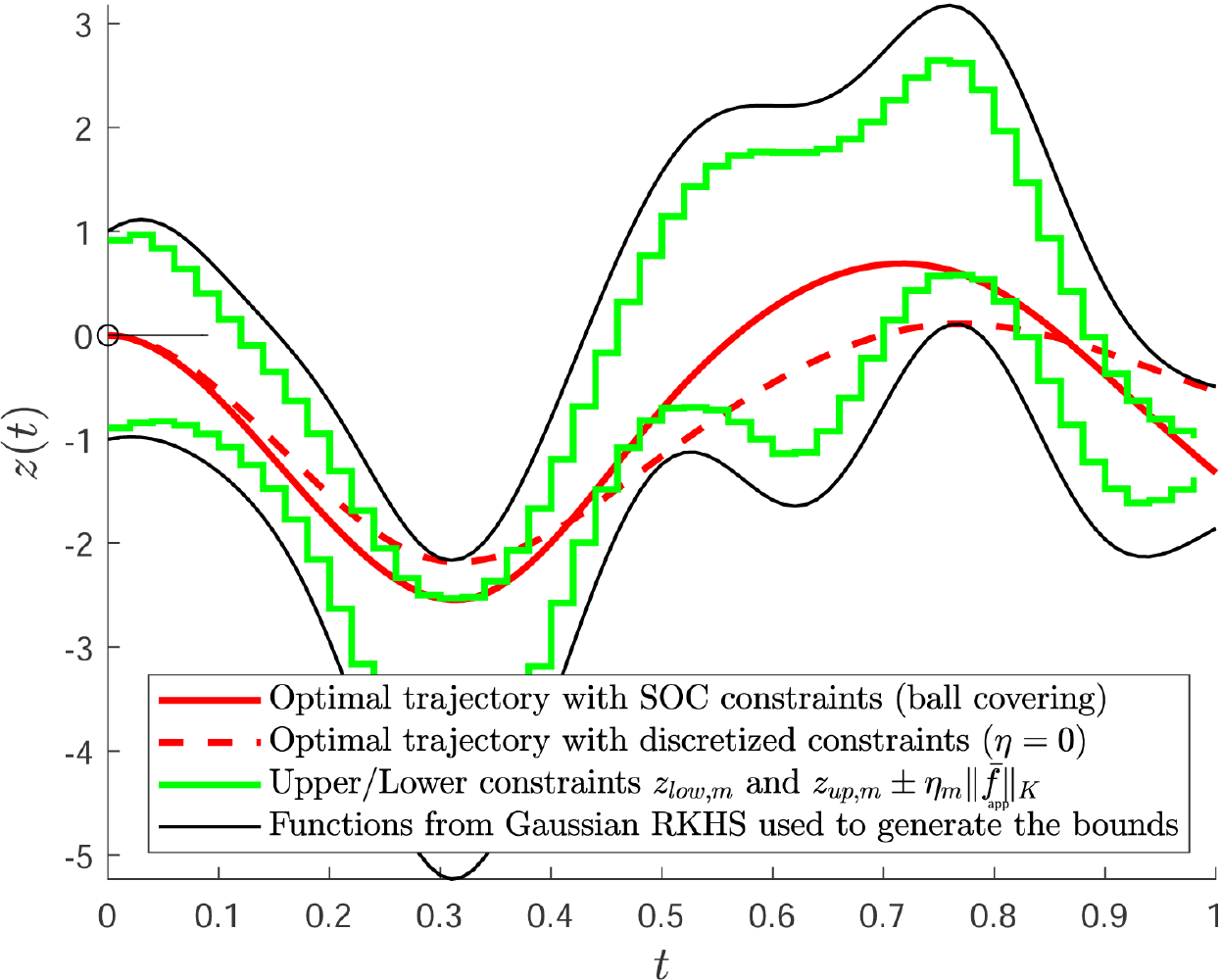}}}
			\end{center}
		\end{minipage}
		\caption{Illustration of the optimal control problem \eqref{opt-cave} of  piloting a vehicle staying between the ceiling ($z_{\text{up}}$) and the floor ($z_{\text{low}}$) of a cavern. Red solid line: SOC-based approach. Red dashed line: solution based on a discretization (formally setting $\eta=0$). Blue solid lines: constraints $(z_{\text{low},m}$ and $z_{\text{up},m})$. Black solid lines: functions used to generate the constraints.
			Green solid lines: constraints with buffer $\pm\eta_{m,1}\left\|\bar{\b f}_{\app}\right\|_{K}$.}
		\label{fig:cave}
	\end{figure*}
	
	\noindent\tb{Remark} (encoding of the bounds $z_{\text{low}}$ and $z_{\text{high}}$): In this control application we assumed that the prescribed bounds are piecewise constant and we generated them using functions which do not necessarily belong to $\FK$. If one faces instead a boundary $z_{\text{low}}$ (resp.\ $z_{\text{high}}$) which can be written as $\b e_1^\top \b z_{\text{low}}$ for some $\b z_{\text{low}}\in\FK$, then it could be treated as a bias $\b f_{0,i}=\b z_{\text{low}}$ (resp.\ $\b z_{\text{high}}$). 
	While this would reduce the number of shape constraints from $I=2M$ to $I=2$, our current choice allows us to investigate the efficiency of the proposed approach in a complementary setting. Indeed, in contrast to the considered shape optimization task with one shape constraint ($I=1$) on a large $\Kcons$ which is refined by the soap bubble algorithm, the path-planning task involves $I=2M$ constraints on an already refined grid.

	\subsection{Experiment-3: Estimation of Robotic Arm Position}\label{sec:app:robotic-arm}
	In our \tb{third experiment} we consider a robotic arm with $\NS\in \N^*$ segments moving in a two-dimensional plane for which we want to estimate the ``tool tip pose'', i.e.\ its 2D-position and its 1D-orientation, depending on the length of the links $[L_i]_{i\in\left[\NS\right]}\in \R^{\NS}$ and angle of the joints $[\theta_i]_{i\in\left[\NS\right]}\in \R^{\NS}$. This means that 	the input is $\b x=[L_1;\dots,L_{\NS};\theta_1;\dots;\theta_{\NS}]\in\R^{d}$ with $d=2\NS$. We follow the experimental protocol of \citet{agrell19gaussian} where the author considered a $4$-dimensional input ($\NS=2$) and we also extend it to $6$-dimensional input ($\NS=3$). Since \citet{agrell19gaussian} focuses on Gaussian processes (a Bayesian estimate), we chose to compare our method against the closer, frequentist and recent, kernel sum-of-squares technique (kSoS; \citealt{muzellec22learning}). In this experiment the underlying ``tool tip pose'' function---which we are aiming to estimate---takes the form 
	\begin{equation}\label{eq:robot-arm-function}
		\b f^{ref}(\b x)=\Bigg[\sum_{i \in \left[\NS\right]}L_i \underbrace{\cos\Bigg(2\pi\sum_{j=1}^{i}\theta_j\Bigg)}_{=:c_i^1(\b x)}; \sum_{i \in \left[\NS\right]}L_i \underbrace{\sin\Bigg(2\pi\sum_{j=1}^{i}\theta_j\Bigg)}_{=:c_i^2(\b x)};\sin\Bigg(2\pi\sum_{j=1}^{\NS}\theta_j\Bigg)\Bigg]\in\R^3.
	\end{equation}
	The observations $S=(\b x_n,\b y_n)_{n\in[N]}$ ($N=40$) are noisy measurements of the functional relation \eqref{eq:robot-arm-function}: 
	\begin{align*}
	  \b y_n=\b f^{ref}(\b x_n)+\bm{\epsilon}_n   
	\end{align*}
    with $(\bm{\epsilon}_n)_{n\in [N]}\stackrel{\text{i.i.d.}}{\sim}\mathcal{N}\left(\b 0_3,0.2^2\b I_3\right)$, and inputs $\b x_n$  generated according to Latin hypercube sampling of $\X=[0,1]^{d}$. As the output values are clearly not independent, we approximate the $\b x\mapsto \b y$ relation using a vRKHS associated to a decomposable matrix-valued kernel $K(\b x, \b x')=k(\b x, \b x')\bm{\Sigma}\in\R^{3\times 3}$, where $\bm{\Sigma}\in \R^{3\times 3}$ is the covariance matrix of the outputs of \eqref{eq:robot-arm-function} estimated over $1000$ samples. The kernel $k$ was chosen to be the Gaussian:
    \begin{align*}
     k(\b x, \b x')=e^{-(\b x-\b x')^\top \diag\left(\left(\frac{1}{2\sigma_i^2}\right)_{i\in [d]}\right)(\b x-\b x')}.   
    \end{align*}
   The objective function is a regularized empirical mean square error
	\begin{equation}\label{eq:robot-arm-objective}
		\Lcal_{S}(\b f)=\frac{1}{N}\sum_{n\in[N]} \|\b y_n-\b f(\b x_n)\|_{2}^2+\lambda \|\b f\|_K^2.
	\end{equation}
	 Following \citet{agrell19gaussian}, we add some extra side information, assuming we know whether or not the arm will move further away from the x-axis or y-axis when changing the link lengths, given any joint configuration. The considered constraints are thus 
	 \begin{align*}
	  (\partial_i f^{ref}_l(\b x))_l (\partial_i f_l(\b x))_l\ge 0 \text{ for $l\in[2]$, $i\in \left[\NS\right]$, }    
	 \end{align*}
	 expressing that the estimate and true derivatives point in the same direction component-wise. By \eqref{eq:robot-arm-function}, the linearity w.r.t.\ $L_i$ entails that $\partial_i f^{ref}_l(\b x)=c_i^l(\b x)$. Consequently we consider five constraints: the original one \eqref{eq:robot-arm-constraints-orig}; its relaxation through discretization \eqref{eq:robot-arm-constraints-disc}; two SOC tightenings, obtained through a ball covering \eqref{eq:robot-arm-constraints-SOCball}, and ball and hyperplanes \eqref{eq:robot-arm-constraints-SOChyp}, with notations consistent with those of \Cref{sec:constr1} and \Cref{sec:app:catenary}; and finally a kSoS approximation \eqref{eq:robot-arm-constraints-kSoS} as per \citet{muzellec22learning} with an extra positive semidefinite matrix-valued variable $\b A=[a_{m_1,m_2}]_{m_1,m_2\in [M]}\succcurlyeq \b 0$. These constraints are as follows:
	\begin{align}\label{eq:robot-arm-constraints-orig}
		c_i^l(\b x) \partial_i f_l(\b x) &\ge 0,\, &&\forall \, x\in\X, \\
		\label{eq:robot-arm-constraints-disc}
		c_i^l(\tilde{\b x}_{m})\partial_i f_l(\tilde{\b x}_{m}) &\ge 0,\, &&\forall \, m\in[M],\\
		\label{eq:robot-arm-constraints-SOCball}
		c_i^l(\tilde{\b x}_{m}) \partial_i f_l(\tilde{\b x}_{m}) &\ge \eta_i\|\b f\|_K,\, &&\forall \, m\in[M],\\
		\label{eq:robot-arm-constraints-SOChyp}
		\exists\, \xi_m \ge 0, \,\, \xi_m \tilde{\rho}_i^l &\ge r_i^l \|\b f - \xi_m \partial_{i,2}K(\cdot,\b x_{m})\b e_l\|_K,\, &&\forall \, m\in[M],\\
		\label{eq:robot-arm-constraints-kSoS}
		c_i^l(\tilde{\b x}_{m}) \partial_i f_l(\tilde{\b x}_{m}) & = \sum_{m_1,m_2\in [M]} a_{m_1,m_2} k_{SoS}(\tilde{\b x}_{m},\tilde{\b x}_{m_1}) k_{SoS}(\tilde{\b x}_{m},\tilde{\b x}_{m_2}),\, &&\forall \, m\in[M].
	\end{align}
	Notice that the shape constraint \eqref{eq:robot-arm-constraints-disc} goes slightly beyond (and hence demonstrates the robustness of our approach) the analyzed affine SDP constraints on function derivatives \eqref{def:mixed-constraint:P>=1}  as $c_i^l(\b x)$ is $\b x$-dependent.
	We consider anchor points $\tilde{\b x}_{m}\in \R^d$ belonging to regular grids with varying stepsize $\Delta x$. As the kernel $K$ is translation invariant, the coefficients $\eta_{i,m},\rho_{i,m},r_{i,m}$ of the SOC methods do not depend on the samples $\tilde{\b x}_{m}$, and we can thus remove the subscript $m$. However this experiment is especially challenging for tightenings. While $\b f = \b 0$ is always an admissible solution, the functions $c_i^l(\cdot)$ frequently change signs since they are either $\sin$ or $\cos$ function; therefore,  tightening the constraint could force the function derivative to be both non-negative and non-positive on some subset. To mitigate this difficulty, we enforce the SOC constraints \eqref{eq:robot-arm-constraints-SOCball}-\eqref{eq:robot-arm-constraints-SOChyp} only on Euclidean balls $\BB_{\X}(\tilde{\b x}_{m},\delta)$ with $\delta=\frac{1}{100}\Delta x$, thus only partially covering the set $\X$. As the whole set $\X$ is not covered, the soap bubble algorithm is not applicable. We also remove a few points for which $\left|c_i^l(\tilde{\b x}_{m})\right|<10^{-1}$ to avoid numerical instabilities. Using a similar derivation as in Section~\ref{sec:app:catenary}, 
	one has $\tilde{\rho}_i^l=\b e_l^\top \bm{\Sigma} \b e_l \min_{\b y\in \BB_{\X}(\b 0,\delta)} \partial_{i,1}\partial_{i,2}k(\b 0, \b y)$, $r_l=\sqrt{\b e_l^\top \Sigma \b e_l \partial_{i,1}\partial_{i,2}k(\b 0, \b 0)}$ and $\eta_i^l=\sqrt{2\left((r_l)^2-\tilde{\rho}_i^l\right)}$, where $\tilde{\rho}_i^l$ was estimated by taking the minimum over $\{\b y_i\}_{i \in [1000]}$ uniformly drawn samples  in the Euclidean ball $\BB_{\X}(\b 0,\delta)$. The hyperparameters $(\sigma_i)_{i\in[d]}$ and the regularization $\lambda>0$ were optimized using 5-fold cross-validation. 
	
	We compared our method with \eqref{eq:robot-arm-constraints-kSoS} obtained from the kSoS approach \citep{muzellec22learning},  where the auxiliary kernel is a real-valued Cauchy kernel $k_{SoS}(\b x,\b x')=1/\left(1+\|\b x-\b x'\|_{2}^2/\sigma_{kSoS}^2\right)$ with $\sigma_{kSoS}=0.2$. We also tested the Gaussian kernel for kSoS but it gave slightly inferior results.
	For this kSoS approach, one has to add a term $\lambda_{kSoS}\tr(\b A)$, where $\tr(\cdot)$ denotes trace, to the objective \eqref{eq:robot-arm-objective} to penalize $\b A$; $\lambda_{kSoS}=10^{-8}$ was chosen. The main drawback of the kSoS method is its reliance on SDP optimization which is considerably slower than SOC or quadratic programming, in additional to its memory requirements which prevent considering more than a few hundred constraints. Note that one cannot apply the tightening framework of \citealt{marteauferey20nonparametric} since it is not $\b f$ but its derivative which has to satisfy a nonnegativity constraint, whence $\b f$ is not itself a kernel sum-of-squares. \citet[Section 5]{muzellec22learning}, which can be seen as the extension of \citealt{marteauferey20nonparametric}, discusses this aspect.
	
	In all cases \eqref{eq:robot-arm-constraints-disc}-\eqref{eq:robot-arm-constraints-kSoS}, we apply the formula for $\bar{\b f}_{\app}$ given by the representer theorem (Proposition~ \ref{prop:reproducing}) and evaluate our methods by computing the following performance measures:
	\begin{align*}
		L^2_{err}&=\frac{1}{N^{test}_{err}}\sum_{n \in \left[N^{test}_{err}\right]}\left\|\b f^{ref}(\b z_n)-\bar{\b f}_{\app}(\b z_n)\right\|^2_2, \\
		L^1_{cons}&=\frac{1}{N^{test}_{cons}}\sum_{n \in [N^{test}_{cons}]}
		\sum_{i\in [d]}\sum_{l\in [2]}\max\left(0,-c_i^l(\tilde{\b z}_n) \partial_i f_l(\tilde{\b z}_n)\right).
	\end{align*}
	 
	 The estimated reconstruction error is designated by $L^2_{err}$ and is assessed over a fine regular grid $(\b z_n)_{n\in \left[N^{test}_{err}\right]}$ with $N^{test}_{err}=5^d$. The estimated violation of the constraints is denoted by $L^1_{cons}$ and is computed over $N^{test}_{cons}=400$ Latin hypercube samples $(\tilde{\b z}_n)_{n\in \left[N^{test}_{cons}\right]}$ over the input space $\X=[0,1]^{d}$. We report the obtained performance values in Table~\ref{table:MyTableLabel} along with the computational time $T_s$. In our experiments we used an i5-CPU 16GB-RAM computer and the YALMIP solver \citep{lofberg04yalmip} to solve the optimization problem \eqref{eq:robot-arm-objective} with each of the constraints \eqref{eq:robot-arm-constraints-disc}-\eqref{eq:robot-arm-constraints-kSoS}.  Missing values in the table correspond to memory outflows or when YALMIP does not converge. These events occur due to the amount of constraints considered: off-the-shelf SDP solvers struggle beyond $250$ SDP constraints as in \eqref{eq:robot-arm-constraints-kSoS}, and hyperplane constraints  as in \eqref{eq:robot-arm-constraints-SOChyp} require many cones.

	We notice that enforcing constraints always improves both the reconstruction error $L^2_{err}$ and violation of constraints $L^1_{cons}$ except for kSoS, the SOC techniques consistently giving the best results. The more constraint points used, the better the results and the more expensive the computations are. The performance of both SOC (ball) and SOC (hyp.) is almost identical for this experiment, with SOC (hyp.) being about $10$ times more expensive to run time-wise. We notice that, despite being solved through SOC programming, SOC (ball) takes a very comparable time, about $+25\%$ more, w.r.t.\ the quadratic programming used for the discretized constraints, making it a competitive alternative. Removing points with too small $\left|c_i^l(\tilde{\b x}_{m})\right|$ resulted in considering $N_C$ constraints instead of $d\times M$ but the two numbers are still quite close.
	
	These experiments demonstrate the efficiency of our proposed method in the vector-valued setting in moderate input dimensions.
	\begin{table}
		\footnotesize
		\centering
		\begin{tabular}{rrrrr@{\hspace{0.05cm}}l@{\hspace{0.05cm}}lr@{\hspace{0.05cm}}l@{\hspace{0.05cm}}lr@{\hspace{0.05cm}}l@{\hspace{0.05cm}}lr@{\hspace{0.05cm}}l@{\hspace{0.05cm}}lr@{\hspace{0.05cm}}l@{\hspace{0.05cm}}l}\toprule
		     &&&& \multicolumn{15}{c}{Handling of shape constraints}\\\cmidrule(r){5-19}
			 Perf. & $d$  & $M$ & $N_C$ & \multicolumn{3}{c}{Unconstrained} & \multicolumn{3}{c}{Discretized} & \multicolumn{3}{c}{SOC (ball)} & \multicolumn{3}{c}{SOC (hyp.)}  & \multicolumn{3}{c}{kSoS} \\
			\midrule
			$L_{\text{err}}^2$ & 4 & 16 & 61 & 0.608 &$\pm$& 9e-2 & 0.559 &$\pm$& 9e-2 & 0.542 &$\pm$& 9e-2 & \tb{0.541} &$\pm$& 9e-2 & 0.683 &$\pm$& 1e-1 \\
			&   & 81 & 303 & 0.588 &$\pm$& 9e-2 & 0.489 &$\pm$& 8e-2 & \tb{0.467} &$\pm$& 9e-2 & 0.476 &$\pm$& 1e-1 & --&& \\
			&   & 256 & 961 & 0.611 &$\pm$& 8e-2 & 0.486 &$\pm$& 6e-2 & \tb{0.484} &$\pm$& 7e-2 & --&& & --&& \\
			$L_{\mathrm{cons}}^1$ &   & 16 & 61 & 0.039 &$\pm$& 1e-2 & 0.026 &$\pm$& 7e-3 & \tb{0.020} &$\pm$& 6e-3 & \tb{0.020} &$\pm$& 6e-3 & 0.042 & $\pm$ & 1e-2 \\
			&   & 81 & 303 & 0.033 &$\pm$& 1e-2 & 0.009 &$\pm$& 3e-3 & \tb{0.005} &$\pm$& 2e-3 & \tb{0.005} &$\pm$& 2e-3 & --&& \\
			&   & 256 & 961 & 0.037 &$\pm$& 1e-2 & 0.003 &$\pm$& 1e-3 & \tb{0.002} &$\pm$& 1e-3 & --&& & --&& \\
			$T_s$ &   & 16 & 61 & \tb{$<$0.01} && & 0.081 &$\pm$& 5e-3 & 0.103 &$\pm$& 5e-3 & 2.135 &$\pm$& 5e-1 & 1.465 &$\pm$& 3e-1 \\
			&   & 81 & 303 & \tb{$<$0.01}&& & 0.287 &$\pm$& 5e-2 & 0.369 &$\pm$& 2e-2 & 37.150 &$\pm$& 9 & --&& \\
			&   & 256 & 961 & \tb{$<$0.01}&& & 2.430 &$\pm$& 3e-1 & 3.125 &$\pm$& 5e-1 & --&& & --&& \\
			\midrule
			$L_{\text{err}}^2$ & 6 & 64 & 360 & 1.621 &$\pm$& 5e-2 & 1.580 &$\pm$& 5e-2 & \tb{1.520} &$\pm$& 5e-2 & \tb{1.520} &$\pm$& 5e-2 & --&& \\
			&   & 729 & 4097 & 1.636 &$\pm$& 4e-2 & 1.511 &$\pm$& 5e-2 & \tb{1.345} &$\pm$& 9e-2 & --&& & --&& \\
			$L_{\mathrm{cons}}^1$ &   & 64 & 360 & 0.039 &$\pm$& 4e-3 & 0.021 &$\pm$& 2e-3 & \tb{0.013} &$\pm$& 1e-3 & \tb{0.013} &$\pm$& 1e-3 & --&& \\
			&   & 729 & 4097 & 0.040 &$\pm$& 4e-3 & 0.003 &$\pm$& 3e-4 & \tb{0.001} &$\pm$& 3e-4 & --&& & --&& \\
			$T_s$ &   & 64 & 360 & \tb{$<$0.01}&& & 0.447 &$\pm$& 2e-1 & 0.588 & $\pm$& 1e-1 & 71.250 &$\pm$& 2e1 & --&& \\
			&   & 729 & 4097 & \tb{$<$0.01}&& & 54.700 &$\pm$& 5 & 70.000 &$\pm$& 6 & --&& & --&& \\
			\bottomrule
		\end{tabular}
		\caption{Illustration in the robotic arm position estimation problem. Performance values (column $5$-$9$) under different handlings of the shape constraint: mean $\pm$ std (smaller is better). Columns from left to right: performance measure (reconstruction error: $L_{\text{err}}^2$, constraint violation: $L_{\mathrm{cons}}^1$, running time: $T_s$), dimension ($d$), number of anchor points ($M$), number of constraints ($N_C\le d\times M$), unconstrained solver, discretized constraints, SOC constraints (with balls), SOC constraints (with ball+hyperplanes), kSOS constraints. '--': lack of convergence or memory overflow. The best performance values are indicated in each row by boldface.}
		\label{table:MyTableLabel}
	\end{table}

	\color{black}
	\subsection{Experiment-4: Econometrics}
	Our \tb{fourth} example belongs to econometrics; our goal is to estimate production functions based on very few samples and additional side information. Particularly, let us consider a firm which produces an output from $d$ different goods/inputs/factors. Let the quantity corresponding to the $i^{th}$ input be written as $x_i\in \Rnn$ ($i\in [d]$). Then the corresponding output can be modelled by a production function $f:\X \subseteq \Rnn^d \rightarrow \Rnn$. Classical assumptions on the production function \citep{varian84nonparametric,allon07nonparametric} are (i) non-negativity ($f(\b x)\ge 0$ $\forall \b x$), (ii) monotonically increasing property (i.e., more inputs gives rise to more output; $\p^{\b e_i} f(\b x) \ge 0$ $\forall \b x $ and $\forall i \in [d]$), (iii) $f(\b 0)=0$ (zero input gives no output) and (iv) concavity (also called diminishing marginal returns; $\left[\left(\p^{\b e_i+\b e_j}f\right)(\b x) \right]_{i,j\in [d]} \preccurlyeq \b 0_{d\times d}$ $\forall \b x$). Having access to $N$ input-output samples $S=(\b x_n, y_n)_{n\in [N]}$, the learning of a production function can be addressed by solving
	\begin{mini*}|s|
		{\substack{f\in \FK}}{\Lcal_{S}(f) := \frac{1}{N} \sum_{n\in [N]}[y_n-f(\b x_ n)]^2 + \lambda \left\|f\right\|_{K}^2, \quad (\lambda>0)}
		{}
		{}
		\addConstraint{0}{\le f(\b x)\quad \forall \b x\in \Kcons}
		\addConstraint{0}{\le \p^{\b e_i} f(\b x) \quad \forall \b x\in \Kcons, \forall i\in [d]}
		\addConstraint{0}{= f(\b 0)}
		\addConstraint{\b 0_{d\times d} }{\preccurlyeq -\left[\left(\p^{\b e_i+\b e_j}f\right)(\b x) \right]_{i,j\in [d]}\quad \forall \b x\in \Kcons,}
	\end{mini*}
	where $\Kcons\subset \left(\Rnn\right)^d$ is a compact set containing the samples. This problem belongs to the family \eqref{opt-cons} with the choice $s=2$, $I=d+2$, $D_1(f)=f$ and $P_1=1$, $D_i=\p^{\b e_{i-1}}$ and $P_i = 1$ for $i\in\{2,3,\ldots,d+1\}$, $\b D_{d+2} = -\left[\p^{\b e_i+\b e_j} \right]_{i,j\in [d]}$, $P_{d+2}=d$, $\b \Gamma_i=\b 0$, $\b b_{0,i}=\b 0$ and $f_{0,i}=0$ for all $i \in [d+2]$. The  requirement $f(\b 0)=0$ can be encoded by incorporating an indicator function to the loss function.  
	
	For our experiment, we considered a benchmark dataset containing the production data of $569$ Belgian firms.\footnote{The dataset is available at \url{https://vincentarelbundock.github.io/Rdatasets/doc/Ecdat/Labour.html}.}  The input $\b x$ is two-dimensional ($d=2$), describing the capital expressed in euros ($x_1$) and the labour involved, interpreted as the number of workers ($x_2$). The output $y$ is one-dimensional ($Q=1$), and is the added value in euros. We applied a standard pre-processing of the data \citep{mazumder19computational} by (i) considering the negative logarithm of the output\footnote{Taking the logarithm of the output improves the numerical stability at the price of discarding the constraint $f(\b 0) = 0$.}, (ii)  mean-centering and standardizing each component of the input and of the output to have zero mean and unit variance, and (iii) removing some outliers, resulting in $N_{tot}=543$ points kept. The final optimization problem\footnote{Imposing the quadratic regularization $\lambda \left\|g\right\|_K^2$ as an equivalent constraint $\left\|g\right\|_K\le \tilde{\lambda}$ is in line with the implementation of conic convex problems through interior point methods.} contains two monotonicity and one joint convexity constraint:
	\begin{mini!}|s|
		{\substack{g\in \FK}}{\Lcal_{S}(g) := \frac{1}{N} \sum_{n\in [N]}[y_n-g(\b x_ n)]^2}
		{}
		{}
		\addConstraint{\|g\|_{K}}{\le \tilde{\lambda}\label{eq:reg}}
		\addConstraint{0}{\le -\p^{\b e_1} g(\b x) \quad \forall \b x\in \Kcons \label{eq:m1}}
		\addConstraint{0}{\le -\p^{\b e_2} g(\b x) \quad \forall \b x\in \Kcons\label{eq:m2}}
		\addConstraint{\b 0_{2\times 2} }{\preccurlyeq \left[\left(\p^{\b e_i+\b e_j}g\right)(\b x) \right]_{i,j\in [2]}\quad \forall \b x\in \Kcons.\label{eq:c}}
	\end{mini!}
	To demonstrate the importance of imposing the shape constraints we made the problem even more challenging and fixed $\Kcons=\prod_{j\in[2]}\left[\min_{n\in [N_{tot}]}(\b x_{n})_j,2\right]$. This choice allows us to illustrate how the imposed shape constraints are satisfied outside of $\Kcons$, which here does not contain all the points. The covering of $\Kcons$ was uniform, performed through rectangles of size $\delta_1 \times \delta_2$. The values of $\delta_1$ and $\delta_2$ were chosen to have $15$ added points per dimension, resulting in $M=225$ $\tilde{\b x}_m$-s. The chosen kernel was Gaussian with bandwidth $\sigma$ set to the  square root of the eighth decile of the squared pairwise distances of the points $(\b x_n)_{n\in[N_{tot}]}$. As discussed after \eqref{eq:eta_SDP_simple}, the Gaussian kernel being translation-invariant, the computation of $\eta_{m,P_i}$ can be centered at the origin, and it is sufficient to evaluate $\eta(\b 0, \b \delta; D_i)$ defined in \eqref{eq:eta-function}. These values were approximated numerically by taking
	$50$ $\X$-points uniformly at random in $[-\delta_1,\delta_1]\times[-\delta_2,\delta_2]$. For the convexity constraint ($P_i=2$), we applied additionally a $\b u=[\cos(\theta);\sin(\theta)]$ parameterization with $20$ equidistant values of $\theta$ from $[0,\pi)$, owing to the invariance of $\eta(\b 0, \b \delta; D_i)$ when replacing $\b u$ by $-\b u$.
	We considered four scenarios in terms of the shape constraints imposed: (i) no shape constraint [\eqref{eq:reg}], (ii) two SOC-based monotonicity constraints [\eqref{eq:reg}-\eqref{eq:m2}], (iii) one SOC-based convexity constraint [\eqref{eq:reg}, \eqref{eq:c}], (iv) two SOC-based monotonicity and one SOC-based convexity constraint [\eqref{eq:reg}-\eqref{eq:c}], where turning \eqref{eq:c} into \eqref{eq:pointwise-constraints_SOC} still leads to and SDP constraint, adding an extra variable.
	In our experiments, we partitioned randomly the dataset	$(\b x_n,y_n)_{n\in [N_{tot}]}$
	into a validation set $\D_{val}$ and a test set $\D_{test}$ of approximately equal size ($\#\D_{val}=271$, $\#\D_{test}=272$) corresponding each to $50\%$ of the total dataset. $20$-fold cross-validation  was performed on	$\D_{val}$ to estimate the optimal value of $\tilde{\lambda}$ on a logarithmic grid. We then selected randomly $10\%$ of $\D_{val}$ (referred to as $\D_{val}^{'}$) to optimize $\Lcal$ over this small training set using one of the four constraint settings detailed above for the estimated $\tilde{\lambda}$.\footnote{The rationale behind selecting only $10\%$ of $\D_{val}$ is to make the problem more challenging and to illustrate the usefulness of considering shape constraints for small sample size.} The efficiency of the resulting estimate for $g$ was evaluated by the mean-squared error (MSE) over $\D_{val}^{'}$ and $\D_{test}$.
	The whole experiment was repeated $20$ times. The resulting statistics on the MSE values are summarized in Fig.~\ref{fig:labour_boxplot_small}, with a visual illustration of the underlying curves in Fig.~\ref{fig:labour}. As it can be observed, adding shape constraints gradually improves the generalization performance (Fig.~\ref{fig:labour_boxplot_small}) while mitigating overfitting on the training set, and also helps satisfying the shape requirements outside of the constraint set $\Kcons$ (Fig.~\ref{fig:labour}). Similar improvements on the correspondence between test and train RMSE when incorporating shape constraints have been observed in an early version\footnote{See \url{https://github.com/mcurmei627/dantzig/tree/master/Experiments/Synthetic}} of \citet{curmei21shape}.
	
	These four applications demonstrate the efficiency of the proposed SOC approach in the context of shape optimization tasks, safety-critical control, robotics, and econometrics.
	
	\begin{figure}
		\centering	
		\resizebox{\linewidth}{!}{\includegraphics{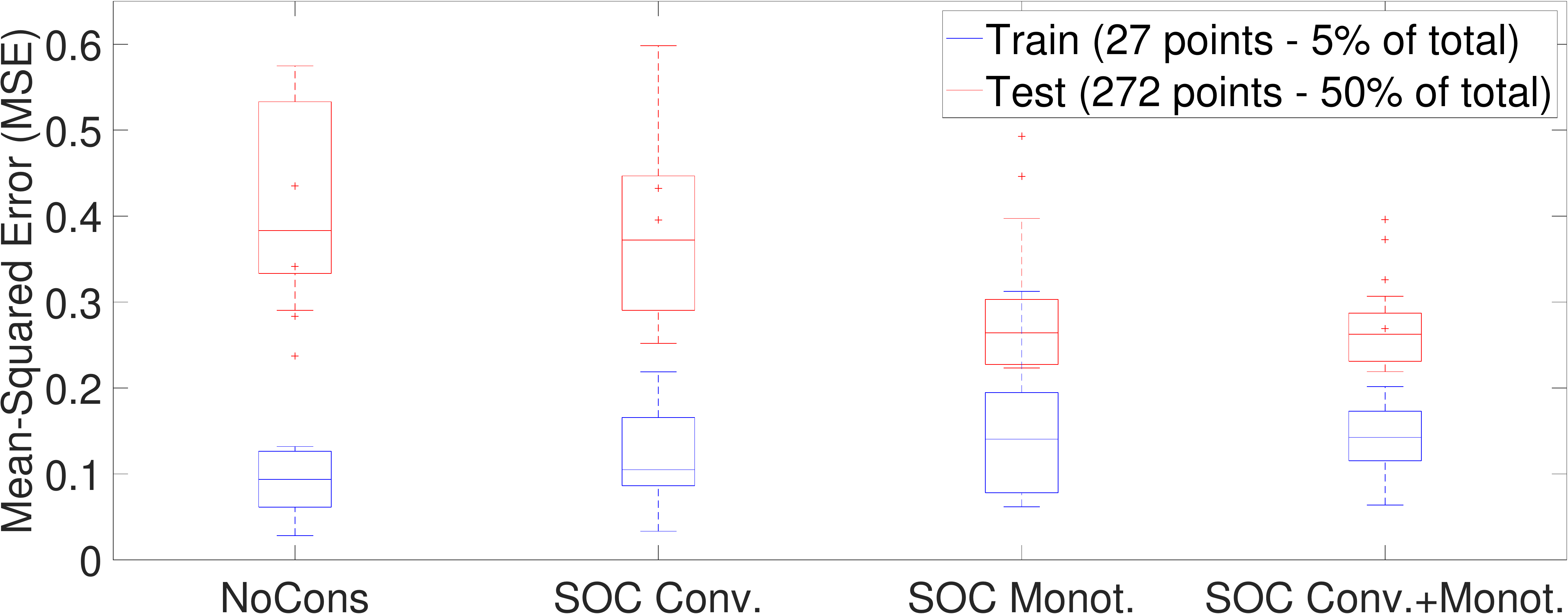}}	
		\caption{MSE as a function of incorporating shape constraints with the proposed SOC technique. NoCons: no constraint. SOC Monot.: two monotonicity constraints. SOC Conv.: one convexity constraint. SOC Conv.+Monot.:  one convexity and two monotonicity constraints.}
		\label{fig:labour_boxplot_small}
	\end{figure}
	\begin{figure}
		\centering	
		\begin{minipage}[c]{.48\linewidth}
			\begin{center}			
				\subfloat[][NoCons]{\resizebox{\linewidth}{!}{\label{Labour_27Pts_wMSCons}\includegraphics{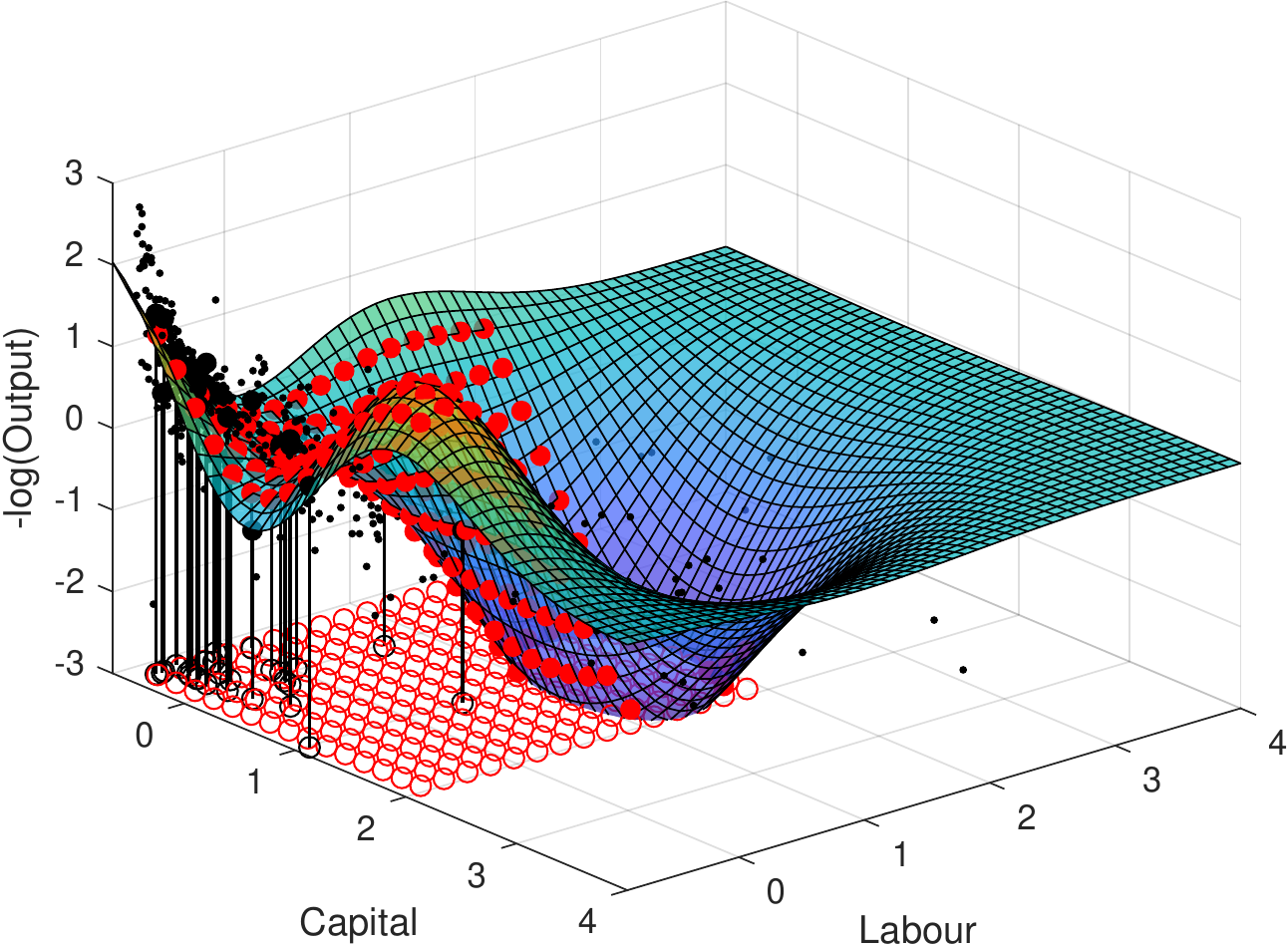}}}
			\end{center}				
		\end{minipage}
		\begin{minipage}[c]{.48\linewidth}
			\begin{center}			
				\subfloat[][SOC Monot.]{\resizebox{\linewidth}{!}{\label{Labour_27Pts_wCSCons}\includegraphics{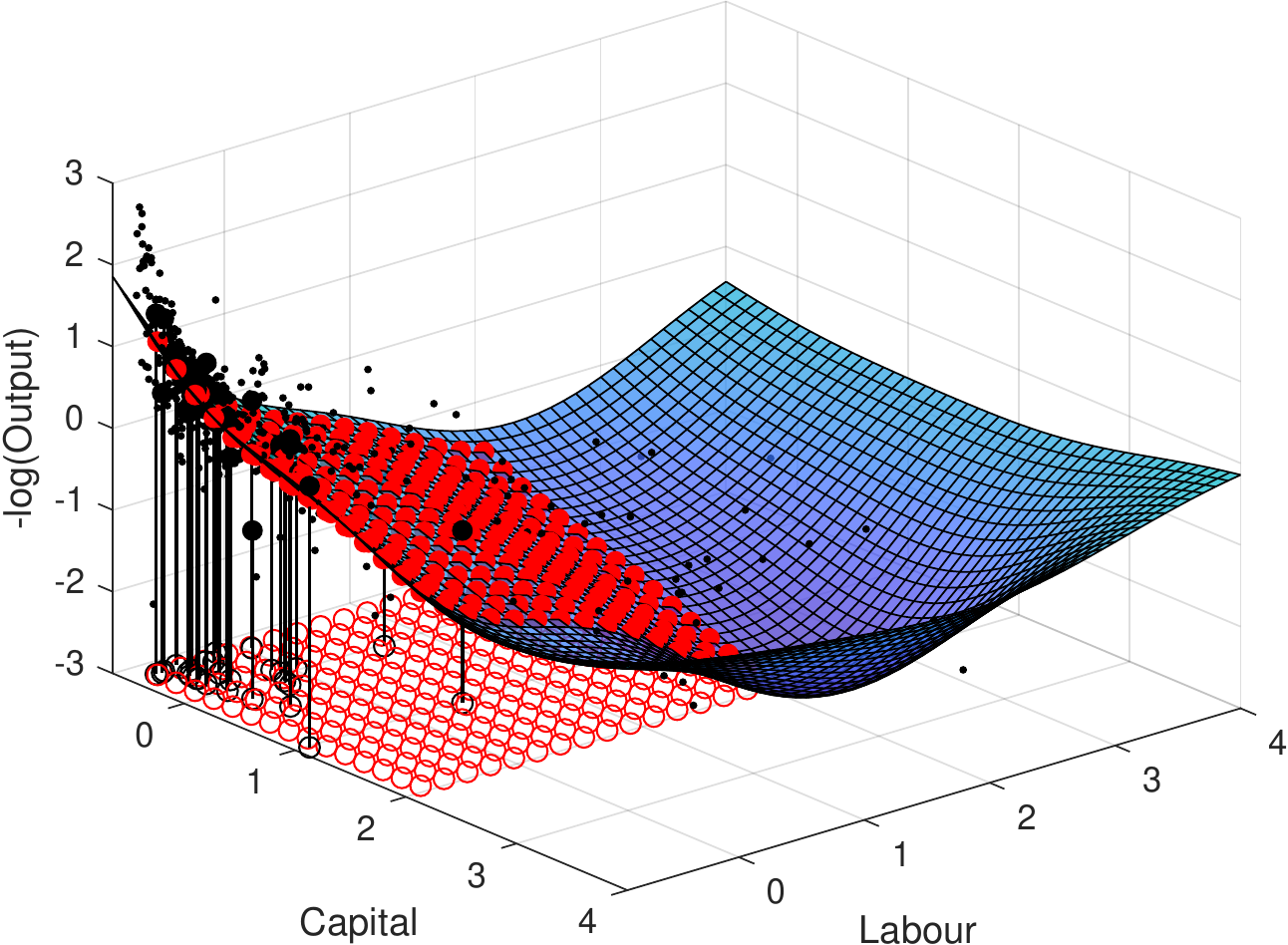}}}
			\end{center}				
		\end{minipage}
		
		\begin{minipage}[c]{.48\linewidth}
			\begin{center}			
				\subfloat[][SOC Conv.]{\resizebox{\linewidth}{!}{\label{Labour_27Pts_wMSCSCons}\includegraphics{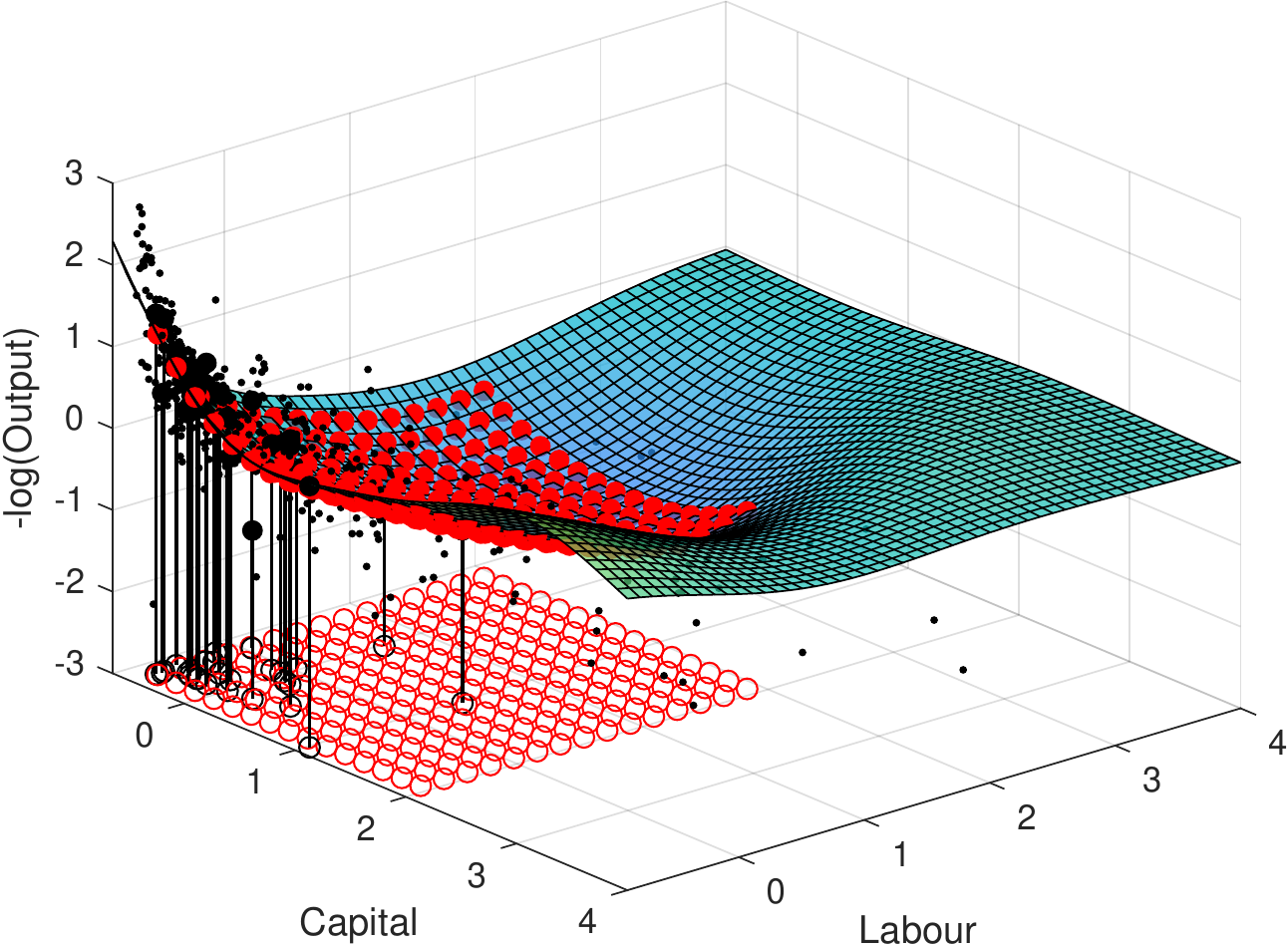}}}
			\end{center}				
		\end{minipage}
		\begin{minipage}[c]{.48\linewidth}
			\begin{center}			
				\subfloat[][SOC Conv.+Monot.]{\resizebox{\linewidth}{!}{\label{Labour_27Pts_wMCCons}\includegraphics{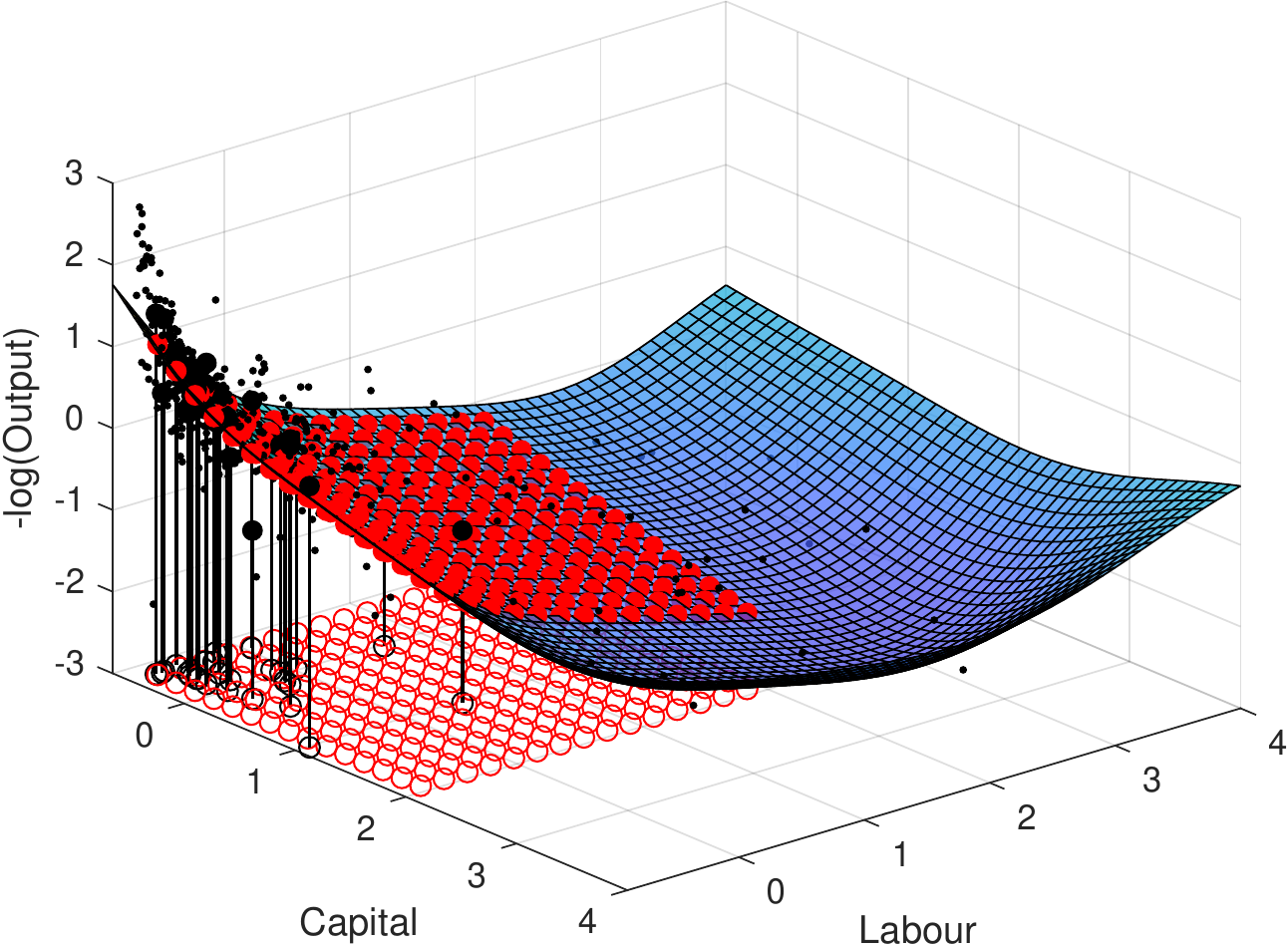}}}
			\end{center}				
		\end{minipage}
		\caption{Illustration of the production function estimates with different shape constraints. Notation of the methods: as in Fig.~\ref{fig:labour_boxplot_small}. Red circles: covering points of $\Kcons$. Red points on the surface: resulting $y$ values. Black circles with vertical lines: $N_{te}$ test points. Black circles without vertical lines: remaining  $(\b x, y)$ points.}
		\label{fig:labour}
	\end{figure}
	
	\section{Conclusions} \label{sec:conclusions}
	In this paper we focused on the problem of incorporating hard affine SDP shape constraints on function  derivatives into optimization problems over vector-valued reproducing kernel Hilbert spaces. We proposed a unified and modular second-order cone (SOC) based convex optimization framework to tackle this task. We designed and analysed two complementary approaches to derive SOC-based tightenings; they build upon a convex separation theorem in RKHSs (Theorem~\ref{thm:inclusion}) and on an upper bound of the modulus of continuity (Theorem~\ref{thm:SDP}). We established the existence and certificate of optimality of the tightenings (Theorem~\ref{thm:certificate}) alongside with a convergence guarantee (Proposition~\ref{thm:apriori_bound}) in terms of the refinement of the underlying covering. In addition, we proposed the soap bubble algorithm which guarantees hard shape constraints while adaptively refining the covering, and proved its convergence (Theorem~\ref{thm:soap_bubble}). The efficiency of the approach was demonstrated in four applications (Section~\ref{sec:numerical-demos}): in the context of shape optimization, safety-critical control, robotics and econometrics.
	
	\acks{We thank the anonymous referee for his positive comments and for pointing out the useful reference \citet{attouch14variational}. ZSz benefited from the support of the Europlace Institute of Finance and that of the \href{http://www.cmap.polytechnique.fr/~stresstest/}{Chair Stress Test}, RISK Management and Financial Steering, led by the French École Polytechnique and its Foundation and sponsored by BNP Paribas.}
	
	\appendix
	\section{Proofs}\label{sec:proofs}
	Section~\ref{sec:proof:main} contains the proofs of our results (detailed in Section~\ref{sec:constraints} -- Section~\ref{sec:covering_algorithms}).  Section~\ref{sec:proof:aux} is dedicated to auxiliary lemmas used in Section~\ref{sec:proof:main}.
	
	\subsection{Auxiliary Lemmas} \label{sec:proof:aux}
	In this section we provide auxiliary lemmas with their proofs.
	\begin{lemma}[Infimum over balls]\label{lemma:inf:balls}
		Let $\F$ be a Hilbert space, $\b g, \b c\in \F$ and $r>0$. Then 
		\begin{align*}
			\inf\limits_{\b w \in \dBB_{\F}(\b c,r)} \left<\b g,\b w\right>_{\F}=\left<\b g,\b c\right>_{\F}-r\|\b g\|_\F.
		\end{align*}
	\end{lemma}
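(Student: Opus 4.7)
The plan is to reduce to an origin-centered ball by translation and then invoke the Cauchy--Schwarz inequality together with its equality case. Specifically, I would set $\b w = \b c + \b h$ with $\b h$ ranging over $\dBB_{\F}(\b 0, r) = \{\b h \in \F : \|\b h\|_\F < r\}$, which gives $\langle \b g, \b w\rangle_\F = \langle \b g, \b c\rangle_\F + \langle \b g, \b h\rangle_\F$. The claim then reduces to showing $\inf_{\|\b h\|_\F < r} \langle \b g, \b h\rangle_\F = -r\|\b g\|_\F$.

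For the lower bound, Cauchy--Schwarz yields $\langle \b g, \b h\rangle_\F \geq -\|\b g\|_\F \|\b h\|_\F > -r\|\b g\|_\F$ for every $\b h$ in the open ball, so taking infimum gives $\inf_{\|\b h\|_\F < r} \langle \b g, \b h\rangle_\F \geq -r\|\b g\|_\F$. For the matching upper bound, I would split into two cases. If $\b g = \b 0$, both sides equal $\langle \b 0, \b c\rangle_\F = 0$ and the statement is trivial. If $\b g \neq \b 0$, I would exhibit a minimizing sequence: for $n$ large enough that $r - 1/n > 0$, take $\b h_n := -(r - 1/n)\,\b g/\|\b g\|_\F$. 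Then $\|\b h_n\|_\F = r - 1/n < r$, so $\b h_n \in \dBB_{\F}(\b 0, r)$, and $\langle \b g, \b h_n\rangle_\F = -(r - 1/n)\|\b g\|_\F \to -r\|\b g\|_\F$, which forces the infimum to be at most $-r\|\b g\|_\F$.

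There is no real obstacle: the only subtlety is that the ball is open, so the infimum is not attained when $\b g \neq \b 0$ and a limiting sequence (rather than a direct minimizer) is needed. Undoing the translation recovers the stated formula $\langle \b g, \b c\rangle_\F - r\|\b g\|_\F$.
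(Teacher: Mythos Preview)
Your proof is correct and follows essentially the same approach as the paper: translate to the origin-centered ball and then evaluate $\inf_{\|\b h\|_\F < r}\langle \b g,\b h\rangle_\F = -r\|\b g\|_\F$. The paper's version is terser (it simply asserts this last identity without spelling out the Cauchy--Schwarz lower bound or the approximating sequence), so your write-up is in fact more complete on the open-ball subtlety.
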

	
	\begin{proof}{(Lemma~\ref{lemma:inf:balls})}
		The statement follows by noting that
		\begin{align*}
			\inf\limits_{\b w \in \dBB_{\F}(\b c,r)} \left<\b g,\b w\right>_{\F}=
			\left<\b g,\b c\right>_{\F}+\inf\limits_{\b w \in \BB_{\F}(\b 0,r)} \left<\b g,\b w\right>_{\F}=\left<\b g,\b c\right>_{\F}-r\|\b g\|_\F.
		\end{align*}
	\end{proof}
	
	\begin{lemma}[Infimum over half-spaces] \label{lemma:inf:half-spaces}
		Let $\F$ be a Hilbert space, $\b g,\b v\in\F$, $\b v\neq \b 0$, $\rho>0$ and assume that $\inf\limits_{\b w\in \dH^-_{\F}(\b v,\rho)}\left<\b g,\b w\right>_{\F}$ is finite. Then there exists $\xi\in \R_+$ such that
		\begin{align*}
			\b g=-\xi \b v \text{ and } -\xi \rho=\inf\limits_{\b w\in H^-_{\F}(\b v,\rho)}\left<\b g,\b w\right>_{\F}.
		\end{align*}
	\end{lemma}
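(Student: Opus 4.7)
The plan is to use the orthogonal decomposition of $\b g$ with respect to $\b v$ and show that each component must be constrained for the infimum to be finite.

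First, I would write $\b g = \alpha \b v + \b g^{\perp}$, where $\alpha \in \R$ and $\b g^{\perp} \in \Sp(\b v)^{\perp}$. The first step is to rule out a nonzero orthogonal component. Fix any reference point $\b w_{0} \in H^-_{\F}(\b v,\rho)$ (for instance $\b w_{0} = \b 0$, which works since $\rho > 0$), and consider the ray $\b w_{t} = \b w_{0} - t\b g^{\perp}$ for $t \geq 0$. Since $\langle \b v, \b g^{\perp}\rangle_{\F} = 0$, we have $\langle \b v, \b w_{t}\rangle_{\F} = \langle \b v, \b w_{0}\rangle_{\F} \leq \rho$, so $\b w_{t} \in H^-_{\F}(\b v,\rho)$ for all $t \geq 0$. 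But $\langle \b g, \b w_{t}\rangle_{\F} = \langle \b g, \b w_{0}\rangle_{\F} - t\|\b g^{\perp}\|_{\F}^{2}$, which tends to $-\infty$ if $\b g^{\perp} \neq \b 0$, contradicting finiteness of the infimum. Hence $\b g^{\perp} = \b 0$ and $\b g = \alpha \b v$.

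Next I would rule out $\alpha > 0$. If $\alpha > 0$, take $\b w_{t} = -t\b v$ for $t > 0$; then $\langle \b v, \b w_{t}\rangle_{\F} = -t\|\b v\|_{\F}^{2} \leq \rho$, so $\b w_{t}\in H^-_{\F}(\b v,\rho)$, while $\langle \b g, \b w_{t}\rangle_{\F} = -t\alpha\|\b v\|_{\F}^{2} \to -\infty$, again contradicting finiteness. Setting $\xi := -\alpha \geq 0$ yields $\b g = -\xi \b v$ as claimed.

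Finally, it remains to compute the infimum. Using $\b g = -\xi \b v$, we have $\inf_{\b w\in H^-_{\F}(\b v,\rho)}\langle \b g, \b w\rangle_{\F} = -\xi \sup_{\b w\in H^-_{\F}(\b v,\rho)}\langle \b v, \b w\rangle_{\F}$. By definition of the closed half-space, $\sup_{\b w\in H^-_{\F}(\b v,\rho)}\langle \b v, \b w\rangle_{\F} = \rho$ (attained at $\b w = (\rho/\|\b v\|_{\F}^{2})\b v$), giving $-\xi\rho$ as required (the case $\xi = 0$ being trivial). The only minor subtlety is handling correctly the open versus closed half-space convention, but since the argument involves either taking unbounded sequences (which can be made to land in the open half-space as well) or realizing the supremum on the boundary (where the distinction is irrelevant by continuity of the infimum as a function of the constraint).
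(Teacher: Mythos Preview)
Your proof is correct and follows essentially the same approach as the paper: orthogonal decomposition of $\b g$ along $\b v$, then using rays inside the half-space to force $\b g^{\perp}=\b 0$ and the correct sign of the coefficient, followed by a direct computation of the infimum. The only cosmetic differences are the order in which you eliminate the orthogonal part versus the wrong sign, and your choice of base point $\b w_0=\b 0$ (valid since $\rho>0$) instead of $\tfrac{\rho}{\|\b v\|_\F^2}\b v$.
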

	
	\begin{proof}{(Lemma~\ref{lemma:inf:half-spaces})}
		Let us decompose $\b g$ along the one-dimensional subspace spanned by $\b v$: 
		$\b g=-\xi \b v +\b u$ where  $\xi\in\RR$, $\b u\in \F$ and $\left<\b u,\b v\right>_{\F}=0$. We show that a finite infimum implies that in this decomposition $\xi\ge 0$ and $\b u=\b 0$. Indeed,
		\begin{itemize}[labelindent=0em,leftmargin=1em,topsep=0.2cm,partopsep=0cm,parsep=0cm,itemsep=2mm]
			\item $\xi\ge 0$:  
			\begin{align}
				-\infty & \stackrel{(a)}{<} \hspace{-0.15cm} \inf_{\b w\in H^-_\F(\b v,\rho)}\left<\b g,\b w\right>_{\F} \stackrel{(b)}{\le} \hspace{-0.1cm} \inf_{\tau\in\RR_+}\left<-\xi \b v +\b u,\frac{\rho}{\|\b v\|_\F^2}\b v-\tau \b v\right>_{\F} \stackrel{(c)}{=} -\xi\rho+\inf_{\tau\in\RR_+}\xi\tau\|\b v\|^2_\F, \label{eq:xi}
			\end{align}
			where (a) holds by our assumption on the finiteness of the infimum. (b) is implied by the fact that for all $\tau \ge 0$, $\rho - \tau \left\| \b v \right\|_{\F}^2 \le \rho$, so $\left\langle\frac{\rho}{\|\b v\|_\F^2}\b v-\tau \b v, \b v\right\rangle_\F \le \rho$, hence $\frac{\rho}{\|\b v\|^2}\b v-\tau \b v \in H^-_\F(\b v,\rho)$. 
			(c) follows from $\left<\b u,\b v\right>_{\F}=0$. As $\b v\ne \b 0$,  \eqref{eq:xi} implies that $\xi\ge 0$.        
			\item $\b u=\b 0$:
			\begin{align}
				-\infty & \stackrel{(a)}{<} \hspace{-0.19cm} \inf_{\b w\in H^-_\F(\b v,\rho)}\left<\b g,\b w\right>_{\F}\stackrel{(b)}{\le} \hspace{-0.1cm} \inf_{\tau\in\RR_+}\left<-\xi \b v +\b u,\frac{\rho}{\|\b v\|_\F^2}\b v-\tau \b u\right>_{\F}  \stackrel{(c)}{=} \hspace{-0.1cm} -\xi\rho+\inf_{\tau\in\RR_+}-\tau\|\b u\|^2_\F. \label{eq:u}
			\end{align}
			Our assumption on the finiteness of the infimum implies (a). (b) follows from $\frac{\rho}{\|\b v\|_\F^2}\b v-\tau \b u \in H^-_\F(\b v,\rho)$ since
			$\left\langle \frac{\rho}{\|\b v\|_\F^2}\b v-\tau \b u, \b v\right\rangle_\F = \rho$ for any $\tau \in \R$.
			(c) is again a consequence of   $\left<\b u,\b v\right>_{\F}=0$. Hence \eqref{eq:u} means that $\b u = \b 0$.
		\end{itemize}
		Applying the obtained $\b g=-\xi \b v$ relation ($\xi \ge 0$),  we conclude that 
		\begin{align*}
			\inf_{\b w\in H^-_\F(\b v,\rho)}\left<\b g,\b w\right>_{\F} 
			&= \inf_{\b w\in H^-_\F(\b v,0)}\left<-\xi \b v,\frac{\rho}{\|\b v\|_\F^2}\b v+\b w\right>_{\F} =  \inf_{\b w\in H^-_\F(\b v,0)} \left( -\xi \rho  + \left<-\xi \b v, \b w\right>_\F\right) = -\xi \rho
		\end{align*}
		using that $\left<-\xi \b v, \b w\right>_\F\ge 0$ since $\xi\ge 0$ and $\b w \in H^-_\F(\b v,0)$, with the infimum attained at $\b w =\b 0$.
	\end{proof}
	
	\begin{lemma}[Closed convex constraints]\label{lemma:closed_constraints}
		Let $C_{\app} = \left(\bigcap_{i\in\mathcal{I}_{\text{SOC}}} C^i_{P_i,\text{SOC}}\right)\cap \left(\bigcap_{i\in\mathcal{I}_{\Omega}} C^i_{1,\Omega}\right)$ with $C^i_{P_i,\text{SOC}}$ and $C^i_{1,\Omega}$ defined as in \eqref{def:mixed-constraint:P>=1_SOC} and \eqref{def:mixed-constraint:P=1_SOC}, then $C_{\app}$ is a  closed convex set of $\FK\times \R^B$. So is $C$, defined in \eqref{def_mixded_constraint}.
	\end{lemma}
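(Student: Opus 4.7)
\noindent\emph{Proof plan.} The approach is to show that each of the building-block constraint sets defining $C_{\app}$ and $C$ is closed and convex in $\FK\times\R^B$; the lemma then follows because arbitrary intersections of closed convex sets are closed and convex. The reproducing property for derivatives (Lemma~\ref{lemma:reproducing}) ensures that for any $D\in O_{Q,s}$ and $\b x\in\X$ the evaluation map $\b f\mapsto D\b f(\b x)=\left<\b f, DK(\cdot,\b x)\right>_K$ is continuous and linear on $\FK$, which underlies the continuity arguments below.

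\noindent\emph{SOC sets.} For $i\in\mathcal{I}_{\text{SOC}}$ and $m\in[M_i]$, testing the matrix inequality defining $C^i_{P_i,\text{SOC}}$ against an arbitrary $\b u\in\S^{P_i-1}$ yields the scalar requirement
\begin{align*}
\b u^\top\bigl[\b D_i(\b f-\b f_{0,i})(\tilde{\b x}_{i,m}) + \diag(\bm{\Gamma}_i\b b-\b b_{0,i})\bigr]\b u - \eta_{i,m,P_i}\left\|\b f-\b f_{0,i}\right\|_K \ge 0.
\end{align*}
The left-hand side is the sum of a continuous affine form in $(\b f,\b b)$ and the continuous concave term $-\eta_{i,m,P_i}\|\b f-\b f_{0,i}\|_K$, hence it is continuous and concave; its super-level set at $0$ is therefore closed convex. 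Intersecting over $\b u\in\S^{P_i-1}$ and $m\in[M_i]$ shows that $C^i_{P_i,\text{SOC}}$ is closed convex.

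\noindent\emph{$\Omega$-sets and original set $C$.} For $i\in\mathcal{I}_{\Omega}$, Theorem~\ref{thm:inclusion} together with~\eqref{eq:inclusion:m-fixed} identifies $(\b f,\b b)\in C^i_{1,\Omega}$ with the family of inclusions $\bar{\Omega}_{i,m}\subseteq H^{+}_{K}(\b f-\b f_{0,i},b_{0,i}-\bm{\Gamma}_i\b b)$, $m\in[M_i]$, i.e.\
\begin{align*}
\inf_{\b w\in\bar{\Omega}_{i,m}}\left<\b f-\b f_{0,i},\b w\right>_K + \bm{\Gamma}_i\b b - b_{0,i}\ge 0.
\end{align*}
For each fixed $\b w$ the function inside the infimum is continuous and affine in $(\b f,\b b)$, so the pointwise infimum is concave and upper semi-continuous, and its super-level set at $0$ is closed convex. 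Taking intersections over $m\in[M_i]$ yields that $C^i_{1,\Omega}$ is closed convex, and intersecting over $i$ then gives that $C_{\app}$ is closed convex. The argument for $C$ is entirely analogous and in fact simpler: each constraint in \eqref{def_mixded_constraint}, tested against $\b u\in\S^{P_i-1}$, becomes a continuous affine inequality $\b u^\top[\b D_i(\b f-\b f_{0,i})(\b x)+\diag(\bm{\Gamma}_i\b b-\b b_{0,i})]\b u\ge 0$, so $C$ is an intersection of closed affine half-spaces over $i\in[I]$, $\b x\in\Kcons_i$, $\b u\in\S^{P_i-1}$, and is therefore closed convex.

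\noindent\emph{Main obstacle.} No substantive obstacle is expected: the nontrivial ingredients (the inclusion reformulation from Theorem~\ref{thm:inclusion} and the derivative reproducing property from Lemma~\ref{lemma:reproducing}) are already established, and what remains is standard functional-analytic bookkeeping on concavity, continuity, and preservation of these properties under infima and intersections.
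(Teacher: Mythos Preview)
Your proposal is correct and follows essentially the same approach as the paper: both reduce to showing each building-block constraint set is closed convex (via the reproducing property for the SOC sets and via the half-space inclusion reformulation~\eqref{eq:inclusion:m-fixed} for the $\Omega$-sets), then take intersections. The only minor stylistic difference is that the paper verifies closedness of $C^i_{1,\Omega}$ by a direct sequential argument, whereas you invoke the fact that an infimum of continuous affine functionals is concave and upper semi-continuous; both are standard and equivalent here.
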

	
	\begin{proof}{(Lemma~\ref{lemma:closed_constraints})}
		The set $C_{\app}$ is closed and convex as it is the intersection of the closed convex sets $\left\{C^i_{P_i,\text{SOC}}\right\}_{i\in\mathcal{I}_{\text{SOC}}}$ and $\left\{C^i_{1,\Omega}\right\}_{i\in \mathcal{I}_{\Omega}}$. The closedness of the latter sets can be proved as follows.
		\begin{itemize}[labelindent=0em,leftmargin=1.2em,topsep=0.2cm,partopsep=0cm,parsep=0cm,itemsep=0mm]
			\item Closedness of $C^i_{P_i,\text{SOC}}$: Since $\|\cdot\|_K$ is lower semicontinuous and $\FK$ is a vRKHS, the evaluation of the functions being continuous, the SOC constraints define closed sets, thus any $C^i_{P_i,\text{SOC}}$ is closed.
			\item Closedness of $C^i_{1,\Omega}$: Let $\left(\b f^{(k)}, \b b^{(k)}\right)_{k\in\NN}\in \left(C^i_{1,\Omega}\right)^\NN$ converge to some $(\b f, \b b)\in \FK\times \R^B$. We show that $(\b f, \b b)\in C^i_{1,\Omega}$ which is equivalent to  $\Omega_{i,m}\subseteq H^{+}_{\K}\left(\b f - \b f_{0,i}, b_{0,i}-\bm{\Gamma}_i\b b\right)$ for every $m\in[M_i]$ by \eqref{eq:inclusion:m-fixed}.  Since the sequence $\left(\b f^{(k)}, \b b^{(k)}\right)_{k\in\NN}\in \left(C^i_{1,\Omega}\right)^\NN$, one has $\Omega_{i,m}\subseteq H^{+}_{\K}\left(\b f^{(k)} - \b f_{0,i}, b_{0,i}-\bm{\Gamma}_i\b b^{(k)}\right)$ for all $k\in\NN$ and $m\in[M_i]$ by \eqref{eq:inclusion:m-fixed}, i.e.\ $\langle \b g, \b f^{(k)} - \b f_{0,i}\rangle_K \ge b_{0,i}-\bm{\Gamma}_i\b b^{(k)}$
			for all $k\in\NN$ and any $\b g \in \Omega_{i,m}$. This implies   that $\langle \b g, \b f - \b f_{0,i}\rangle_K \ge b_{0,i}-\bm{\Gamma}_i\b b$ also holds for all $\b g \in \Omega_{i,m}$ by continuity. Hence $\Omega_{i,m}\subseteq H^{+}_{\K}(\b f - \b f_{0,i}, b_{0,i}-\bm{\Gamma}_i\b b)$ for all $m\in [M_i]$ which means that $(\b f, \b b)\in C^i_{1,\Omega}$ by \eqref{eq:inclusion:m-fixed}.
		\end{itemize}
		Similarly $C$ is the intersection of closed convex sets as per \eqref{def_mixded_constraint}.
	\end{proof}	
	
	\subsection{Proofs of Our Results} \label{sec:proof:main}
	This section contains the proofs of the results presented in Section~\ref{sec:constraints}, Section~\ref{sec:optimization}, and Section~\ref{sec:covering_algorithms}: Lemma~\ref{lemma:reproducing} (Section~\ref{sec:proof:lemma:reproducing}), Theorem~\ref{thm:inclusion} (Section~\ref{sec:proof:thm:inclusion}), Theorem~\ref{thm:SDP} (Section~\ref{sec:proof:thm:SDP}), Lemma~\ref{lemma:eta_SDP_4Dtensor} (Section~\ref{sec:proof:lemma:eta_SDP_4Dtensor}),
	Theorem~\ref{thm:certificate} (Section~\ref{sec:proof:thm:certificate}), 
    Corollary~\ref{thm:aposteriori_bound} (Section~\ref{sec:proof:thm:aposteriori_bound}), 
    Proposition~\ref{thm:apriori_bound} (Section~\ref{sec:proof:thm:apriori_bound}),
    Proposition~\ref{prop:reproducing} (Section~\ref{sec:proof:prop:reproducing}), and 
    Theorem~\ref{thm:soap_bubble} (Section~\ref{sec:proof:thm:soap_bubble}). 
	
	\subsubsection{Proof of Lemma~\ref{lemma:reproducing}} \label{sec:proof:lemma:reproducing}
	By the reproducing property of matrix-valued kernels 
	\begin{align*}
		f_q(\b x) &= \b e_q^\T \b f(\b x)= \left<\b f,K(\cdot,\b x)\b e_q\right>_\K \stackrel{(*)}{\Rightarrow} 
		D_q f_q(\b x)  = \left<\b f,D_q K(\cdot,\b x)\b e_q\right>_\K 
		\intertext{provided that the terms on the r.h.s.\ of the  implication $(*)$ exist, which is proved below. Hence}
		D(\b f)(\b x) &= \sum_{q\in [Q]} \beta_q D_q f_q(\b x)
		= \sum_{q\in [Q]} \beta_q \left<\b f,D_q K(\cdot,\b x)\b e_q\right>_\K =  \left<\b f,\sum_{q\in [Q]}  D_q K(\cdot,\b x)\beta_q\b e_q\right>_\K.
	\end{align*}
	For $(*)$ to be valid, one has to show that for any $\b r\in \N^d$ satisfying $|\b r| \le s$ and any $q\in[Q]$ we have
	\begin{subequations}
		\begin{gather}
			\b f\in \C^s(\X, \R^Q), \label{eq:repr-prop_proof:1}\\
			\p_2^{\b r}K(\cdot,\b x)\b e_q\in\FK,\label{eq:repr-prop_proof:2}\\
			\p^{\b r} (f_q)(\b x) = \left<\b f, \p_2^{\b r}K(\cdot,\b x)\b e_q\right>_\K \; (\forall\,\b f \in \FK,\,\b x \in \X), \label{eq:repr-prop_proof:3}
		\end{gather}
	\end{subequations}
	where $\p_2^{\b r}K(\b x',\b x):=\p^{\b r}[\b x \mapsto K(\b x',\b x)] \in \R^{Q\times Q}$; this extends to general $D_q$ by taking linear combinations. 
	We prove \eqref{eq:repr-prop_proof:1}, \eqref{eq:repr-prop_proof:2}, \eqref{eq:repr-prop_proof:3} by induction over $s_0\in\iv{0}{s-1}$, assuming the property to be satisfied for all $\b r$ such that $|\b r| \le s_0$. For $s_0=0$, the assertion is true. Fix $p,q\in[Q]$ and $\b r$ satisfying $|\b r| = s_0$.
	Let $\b r':=\b r + \b e_p$, $|\b r'| = |\b r|+1 = s_0+1$ where $p,q\in[Q]$ are fixed and $\b e_p\in \R^d$ is the $p^{th}$ canonical basis vector. We show the statement first for the interior $\mathring{\X}$, then for the whole $\X$, extending by continuity. For all $h\neq 0$ and $\b x \in \X$, let us introduce the difference quotient $\Delta_{h, \b x}$, the limits of which shall give \eqref{eq:repr-prop_proof:1}, \eqref{eq:repr-prop_proof:2}, \eqref{eq:repr-prop_proof:3}
	\begin{align}
		\Delta_{h, \b x} := \frac{\partial_2^{\b r}K(\cdot,\b x+h\b e_p)\b e_q-\partial_2^{\b r}K(\cdot,\b x)\b e_q}{h}. \label{def:diff-quotient}
	\end{align}
	
	\noindent $\bullet$ Case of $\mathring{\X}$: 
	Take $\b x_1, \b x_2 \in \mathring{\X}$ and $\rho>0$ such that $\BB_\X(\b x_1,\rho)\cup \BB_\X(\b x_2,\rho)\subset \mathring{\X}$. Let $h_1, h_2 \in[-\rho,\rho]\backslash\{0\}$. By induction, for any $\b x, \b x'\in\X$,
	\begin{align}
		\b e_q^\top \partial_1^{\b r} \partial_2^{\b r}K(\b x',\b x)\b e_q=\partial^{\b r} (\b e_q^\top \partial_2^{\b r}K(\cdot,\b x)\b e_q)(\b x') = \left<\partial_2^{\b r}K(\cdot,\b x)\b e_q, \partial_2^{\b r}K(\cdot,\b x')\b e_q\right>_\K, \label{eq:deriv-prop_DD}
	\end{align}
	where $\p_1^{\b r}$ is defined analogously to $\p_2^{\b r}$. Let us derive Cauchy sequences based on $\Delta_{h, \b x}$. Since 
	\begin{align}
		\|\Delta_{h_1, \b x_1} - \Delta_{h_2, \b x_2}\|_\K^2 &= 
		\|\Delta_{h_1, \b x_1}\|_\K^2 + \|\Delta_{h_2, \b x_2}\|_\K^2 -2 \left<\Delta_{h_1, \b x_1}, \Delta_{h_2, \b x_2}\right>_\K, \label{eq:diff-decomposition}
	\end{align}
	it is sufficient to consider quantities of the form
	\begin{eqnarray}
		\lefteqn{\left<\Delta_{h_1, \b x_1}, \Delta_{h_2, \b x_2}\right>_\K\nonumber}\\
		&&\hspace{-0.9cm}= \frac{1}{h_1 h_2}\left[
		\left<\p_2^{\b r}K(\cdot,\b x_1+h_1\b e_p)\b e_q,\p_2^{\b r}K(\cdot,\b x_2+h_2\b e_p)\b e_q\right>_K
		+  \left<\p_2^{\b r}K(\cdot,\b x_1)\b e_q,\p_2^{\b r}K(\cdot,\b x_2)\b e_q\right>_K \right.\nonumber\\
		&&\hspace{-0.9cm} \hspace{1.4cm}  - \left. \left<\p_2^{\b r}K(\cdot,\b x_1)\b e_q, \p_2^{\b r}K(\cdot,\b x_2+h_2\b e_p)\b e_q\right>_K -
		\left<\p_2^{\b r}K(\cdot,\b x_1+h_1\b e_p)\b e_q,\p_2^{\b r}K(\cdot,\b x_2)\b e_q\right>_K\right].\nonumber\\
		&&\hspace{-0.9cm}=\frac{1}{h_1 h_2}\b e_q^\top \left[ 
		\p_1^{\b r} \p_2^{\b r}K(\b x_2+h_2\b e_p,\b x_1+h_1\b e_p) +
		\p_1^{\b r} \p_2^{\b r}K(\b x_2,\b x_1) \right.\nonumber\\
		&&\quad \hspace{1.5cm} \left. - 
		\p_1^{\b r} \p_2^{\b r}K(\b x_2 + h_2 \b e_p,\b x_1) -
		\p_1^{\b r} \p_2^{\b r}K( \b x_2,\b x_1 + h_1 \b e_p)
		\right] \b e_q\nonumber\\
		&& \hspace{-0.9cm}= \int_{0}^{1}\int_{0}^{1} \b e_q^\top \p_1^{\b r'} \p_2^{\b r'}K(\b x_1+\alpha h_1\b e_p,\b x_2+\beta h_2\b e_p)\b e_q \d\alpha \d\beta, \label{eq:int-by-part} 
	\end{eqnarray}
	where \eqref{eq:int-by-part} follows from integration by parts and by the fact that $\K \in \C^{s,s}\left(\X\times \X,\R^{Q\times Q}\right)$.
	Applying the resulting expression \eqref{eq:int-by-part} in \eqref{eq:diff-decomposition}, we obtain that
	\begin{align}
		\|\Delta_{h_1, \b x_1} - \Delta_{h_2, \b x_2}\|_\K^2 &= 
		\sum_{i,j \in [2]} (-1)^{i+j}\int_{0}^{1}\int_{0}^{1} \b e_q^\top \p_1^{\b r'} \p_2^{\b r'}K(\b x_i+\alpha h_i\b e_p,\b x_j+\beta h_j\b e_p)\b e_q \d\alpha \d\beta. \label{eq:target:before-modulus-of-cont}
	\end{align}
	To upper bound \eqref{eq:target:before-modulus-of-cont}, 	since $\K \in \C^{s,s}\left(\X\times \X,\R^{Q\times Q}\right)$, one can define the modulus of continuity for any $\delta \geq 0$
	\begin{align*}
		\omega\left(\b e_q^\top \p_1^{\b r'} \p_2^{\b r'}K(\cdot,\cdot)\b e_q, \delta\right) & = \hspace{-0.1cm} \sup_{\substack{\b x,\, \b x',\,\b y,\, \b y'\in\X,\\ \|\b x-\b x' \|_2\le \delta,\,\|\b y-\b y'\|_2\le \delta}} \hspace{-0.2cm} \left|\b e_q^\top \p_1^{\b r'} \p_2^{\b r'}K(\b x',\b x)\b e_q-\b e_q^\top \p_1^{\b r'} \p_2^{\b r'}K(\b y',\b y)\b e_q\right|
	\end{align*}
	which is a continuous function of $\delta$, with limit $0$ at $0$. Forming two groups in \eqref{eq:target:before-modulus-of-cont} with $(i,j) \in \{(1,1), (1,2)\}$ and 
	$(i,j) \in \{(2,2),(2,1)\}$ 
	one gets the bound 
	\begin{align}\label{eq:bound_Delta}
		\|\Delta_{h_1, \b x_1} - \Delta_{h_2, \b x_2}\|_\K\le \sqrt{2\omega\left(\b e_q^\top \p_1^{\b r} \p_2^{\b r}K(\cdot,\cdot)\b e_q, \left\| \b x_1 - \b x_2\right\|_2 + |h_1| + |h_2|\right)}
	\end{align}
	depending on $\omega(\b e_q^\top \p_1^{\b r} \p_2^{\b r}K(\cdot,\cdot)\b e_q,\cdot)$, since
	\begin{align*}
		\left\| (\b x_1 + \beta h_1 \b e_p) - (\b x_2 + \beta h_2 \b e_p)\right\|_2 &\le \left\| \b x_1 - \b x_2\right\|_2 + \underbrace{\beta}_{\in [0,1]} |h_1 - h_2| \underbrace{\left\|\b e_p\right\|_2}_{=1} \\
		&\le \left\| \b x_1 - \b x_2\right\|_2 + |h_1| + |h_2|.
	\end{align*}
	Having derived the upper bound  \eqref{eq:bound_Delta} to control $\|\Delta_{h_1, \b x_1} - \Delta_{h_2, \b x_2}\|_\K$, let us choose $\b x_1=\b x_2=\b x \in \mathring{\X}$ and any sequence $(h_n)_{n\in \N} \subset [-\rho,\rho]\backslash\{0\}$ such that $h_n\xrightarrow{n\rightarrow \infty} 0$. In this case, \eqref{eq:bound_Delta} shows that $(\Delta_{h_n, \b x})_{n\in\N}$ is a Cauchy sequence in the Hilbert space $\FK$ so it converges by the completeness of $\FK$. Moreover \eqref{eq:bound_Delta} ensures that all the sequences --- independently of the choice of $(h_n)_{n\in \N}$ --- have the same limit which we denote formally by $\Delta_{0, \b x} \in \FK$. Since strong convergence in $\FK$ implies weak convergence, for any $\b f\in\FK$, we have
	\begin{align}
		\lim\limits_{h\rightarrow 0}\frac{\partial^{\b r}f_q(\b x+h\b e_p)-\partial^{\b r}f_q(\b x)}{h}=\lim\limits_{h\rightarrow 0}	\left<\b f, \Delta_{h, \b x}\right>_\K=\left<\b f, \Delta_{0, \b x}\right>_\K. \label{eq:f:r'}
	\end{align}Consequently $\p^{\b r'}f_q(\b x)$ exists.
	Moreover, by choosing $\b f= K(\cdot,\b x')\b e_{q'}$ in \eqref{eq:f:r'}, we deduce that $\Delta_{0, \b x}\in\FK$ equals to $\p_2^{\b r'}K(\cdot,\b x)\b e_q$ which establishes \eqref{eq:repr-prop_proof:2} and \eqref{eq:repr-prop_proof:3} for $\b r'$. The continuity of $\p^{\b r'}f_q(\b x)$ on $\mathring{\X}$, hence \eqref{eq:repr-prop_proof:1} for $\b r'$ follows from the Cauchy-Schwarz inequality
	\begin{align*}
		\left|\p^{\b r'}f_q(\b x_1)-\p^{\b r'}f_q(\b x_2)\right| &\le \|\b f\|_\K \|\Delta_{0, \b x_1} - \Delta_{0, \b x_2}\|_\K
	\end{align*}
	combined with \eqref{eq:bound_Delta}.
	
	\noindent$\bullet$ Case of $\X$: Let us consider an arbitrary point $\b x \in \X$. Then there exists a sequence $(\b x_n')_{n\in\N} \in (\mathring{\X})^\N$ converging to $\b x$ since $\X$ is contained in the closure of its interior. For any such sequence $(\b x_n')_{n\in\N}$, $(\Delta_{0, \b x_n'})_{n\in\N}$ is a Cauchy sequence by \eqref{eq:bound_Delta} applied with $h_1=h_2=0$ (hence convergent by the completeness of $\FK$), with the same limit which we again denote formally by $\Delta_{0, \b x}\in \FK$. Consequently,
	\begin{align*}
		\lim\limits_{\b x'\rightarrow \b x}\p^{\b r'}f_q(\b x')=\lim\limits_{\b x'\rightarrow \b x}	\left<\b f, \Delta_{0, \b x'}\right>_\K=\left<\b f, \Delta_{0, \b x}\right>_\K,
	\end{align*}
	so $\partial^{\b r'}f_q(\b x)$ exists and $\Delta_{0, \b x}\in \FK$ can be identified with $ \p_2^{\b r'}K(\cdot,\b x)\b e_q $ which establishes 
	\eqref{eq:repr-prop_proof:2} and \eqref{eq:repr-prop_proof:3} for $\b r'$. 	Let $f_q^{[\b r']}(\b x'):=\left<\b f, \Delta_{0, \b x'}\right>_\K$ for $\b x'\in\X$. Again by the Cauchy-Schwarz inequality, we obtain that $f_q^{[\b r']}$ is continuous on $\X$, and it is the continuous extension of $\p^{\b r'}f_q$ from $\mathring{\X}$ to $\X$. This proves \eqref{eq:repr-prop_proof:1} for $\b r'$ and concludes the induction.
	
	\subsubsection{Proof of Theorem~\ref{thm:inclusion}}\label{sec:proof:thm:inclusion}
	By the convex separation formula of \cite{dubovitskii65extremum}, the first statement is equivalent to the existence of $\b g_{\b f}, (\b g_{B,j})_{j\in [J_B]}, (\b g_{H,j})_{j\in [J_H]}\in\FK$ not vanishing simultaneously and satisfying
	\begin{align*}
		\inf\limits_{\b w \in \dH^{-}_{\K}\left(\b f-\b f_0, b_0-\bm{\Gamma} \b b\right)}\hspace{-0.1cm}\left<\b g_{\b f}, \b w\right>_{\K}+ \sum_{j \in [J_B]} \inf\limits_{\b w \in \dBB_{\K}(\b c_{j},r_{j})} \hspace{-0.1cm} \left<\b g_{B,j},\b w\right>_{\K} + \sum_{j \in [J_H]} \inf\limits_{\b w \in \dH^{-}_{\K} (\b v_{j},\rho_{j})} \hspace{-0.1cm} \left<\b g_{H,j},\b w\right>_{\K} &\geq 0,\\
		\b g_{\b f}+\sum_{j\in [J_B]}\b g_{B,j} + \sum_{j\in [J_H]}\b g_{H,j} &= \b 0.
	\end{align*}
	Since the sum of the infima is nonnegative, each infimum is finite. Hence by 
	Lemma~\ref{lemma:inf:balls} and Lemma~\ref{lemma:inf:half-spaces} we get that 
	the inclusion $\Omega \subseteq H^{+}_{\K}\left(\b f-\b f_0, b_0-\bm{\Gamma} \b b\right)$ holds if and only if there exist $[\xi_{\b f};\xi_1;\dots; \xi_{J_H}]\in\R_+^{J_H+1}$ and $(\b g_{B,j})_{j\in [J_B]}\in\FK^{J_B}$ not vanishing simultaneously (since $\xi_{\b{f}}=0$ $\Leftrightarrow$ $\b g_{\b f}=\b 0$, and  $\xi_j = 0$ $\Leftrightarrow$ $\b g_{H,j}= \b 0$) such that
	\begin{equation} \label{ineq-eq_thm_DM_xi_f}
		\begin{split}
			-\xi_{\b f}   b_0-\bm{\Gamma} \b b  + \sum_{j \in [J_B]} \left<\b g_{B,j},\b c_j\right>_{\K} - \sum_{j\in [J_H]} \xi_j \rho_j - \sum_{j\in [J_B]} r_j \left\|\b g_{B,j}\right\|_{\K}&\geq 0,\\
			-\xi_{\b f}(\b f-\b f_0) + \sum_{j\in [J_B]}\b g_{B,j} - \sum_{j \in [J_H]}\xi_j \b v_j &= \b 0. 
		\end{split}
	\end{equation}
	Let $\mathscr{V}=\Sp\left(\b f-\b f_0, \left\{\b c_j\right\}_{j\in [J_B]},\left\{\b v_j\right\}_{j\in [J_H]}\right)$ and $\b g_j= \text{proj}_{\mathscr{V}}(\b g_{B,j})$ where $\text{proj}_{\mathscr{V}}$ denotes the projection onto the subspace  
	$\mathscr{V}$. Since $\left<\b g_{B,j},\b c_j\right>_{\K}=\left<\b g_{j},\b c_j\right>_{\K}$ and $\left\|\b g_{j}\right\|_{\K}\le\left\|\b g_{B,j}\right\|_{\K}$, this family also satisfies \eqref{ineq-eq_thm_DM_xi_f}.
	Here, again $[\xi_{\b f};\xi_1;\dots; \xi_{J_H}]\in\R_+^{J_H+1}$ and $(\b g_{j})_{j\in [J_B]}\in\FK^{J_B}$ cannot all vanish. Indeed, if it were the case, then by 
	$\left<\b g_{B,j},\b c_j\right>_{\K}=\left<\b g_{j},\b c_j\right>_{\K}=0$, \eqref{ineq-eq_thm_DM_xi_f} would give $- \sum_{j\in [J_B]} r_j \left\|\b g_{B,j}\right\|_{\K}\geq 0$, so, since $r_j>0$ ($\forall j\in [J_B]$), $(\b g_{B,j})_{j\in [J_B]}$ would all vanish too.
	
	The nonnegative number $\xi_{\b f}$ cannot be zero since in this case either $H^{-}_{\K}\left(\b f-\b f_0, b_0-\bm{\Gamma} \b b\right)$ or $\Omega$ would be empty by  \eqref{ineq-eq_thm_DM_xi_f} \citep{dubovitskii65extremum}, both cases being excluded by assumption.
	Hence, we can divide
	\eqref{ineq-eq_thm_DM_xi_f} by $\xi_{\b f}>0$; replacing $\xi_j$ with $\xi_j/\xi_{\b f}$ and $\b g_{j}$ with $\b g_{j}/\xi_{\b f}$, the claimed equation \eqref{ineq-eq_thm_DM} follows.    
	
	\subsubsection{Proof of Theorem~\ref{thm:SDP}}\label{sec:proof:thm:SDP}
	In accordance with the r.h.s.\ of \eqref{def:eta_SDP} let us define
	\begin{align}
		\b g_{\b x,\b u}(\cdot):=\b u^\top \b D K(\cdot,\b x) \b u:=\sum_{p_1,\,p_2\in[P]} u_{p_1}u_{p_2}D_{p_1,\,p_2}K(\cdot,\b x) \in\FK, \label{eq:gxu}
	\end{align}
	where $\b x\in \X$ and $\b u\in \S^{P-1}$. Since $\Kcons\subseteq \bigcup_{m\in[M]} \BB_{\X}(\tilde{\b x}_{m},\delta_{m})$, for any $\b x\in \Kcons$ let us take $\tilde{\b x}_{m}$ for which $\|\b x-\tilde{\b x}_{m}\|_{\X}\le \delta_{m}$. Applying the reproducing formula \eqref{eq:repr-prop} and the Cauchy-Schwartz inequality, for any $\b f\in\Fk$ one gets the lower bound   
	\begin{align}
		\b u^\top \b D (\b f - \b  f_{0}) (\b x) \b u&=\langle \b f - \b  f_{0}, \b u^\top \b D K(\cdot,\b x) \b u\rangle_K\nonumber\\
		&= \b u^\top \b D (\b f - \b  f_{0}) (\tilde{\b x}_{m}) \b u +\langle \b f - \b  f_{0}, \b u^\top \left[\b D K(\cdot,\b x)-\b D K(\cdot,\tilde{\b x}_{m})\right] \b u\rangle_K\nonumber\\
		&\ge \b u^\top \b D (\b f - \b  f_{0}) (\tilde{\b x}_{m}) \b u -\|\b f - \b  f_{0}\|_K \left\|\b u^\top \left[\b D K(\cdot,\b x)-\b  D K(\cdot,\tilde{\b x}_{m})\right] \b u\right\|_K\nonumber\\
		&\ge \b u^\top \b D (\b f - \b  f_{0}) (\tilde{\b x}_{m}) \b u-\eta_{m,P} \|\b f - \b  f_{0}\|_K \label{eq:SDP:lower-bound}.
	\end{align}
	This means that for $(\b f,\b b)\in C_{P,\text{SOC}}$ and for any $\b u\in \S^{P-1}$,
	\begin{align*}
		\eta_{m,P}\|\b f-\b f_0\|_K \underbrace{\b u^\top \b u}_{=1} &\le \b u^\top \b D(\b f-\b f_0)(\tilde{\b x}_{m}) \b u + \b u^\top\diag(\bm{\Gamma}\b b- \b b_{0})\b u \\
		0&\le \b u^\top \b D(\b f-\b f_0)(\tilde{\b x}_{m}) \b u - \eta_{m,P}\|\b f-\b f_0\|_K + \b u^\top\diag(\bm{\Gamma}\b b- \b b_{0})\b u \\
		0&\hspace{-0.1cm}\stackrel{\eqref{eq:SDP:lower-bound}}{\le} \b u^\top \b D (\b f - \b  f_{0}) (\b x) \b u + \b u^\top\diag(\bm{\Gamma}\b b- \b b_{0})\b u,
	\end{align*}
	in other words, $(\b f,\b b)\in C_{P}$; this proves Theorem~\ref{thm:SDP}.
	
	\subsubsection{Proof of Lemma~\ref{lemma:eta_SDP_4Dtensor}} \label{sec:proof:lemma:eta_SDP_4Dtensor}
	Taking the square of the argument of the supremum in  \eqref{def:eta_SDP}, by \eqref{eq:gxu} we have 
	\begin{eqnarray}
		\lefteqn{\left\|\sum_{p_1,\, p_2\in[P]} u_{p_1}u_{p_2} D_{p_1,p_2}K(\cdot,\tilde{\b x}_{m}) - 
			\sum_{p_1,\, p_2\in[P]} u_{p_1}u_{p_2} D_{p_1,p_2}K(\cdot,\b x) \right\|_K^2  =}\nonumber\\
		&&= \left\| g_{\tilde{\b x}_m,\b u}-g_{\b x,\b u}\right\|_K^2  =
		\left\| g_{\tilde{\b x}_m,\b u}\right\|_K^2 
		+  \left\|g_{\b x,\b u}\right\|_K^2 
		-2 \left<g_{\tilde{\b x}_m,\b u},g_{\b x,\b u}\right>_K^2.\label{eq:eta-squared}
	\end{eqnarray}
	This means that it is sufficient to compute expressions of the form  $\left<g_{\b x',\b u},g_{\b x,\b u}\right>_K$ where $\b x', \b x \in \X$.
	\begin{align}
		\left<g_{\b x',\b u},g_{\b x,\b u}\right>_K & = \left< \sum_{p_1,\,p_2\in[P]} u_{p_1}u_{p_2}D_{p_1,\,p_2}K(\cdot,\b x'), 
		\sum_{p_1,\,p_2\in[P]} u_{p_1}u_{p_2}D_{p_1,\,p_2}K(\cdot,\b x)\right>_K \nonumber\\
		&= \sum_{p_1',\,p_2',\,p_1,\,p_2\in[P]} u_{p_1'}u_{p_2'} u_{p_1} u_{p_2} \underbrace{\left<D_{p_1',\,p_2'}K(\cdot,\b x'), D_{p_1,\,p_2}K(\cdot,\b x)\right>_K}_{\stackrel{(*)}{=} D_{p_1',p_2'}^\T D_{p_1,p_2}K(\b x',\b x)=\Ktens(\b x',\b x)_{p_1,p_2,p'_1,p'_2}}\nonumber\\
		&  = \left<\b u \otimes \b u,\Ktens(\b x',\b x)(\b u \otimes \b u)\right>_F.\label{eq:gx'uxu}
	\end{align}
	$(*)$ follows from the fact that for any point $\b x', \b x \in \X$ and differential operator $\tilde{D}, D\in O_{Q,s}$ with parameterization $D(\b f)(\b x) = \sum_{q\in [Q]}\beta_q D_{q,\b x} (f_q)(\b x)$ and $\tilde{D}(\b f)(\b x') = \sum_{q\in [Q]}\tilde{\beta}_q \tilde{D}_{q,\b x'} (f_q)(\b x')$
	\begin{align*}
		\left< \tilde{D} K(\cdot,\b x'), DK(\cdot,\b x)\right>_\K =\sum_{q,q'\in[Q]} \beta_{q}\tilde{\beta}_{q'} \b e_{q'}^\top \tilde{D}_{q',\b x'} D_{q,\b x}K(\b x', \b x) \b e_{q}=\tilde{D}^\top D K(\b x', \b x)
	\end{align*}
	as implied by the reproducing property \eqref{eq:repr-prop}. Combining
	\eqref{eq:eta-squared} and \eqref{eq:gx'uxu}  concludes the proof.
	
	\subsubsection{Proof of Theorem~\ref{thm:certificate}} \label{sec:proof:thm:certificate}
	
	\begin{itemize}[labelindent=0em,leftmargin=1em,topsep=0cm,partopsep=0cm,parsep=0cm,itemsep=2mm]
	    \item Admissible pair for \eqref{opt-cons}: By construction $C_{\app}\subseteq C$ (see \Cref{sec:constraints}) and $\hatFK\subseteq \FK$, so the admissible pair of \eqref{opt_cons_SOC} in (ii) also yields an admissible pair for \eqref{opt-cons}.
	    \item Existence of minimizers for \eqref{opt_cons_SOC} and  \eqref{opt-cons}:
	    We apply Theorem 3.2.5 by \citet{attouch14variational} which states that  coercive w-l.s.c.\ $\R\cup \{\infty\}$-valued functions on a reflexive Banach space (specifically on a Hilbert space) have a minimum point. Indeed,
	    $C_{\app}$ and $C$ are strongly closed convex subsets of $\FK\times \R^B$ by Lemma~\ref{lemma:closed_constraints}, so is $\hatFK$, hence all these sets are weakly closed \citep[Theorem 3.3.2]{attouch14variational}. Consequently their indicator functions are w-l.s.c.\ and, by (ii), the intersection of their domains is non-empty. The w-l.s.c.\ and coercive property of $\Lcal$ is preserved when adding indicator functions by the closedness of w-l.s.c.\ functions  w.r.t.\ addition and by the non-negativity of indicator functions, respectively. The proof concludes by noting that $\hatFK\times \R^B$ is a Hilbert space.
	    \item Certificate of optimality: Since $C_{\app}\subseteq C$, $\hatFK\subseteq \FK$ and $C \subseteq C_{\rel}$,\textsuperscript{\ref{footnote:relax-tighten}}  the certificate of optimality $v_{\text{relax}}\leq \bar{v} \leq v_{\app}$ follows.
	\end{itemize}
	 
	\subsubsection{Proof of Corollary~\ref{thm:aposteriori_bound}}\label{sec:proof:thm:aposteriori_bound}
		\begin{itemize}[labelindent=0em,leftmargin=1em,topsep=0cm,partopsep=0cm,parsep=0cm,itemsep=0mm]
		\item  Existence of $(\bar{\b f},\bar{\b b})$ and $\left(\bar{\b f}_{\app}, \bar{\b b}_{\app}\right)$: The existence of the solutions follow by the imposed assumptions which include the conditions required in the existence part of Theorem~\ref{thm:certificate}.
		\item Uniqueness of $(\bar{\b f},\bar{\b b})$ and $\left(\bar{\b f}_{\app}, \bar{\b b}_{\app}\right)$: The uniqueness of the solutions follows from the strong convexity of the w-l.s.c.\ $\Lcal$ on the non-empty sets $C$ and $C_{\app}$.
		\item A posteriori bound: We apply the result \citep[Proposition 3.23]{peypouquet15convex} that for any $\mu$-strongly convex proper function $\phi: Z\rightarrow \R\cup\{\infty\}$ over a normed vector space $Z$, attaining its minimum at $z^*$, we have $\phi(z)-\phi(z^*)\ge \frac{\mu}{2} \|z-z^*\|_Z^2$. Here we take $\phi=\Lcal+\chi_C$ and $Z=\Fk\times\R^B$. Consequently, since $\left(\bar{\b f}_{\app},\bar{\b b}_\app\right)\in C$ one derives the claimed bound \eqref{ineq_aposteriori} from
	\begin{align}
		v_{\app}-v_{\text{relax}} & \stackrel{(*)}{\ge} \Lcal\left(\bar{\b f}_{\app},\bar{\b b}_{\app}\right) - \bar{v}\nonumber\\&  = \Lcal\left(\bar{\b f}_{\app}, \bar{\b b}_{\app}\right) - \Lcal\left(\bar{\b f},\bar{\b b}\right) \ge \frac{\mu_{\b f}}{2}\left\|\bar{\b f}_{\app}-\bar{\b f}\right\|^2_K + \frac{\mu_{\b b}}{2} \left\|\b b_{\app}-\bar{\b b}\right\|_2^2, \label{eq:L-diff:lower-bound}
	\end{align}
	 where $(*)$ follows from $v_{\text{relax}}  \le \bar{v}$ (shown in Theorem~\ref{thm:certificate}).
		\end{itemize}	
	
	\subsubsection{Proof of Proposition~\ref{thm:apriori_bound}}\label{sec:proof:thm:apriori_bound}
	Recall that $\left(\bar{\b f},\bar{\b b}\right)$ satisfies \eqref{def_mixded_constraint} and that we assume $\dom(\Lcal(\bar{\b f},\cdot))=\RR^B$. Let $\eta_\infty$ be defined according to \eqref{def:eta_infty}, fix  $\bm{\beta}\in\R^B$ such that $\bm{\Gamma}_i \bm{\beta}>\b 0$ for all $i\in \mathcal{I}$, and define
	\begin{align}
		\tilde{\bm{\beta}} &:=\eta_\infty c_f \bm{\beta}=\eta_\infty \frac{\max_{i\in [I]} \left\|\bar{\b f} -\b f_{0,i}\right\|_K}{\min_{i\in [I],\,p\in P_i}\left(\bm{\Gamma}_i \bm{\beta}\right)_p} \bm{\beta}. \label{eq:tilde-b:def}
	\end{align}
	Applying $\bm{\Gamma}_i$ to \eqref{eq:tilde-b:def} results in the bound (used below)
	\begin{align}
		\bm{\Gamma}_i\tilde{\bm{\beta}} & = \eta_\infty \max_{i\in [I]} \left\|\bar{\b f} -\b f_{0,i}\right\|_K \underbrace{\frac{\bm \Gamma_i \bm \beta}{\min_{i\in [I],\,p\in P_i}\left(\bm{\Gamma}_i \bm{\beta}\right)_p}}_{\ge 1\, \Leftarrow\,\bm{\Gamma}_i \bm{\beta}>\b 0,\ \forall i \in [I]} \ge \eta_\infty\left\|\bar{\b f} -\b f_{0,i}\right\|_K \b 1_{P_i},\, \forall i\in[I]\label{eq:magic} 
	\end{align} 
	with $\b 1_{P_i}\in\R^{P_i}$ being the vector of ones. 
	
	Next we show that $\left(\bar{\b f},\bar{\b b}+\tilde{\bm{\beta}}\right)\in C_{\app}$.
	\begin{itemize}[labelindent=0em,leftmargin=1.2em,topsep=0.2cm,partopsep=0cm,parsep=0cm,itemsep=2mm]
		\item $\left(\bar{\b f},\bar{\b b}+\tilde{\bm{\beta}}\right) \in C^i_{P_i,\text{SOC}}$ for all $i\in\mathcal{I}_{\text{SOC}}$: Let $i\in\mathcal{I}_{\text{SOC}}$. Then for all $\b x \in \Kcons_i$  
		\begin{align}
			\eta_{i,m, P_i} \left\|\bar{\b f}-\b f_{0,i}\right\|_K \b I_{P_i} & \stackrel{(a)}{\preccurlyeq} 	\eta_\infty \left\|\bar{\b f}-\b f_{0,i}\right\|_K \b I_{P_i}
			\stackrel{(b)}{\preccurlyeq} \diag\left(\bm{\Gamma}_i\tilde{\bm{\beta}}  \right)\nonumber\\
			&\preccurlyeq \diag\left(\bm{\Gamma}_i\tilde{\bm{\beta}} \right) +  \underbrace{\Dsdp_i  \left(\bar{\b f} - \b  f_{0,i}\right)(\b x)+ \diag\left(\bm{\Gamma}_i\bar{\b b}- \b b_{0,i} \right)}_{\succcurlyeq \b 0_{P_i \times P_i}\, \text{ for }\forall \b x \in \Kcons_i\,\Leftarrow\, \left(\bar{\b f},\bar{\b b}\right)\in C}\nonumber\\
			&=  \Dsdp_i  \left( \bar{\b f} - \b  f_{0,i}\right)(\b x)+ \diag\left(\bm{\Gamma}_i\left(\bar{\b b} +\tilde{\bm{\beta}} \right)- \b b_{0,i} \right),\label{eq:f,b+beta}
		\end{align}
		where (a) comes from the definition of $\eta_{\infty}$ and 
		(b) follows from \eqref{eq:magic}. Since $\tilde{\b x}_{i,m}\in\Kcons_i$, \eqref{eq:f,b+beta} means that $(\bar{\b f},\bar{\b b}+\tilde{\bm{\beta}}) \in C^i_{P_i,\text{SOC}}$. 
		
		\item $\left(\bar{\b f},\bar{\b b}+\tilde{\bm{\beta}}\right)\in C^i_{1,\Omega}$ for all $i\in \mathcal{I}_{\Omega}$: Let $i\in \mathcal{I}_{\Omega}$. By the definition of $\eta_\infty$, $\Phi_{D_i}(\Kcons_i)\subseteq \bar{\Omega}_i \subseteq \Phi_{D_i}(\Kcons_i)+\BB_K(0,\eta_\infty)$. This inclusion with \eqref{eq:tightened-inclusion} means that for  $\left(\bar{\b f},\bar{\b b}+\tilde{\bm{\beta}}\right)\in C^i_{1,\Omega}$ to hold it is sufficient to prove that $\Phi_{D_i}(\Kcons_i)+\BB_K(0,\eta_\infty)\subseteq H^{+}_{\K}\left(\bar{\b f}-\b f_{0,i}, b_{0,i}-\bm{\Gamma}_i \left(\bar{\b b}+\tilde{\bm{\beta}}\right)\right)$. The latter holds since for any $\b x \in \Kcons_i$ and $\b g\in \BB_K(0,\eta_\infty)$ we have
		\begin{eqnarray*}
			\lefteqn{\langle \bar{\b f} - \b  f_{0,i}, D_i K(\cdot,\b x) +\b g\rangle_K+ \bm{\Gamma}_i(\bar{\b b}+\tilde{\bm{\beta}})-  b_{0,i}}\\
			&&\stackrel{(a)}{=} D_i  \left(\bar{\b f} - \b  f_{0,i}\right)(\b x) + \bm{\Gamma}_i\bar{\b b}-  b_{0,i} +\bm{\Gamma}_i\tilde{\bm{\beta}} + \langle \bar{\b f} - \b  f_{0,i},\b g\rangle_K \\
			&& \stackrel{(b)}{\ge} \underbrace{ D_i  \left(\bar{\b f} - \b  f_{0,i}\right)(\b x) + \bm{\Gamma}_i\bar{\b b}-  b_{0,i}}_{\ge 0 \text{ for } \forall \b x\in \Kcons_i\, \Leftarrow\, \left(\bar{\b f},\bar{\b b}\right)\in C} +\underbrace{\bm{\Gamma}_i\tilde{\bm{\beta}}-\left\|\bar{\b f} - \b  f_{0,i}\right\|_K \underbrace{\|\b g\|_K}_{\le \eta_\infty} }_{\ge 0\, \Leftarrow\, \eqref{eq:magic}} \ge 0.
		\end{eqnarray*}
		In (a) we applied the reproducing formula (Lemma~\ref{lemma:reproducing}), (b) follows from the Cauchy-Schwarz inequality. 
	\end{itemize}
	The proved relation $\left(\bar{\b f},\bar{\b b}+\tilde{\bm{\beta}}\right)\in C_{\app}$ implies that $\left(\bar{\b f},\bar{\b b}+\tilde{\bm{\beta}}\right)$ is admissible for \eqref{opt_cons_SOC} since $\bar{\b f}\in\FK=\hatFK$ since $\bar{\b b}+\tilde{\bm{\beta}}\in\dom(\Lcal(\bar{\b f},\cdot))=\RR^B$. Thus
	\begin{align}
		\Lcal	\left(\bar{\b f}_{\app}, \bar{\b b}_{\app}\right) - \Lcal\left(\bar{\b f},\bar{\b b}\right)  & \stackrel{(a)}{\leq}  \Lcal\left(\bar{\b f},\bar{\b b}+\tilde{\bm{\beta}}\right) - \Lcal\left(\bar{\b f},\bar{\b b}\right)
		\stackrel{(b)}{\leq} L_b \big\|\tilde{\bm{\beta}}\big\|_2 \stackrel{(c)}{\le} L_b \eta_\infty  c_f \big\| \bm{\beta}\big\|_2,\label{eq:L-diff:bound2}
	\end{align}
	where (a) follows from the fact that $\left(\bar{\b f}_{\app}, \bar{\b b}_{\app}\right)$ is an optimal solution of \eqref{opt_cons_SOC},  (b) is implied by the local Lipschitz property of $\Lcal$, and (c) holds by \eqref{eq:tilde-b:def}. This is what we wanted to prove.	
	
	\subsubsection{Proof of Proposition~\ref{prop:reproducing}}\label{sec:proof:prop:reproducing}
	
	\noindent \tb{Finite-dimensional description}:
	Let us consider the finite-dimensional subspace 
	\begin{align*}
		V &:= \Sp\Big( \{\b f_{0,i}\}_{i \in [I]}, \left\{ D_{n,j}^0K(\cdot,\b x_n)\right\}_{n\in [N], j\in J_n}, \{D^i_{p_1,p_2}K(\cdot,\tilde{\b x}_{i,m})\}_{i \in \mathcal{I}_{\text{SOC}},\, p_1,\, p_2\in[P_i],\, m\in [M_i]},\\
		& \hspace{1.7cm}\{\b c_{i,m,j}\}_{i \in \mathcal{I}_{\Omega},\,m\in [M_i],\ j\in[J_{B,i,m}], },\{\b v_{i,m,j}\}_{i \in \mathcal{I}_{\Omega},\,m\in [M_i],\,j\in[J_{H,i,m}]}\Big).
	\end{align*}
	Let $(\bar{\b f}_{\app},\bar{\b b}_{\app})$ be an optimal solution, which we decompose as $\bar{\b f}_{\app}=\b z + \b w$ where $\b z=\text{proj}_V\left(\bar{\b f}_{\app}\right)\in V$ and $\b w \in V^{\perp}$. We show that $\left(\b z,\bar{\b b}_{\app}\right)$ is then also an optimal solution. 
	\begin{itemize}[labelindent=0em,leftmargin=1.2em,topsep=0.2cm,partopsep=0cm,parsep=0cm,itemsep=2mm]
		\item $L\left(\bar{\b f}_{\app},\bar{\b b}_{\app}\right) = L\left(\b z,\bar{\b b}_{\app}\right)$:  By the linearity of the differential operators $D_{n,j}^0$, the reproducing property (Lemma~\ref{lemma:reproducing}) and the orthogonality of $\b w\in V^{\perp}$ and $D_{n,j}^0K(\cdot,\b x_n) \in V$, one gets 
		\begin{align*}
			D_{n,j}^0\left(\bar{\b f}_{\app})(\b x_n\right) & = D_{n,j}^0\left(\b z + \b w\right)(\b x_n) = D_{n,j}^0\left(\b z \right)(\b x_{n})  + \underbrace{D_{n,j}^0\left(\b w\right)(\b x_n)}_{\left< \b w, D_{n,j}^0K(\cdot,\b x_{n})\right>_{\K}=0}.
		\end{align*}
		This implies that the terms appearing in $L$ are the same for $\bar{\b f}_{\app}$ and for $\b z$, and hence $L\left(\bar{\b f}_{\app},\bar{\b b}_{\app}\right) = L\left(\b z,\bar{\b b}_{\app}\right)$.
		\item $R\left(\left\|\b z\right\|_{\K}\right) \le R\left(\left\|\bar{\b f}_{\app} \right\|_{\K}\right)$: This inequality follows from  
		$\left\|\b z\right\|_{\K} \le \left\|\bar{\b f}_{\app} \right\|_{\K}$ by the monotonicity of $R$.	
		\item $\left(\b z,\bar{\b b}_{\app}\right) \in C^i_{P_i,\text{SOC}}$ for all $i \in \mathcal{I}_{\text{SOC}}$: Let $i \in \mathcal{I}_{\text{SOC}}$. Similarly to the previous point, $D^i_{p_1,p_2} (\bar{\b f}_{\app})(\tilde{\b x}_{i,m}) = D^i_{p_1,p_2} (\b z)(\tilde{\b x}_{i,m})$ for all $p_1, p_2 \in [P_i]$ and $m\in [M_i]$, so the r.h.s.\ in the  inequalities in $C^i_{P_i,\text{SOC}}$ are the same for $\left(\bar{\b f}_{\app},\bar{\b b}_{\app}\right)$ and $\left(\b z,\bar{\b b}_{\app}\right)$. Considering the l.h.s.-s, $\left\|\b z - \b f_{0,i}\right\|_{\K} \le \left\|\bar{\b f}_{app} -\b f_{0,i} \right\|_{\K}$ since by the Pythagorean theorem $ \left\| \bar{\b f}_{app}  - \b f_{0,i}\right\|_{\K}^2=\left\|\b z - \b f_{0,i}\right\|_{\K}^2 + \left\|\b w\right\|_{\K}^2$. This shows that $\left(\b z,\bar{\b b}_{\app}\right) \in C^i_{P_i,\text{SOC}}$ for all $ i \in \mathcal{I}_{\text{SOC}}$.
		\item $(\b z,\bar{\b b}_{\app}) \in C^i_{1,\Omega}$ for all $ i \in \mathcal{I}_{\Omega}$: By  \eqref{eq:inclusion:m-fixed} it is sufficient to prove that $\Omega_{i,m}\subseteq H^{+}_{\K}(\b z - \b f_{0,i}, b_{0,i}-\bm{\Gamma}_i\b b)$ for all $i \in \mathcal{I}_{\Omega}$ and $m \in [M_i]$. In the following the $i$ and $m$ indices are assumed to be fixed; in the notations we make them implicit. By Theorem \ref{thm:inclusion}, we have to show the existence of $J_B$ functions $(\b g'_j)_{j\in [J_B]} \subset \Sp\left(\b z-\b f_0, \left\{\b c_j\right\}_{j\in [J_B]},\left\{\b v_j\right\}_{j\in [J_H]}\right)$ and $J_H$ non-negative coefficients $(\xi'_j)_{j\in [J_H]} \in \Rnn^{J_H}$ satisfying \eqref{ineq-eq_thm_DM}. 
		Consider  $(\b g_j)_{j\in [J_B]}$ and $(\xi_j)_{j\in [J_H]}$ for which \eqref{ineq-eq_thm_DM} holds for $\left(\bar{\b f}_{\app},\bar{\b b}_{\app}\right)$. Let us define $\b g'_j:=\text{proj}_V(\b g_j)$ (in other words, $\b g_j = \b g_j'+\b g_j'^\perp$ with $\b g_j' \in V$, $\b g_j'^\perp \in V^\perp$) and $\xi'_j:=\xi_j\in \R_+$.
		With this choice of $(\b g'_j)_{j\in [J_B]}$ and $(\xi'_j)_{j\in [J_H]}$, the pair $\left(\b z,\bar{\b b}_{\app}\right)$ satisfies \eqref{ineq-eq_thm_DM}. 
		Indeed, the inequality in  \eqref{ineq-eq_thm_DM} holds by 
		\begin{align*}
			\left<\b g_j,\b c_j\right>_K & = \left<\b g_j' + \b g_j'^\perp,\b c_j\right>_K \hspace{-0.07cm} = \left<\b g_j',\b c_j\right>_K + \big<\underbrace{\b g_j'^\perp}_{\in V^\perp},\underbrace{\b c_j}_{\in V}\big>_K \hspace{-0.07cm} = \left<\b g_j',\b c_j\right>_K,\, \left\|\b g_j'\right\|_K\le \left\|\b g_j\right\|_K.
		\end{align*}
		The equality in \eqref{ineq-eq_thm_DM} is satisfied since
		\begin{align*}
			\b 0 & = \proj_V(\b 0) = \proj_V
			\left(-\left(\bar{\b f}_{app}-\b f_0\right) + \sum_{j\in [J_B]}\b g_j - \sum_{j \in [J_H]}\xi_j \b v_j\right)\\
			&= -\Big[\underbrace{\proj_V\left(\bar{\b f}_{app}\right)}_{=\b z} - \underbrace{\proj_V\left(\b f_0\right)}_{=\b f_0\, \Leftarrow\, \b f_0 \in V}\Big] + \sum_{j\in [J_B]} \underbrace{\proj_V(\b g_j)}_{=\b g_j'} - \sum_{j \in [J_H]}\underbrace{\xi_j}_{=\xi_j'} \underbrace{\proj_V(\b v_j)}_{=\b v_j\, \Leftarrow\, \b v_j \in V}.
		\end{align*}
		Finally let us fix any $j\in [J_B]$
		and show that $\b g'_j \in \Sp\left(\b z-\b f_0, \left\{\b c_i\right\}_{i\in [J_B]},\left\{\b v_i\right\}_{i\in [J_H]}\right)$. This relation follows from 
		\begin{align*}
			\b g_j &\in \Sp\left(\b f-\b f_0, \left\{\b c_i\right\}_{i\in [J_B]},\left\{\b v_i\right\}_{i\in [J_H]}\right) \Rightarrow \exists a \in \R, (b_i)_{i\in [J_B]} \in \R^{J_B}, (d_i)_{i\in [J_H]} \in \R^{J_H}\text{ s.t.}\\
			\b g_j &= a (\b f-\b f_0) + \sum_{i\in [J_B]} b_i \b c_i
			+ \sum_{i\in [J_H]} d_i \b v_i \Rightarrow\\
			\b g_j' &= \proj_V(\b g_j) = \proj_V\left(a (\b f-\b f_0) + \sum_{i\in [J_B]} b_i \b c_i
			+ \sum_{i\in [J_H]} d_i \b v_i \right)\\
			&= a \Big[\underbrace{\proj_V(\b f)}_{=\b z} - \underbrace{\proj_V(\b f_0)}_{=\b f_0\,\Leftarrow\, \b f_0\in V}\Big] + 
			\sum_{i\in [J_B]} b_i \underbrace{\proj_V(\b c_i)}_{=\b c_i\, \Leftarrow\, \b c_i \in V} + \sum_{i\in [J_H]} d_i \underbrace{\proj_V(\b v_i)}_{=\b v_i\, \Leftarrow\, \b v_i \in V}.
		\end{align*}
	\end{itemize}
	This means that $(\b z,\bar{\b b}_{\app}) \in C_{\app}$ and that it is necessarily optimal.\\
	
	\noindent \tb{w-l.s.c. and coercivity of $\Lcal_S+\chi_{C_{\app}}$:} We use properties of compositions of l.s.c.\ maps. Let $\Ltilde\left(\b f,\b b\right):= L\left(\b b,\left(\left(D^0_{n,j}(\b f)(\b x_n)\right)_{j\in J_n}\right)_{n\in [N]}\right)$. The function $\Ltilde$ is w-l.s.c.\ since it is the composition of the l.s.c.\ $L$ (Assumption~(iv)) with the continuous maps $\b f \mapsto D_{n,j}^0 (\b f)(\b x_{n})$; similarly, $\tilde{R}\left(\b f\right):=R\left(\|\b f\|_K\right)$ is w-l.s.c.\ as $\left\|\cdot\right\|_K$ is w-l.s.c.\ and $R$ is monotone. Indeed, let us take any $(\b f,\b b)\in\dom(\Ltilde)$ and any $(\b f^{(k)},\b b^{(k)})$ weakly converging to $(\b f,\b b)$ then the w-l.s.c.\ properties follow from
		\begin{align}
		L\left(\b{b},\left(\left(D_{n,j}^0 \left(\b f \right)(\b x_{n})\right)_{j\in J_n}\right)_{n\in [N]} \right) & \stackrel{(a)}{\le} \liminf\limits_{k\rightarrow \infty}L\left(\b b^{(k)},\left(\left(D_{n,j}^0 \left(\b f^{(k)}\right)(\b x_{n})\right)_{j\in J_n}\right)_{n\in [N]} \right),\label{eq:L-bound}\\
		R\left(\left\|\b f \right\|_K\right) &\stackrel{(b)}{\le }    R\left(\liminf\limits_{k\rightarrow \infty}\left\|\b f^{(k)} \right\|_K\right) \stackrel{(c)}{\le }  \liminf\limits_{k\rightarrow \infty} R\left(\left\|\b f^{(k)} \right\|_K\right),\label{eq:R-bound}
		\end{align}
		where (a) follows from the fact that in finite-dimensional Euclidean spaces strong and weak convergence coincide, from the lower semi-continuity of $L$ (Assumption~(iv)), and by the fact that $\lim_{k\rightarrow \infty }D_{n,j}^0 \left(\b f^{(k)}\right)(\b x_{n}) = D_{n,j}^0 \left(\b f\right)(\b x_{n})$ ($\forall n\in [N]$ and $j\in J_n$); the latter is implied by the weak convergence of $\left(\b f^{(k)}\right)_{k\in \N}$ to $\b f$ and the reproducing property (Lemma~\ref{lemma:reproducing}). (b) comes from the weak l.s.c.\ property of $\left\|\cdot\right\|_K$ and the monotonicity of $R$. The l.s.c.\ property  of $R$ (Assumption~(iv))  gives (c): by the definition of the $\liminf$, there exists a subsequence $\left(\b f^{(k_n)}\right)_{n\in \N}$ such that $\liminf_{k\rightarrow \infty}\left\|\b f^{(k)}\right\|_K = \lim_{n\rightarrow \infty} \left\|\b f^{(k_n)}\right\|_K$ and $R\left(\lim_{n\rightarrow \infty} \left\|\b f^{(k_n)}\right\|_K\right)\le \liminf_{n\rightarrow \infty} R\left(\left\|\b f^{(k_n)}\right\|_K\right)$ as $R$ is l.s.c.; the reasoning  can be restricted w.l.o.g.\ to the subsequence $\left(\b f^{(k_n)}\right)_{n\in \N}$.
	
	From Lemma~\ref{lemma:closed_constraints}, it follows that $\chi_{C_{\app}}$ is w-l.s.c. Hence $\Lcal_S+\chi_{C_{\app}}=\Ltilde+\tilde{R}+\chi_{C_{app}}$ is w-l.s.c.\ as a sum of w-l.s.c.\ functions. Moreover, since $\Ltilde+\chi_{C_{\app}}$ is lower bounded by Assumption~(iii) and uniformly coercive in $\b b$ by Assumption~(ii), while $\tilde{R}$ is coercive in $\b f$ by Assumption~(i), we obtain that $\Lcal_S+\chi_{C_{\app}}$ is coercive in $(\b f, \b b)$ as a sum of lower-bounded coercive functions in their arguments.
	
	\subsubsection{Proof of Theorem~\ref{thm:soap_bubble}}\label{sec:proof:thm:soap_bubble}
	\tb{Part 1 (limit covering):} The properties we exploit are that $\diam\left(\Omega^{(0)}\right) <\infty$ and that the diameters of the bursting sets decrease by a factor of $\gamma$. Recall that at the $k^{th}$ iteration of Alg.~\ref{alg:soap_Omega} we have
	\begin{align}
		\bm \Phi_D(\Kcons) \subseteq\bar{\Omega}^{(k)} = \cup_{m\in \left[M^{(k)}\right]} \bar{\Omega}_m^{(k)}\subseteq H^{+}_{\K}\left(\b f^{(k)} - \b f_0,b_{0}-\bm{\Gamma}\b b^{(k)}\right). \label{eq:k-th-it:summary}
	\end{align}
	We say that a set $\bar{\Omega}_m^{(j)}$ present at the $j^{th}$ iteration is $k$-persistent if $j\le k$ and $\bar{\Omega}_m^{(j)}$ does not burst at all in Alg.~\ref{alg:soap_Omega}. Let us define 
	\begin{align}
		\bar{\Omega}_{\text{pers}}^{(k)} & \subseteq\bar{\Omega}^{(k)} \subseteq \FK \label{eq:Omega-k-pers}
	\end{align}
	as the union of the $k$-persistent sets. By definition one gets an increasing sequence of sets ($\bar{\Omega}_{\text{pers}}^{(1)}\subseteq \bar{\Omega}_{\text{pers}}^{(2)}\subseteq \bar{\Omega}_{\text{pers}}^{(3)}\subseteq \ldots$), hence we can take the closed limit of these sets and define  $\bar{\Omega}^{(\infty)}_{\text{pers}}:=\overline{\bigcup_{k\in\N}\bar{\Omega}_{\text{pers}}^{(k)}}$. We show that
	\begin{align}
		\lim\limits_{k\rightarrow \infty} \bar{\Omega}^{(k)} = \bm \Phi_D(\Kcons)\cup\bar{\Omega}^{(\infty)}_{\text{pers}}. \label{eq:lim-Omega}
	\end{align}
	Notice that by definition $\bar{\Omega}^{(k)}$ is a closed and bounded set. The set $\bm \Phi_D(\Kcons)$ is  compact (thus closed and bounded) as $\Phi_D$ is continuous and $\Kcons$ is compact. The set $\bar{\Omega}^{(\infty)}_{\text{pers}}$ is closed by definition; it is also bounded as $\bar{\Omega}^{(\infty)}_{\text{pers}} \subseteq \bm \Phi_D(\Kcons)+ \BB_{\K}\left(\b 0,\diam\left(\Omega^{(0)}\right)\right)$.
	Hence the terms in \eqref{eq:lim-Omega} are elements of the complete \citep{price40completeness} metric space of closed, bounded, non-empty sets of $\FK$ equipped with the Hausdorff distance
	\begin{align*}
		d_{\text{H}}(S_1,S_2) & = \inf\left\{\epsilon>0 \,:\, S_1 \subseteq S_2+ \BB_{\K}(\b 0,\epsilon) \text{ and } S_2 \subseteq S_1+ \BB_{\K}(\b 0,\epsilon)\right\}
	\end{align*}
	where '+' denotes the Minkowski sum. The limit in \eqref{eq:lim-Omega} is meant in this $d_{\text{H}}$ sense.
	
	Indeed \eqref{eq:lim-Omega} can be proved as follows. 
	Let $\Cp$ denote the  covering elements of the $k^{th}$ iteration  that are not $k$-persistent; in other words, each of these sets $\bar{\Omega}^{(k)}_m$ will burst after  $N^{(k)}_m\in \N$ iterations. Let  $A^{(k)}\subseteq \R_+$ be the finite set of the diameters of the elements in $\Cp$ and $\alpha^{(k)}:=\max\left(A^{(k)}\right)$. Since at each iteration, the diameters can only decrease, $\left(\alpha^{(k)}\right)_{k\in\N}$ is a non-negative decreasing sequence which thus converges to some $\alpha\in\R_+$. We show that $\alpha=0$ by contradiction. Assume that $\alpha>0$, and take $k$ such that $0\le \alpha^{(k)}- \alpha < (1-\gamma)\alpha$ which is possible since $\alpha>0$ and $\gamma \in (0,1)$. As $\alpha \le \alpha^{(k)}$, this choice of $k$ implies that $\alpha^{(k)}- \alpha< (1-\gamma)\alpha^{(k)}$, in other words that 
	$\gamma \alpha^{(k)} < \alpha$.
	By taking $N^{(k)}:=\max_{m} N^{(k)}_m$, we get that
	\begin{align*}
		\alpha^{\left(k+N^{(k)}\right)}\le \gamma \alpha^{(k)} < \alpha.
	\end{align*}
	However, the obtained relation $\alpha^{\left(k+N^{(k)}\right)}<\alpha$  contradicts the fact that $\left(\alpha^{(k)}\right)_{k\in\N}$ converges decreasingly to $\alpha$; this contradiction establishes that $\alpha=0$. 
	
	We have that 
	\begin{align}
		\bar{\Omega}_{\text{pers}}^{(k)} \cup \bm \Phi_D(\Kcons) \stackrel{(a)}{\subseteq}\bar{\Omega}^{(k)} \stackrel{(b)}{\subseteq} \bar{\Omega}_{\text{pers}}^{(k)}\cup \left(\bm \Phi_D(\Kcons)+ \BB_{\K}\left(\b 0,\alpha^{(k)}\right)\right), \label{eq:incl-Omega}
	\end{align} 
	The inclusion (a) holds since $\bm \Phi_D(\Kcons) \subseteq\bar{\Omega}^{(k)}$ by \eqref{eq:k-th-it:summary} and 
	$\bar{\Omega}_{\text{pers}}^{(k)} \subseteq\bar{\Omega}^{(k)}$ by \eqref{eq:Omega-k-pers}, while (b) holds given that at each iteration $k$, $\bar{\Omega}_m^{(k)}\cap \bm  \Phi_D(\Kcons)\neq\emptyset$ for any $m$  (recall that superfluous covering elements were discarded in Alg.~\ref{alg:covering}).
	This means by the previously proved relation  $\lim\limits_{k\rightarrow \infty} \alpha^{(k)}=0$ that 
	\begin{align*}
		\lim\limits_{k\rightarrow \infty}\bar{\Omega}_{\text{pers}}^{(k)} \cup \bm \Phi_D(\Kcons) &= \lim\limits_{k\rightarrow \infty}\bar{\Omega}^{(k)} =  \lim\limits_{k\rightarrow \infty} \bar{\Omega}_{\text{pers}}^{(k)}\cup \left(\bm \Phi_D(\Kcons)+ \BB_{\K}\left(\b 0,\alpha^{(k)}\right)\right) = \bm \Phi_D(\Kcons)\cup\bar{\Omega}^{(\infty)}_{\text{pers}}   
	\end{align*}
	in Hausdorff distance sense; this establishes
	\eqref{eq:lim-Omega}.
	
	Let $\Theta^{(k)}:=\bar{\Omega}^{(k)}\setminus \bar{\Omega}_{\text{pers}}^{(k)}$. Since the constraints associated to $\bar{\Omega}_{\text{pers}}^{(k)}$ are never active by definition, they can be removed from the problem:
	\begin{align}\label{eq:Omega-Theta}
		\left(\b f^{(k)},\b b^{(k)}\right)\in \argmin_{\substack{\b f\,\in\,\hatFK,\, \b b\,\in\,R^B\\ \bar{\Omega}^{(k)}\subseteq H^{+}_{\K}(\b f - \b f_0,b_{0}-\bm{\Gamma}\b b)}}\Lcal(\b f,\b b)=\argmin_{\substack{\b f\,\in\,\hatFK,\, \b b\,\in\,R^B\\ \Theta^{(k)}\subseteq H^{+}_{\K}(\b f - \b f_0,b_{0}-\bm{\Gamma}\b b)}}\Lcal(\b f,\b b).
	\end{align}
	However by \eqref{eq:incl-Omega} and by using the fact that $(A\cup B)\backslash B \subseteq A$ for any sets $A,B$, we have that $\bar{\Theta}^{(\infty)}:=  \overline{\lim\limits_{k\rightarrow \infty} \Theta^{(k)}} \subseteq \bm \Phi_D(\Kcons)$, where the limit is again meant in Hausdorff distance sense. Hence, considering the limit constraint sets in \eqref{eq:Omega-Theta}, any 
	\begin{align}\label{eq:Omega-Theta-temp}
		\left(\b f^{(\infty)},\b b^{(\infty)}\right)\in \argmin_{\substack{\b f\,\in\,\hatFK,\, \b b\,\in\,\R^B\\ \bar{\Omega}^{(\infty)}\subseteq H^{+}_{\K}(\b f - \b f_0,b_{0}-\bm{\Gamma}\b b)}}\Lcal(\b f,\b b)=\argmin_{\substack{\b f\,\in\,\hatFK,\, \b b\,\in\,\R^B\\ \bar{\Theta}^{(\infty)}\subseteq H^{+}_{\K}(\b f - \b f_0,b_{0}-\bm{\Gamma}\b b)}}\Lcal(\b f,\b b)
	\end{align}
	is the solution of both a tightening ($\bar{\Omega}^{(\infty)}\supseteq \Phi_D(\Kcons)$)   and a relaxation ($\bar{\Theta}^{(\infty)}\subseteq \Phi_D(\Kcons)$)  of the original problem; hence
	\begin{align}
		\left(\b f^{(\infty)},\b b^{(\infty)}\right) &\in
		\argmin_{\substack{\b f\,\in\,\hatFK,\, \b b\,\in\,\R^B\\ \bm \Phi_D(\Kcons)\subseteq H^{+}_{\K}(\b f - \b f_0,b_{0}-\bm{\Gamma}\b b)}}\Lcal(\b f,\b b)  
		\label{eq:f-b-inf}.
	\end{align}
	This establishes the first statement of Theorem \ref{thm:soap_bubble}.
	
	\noindent\tb{Part 2 (convergence of $\left(\b f^{(k)},\b     b^{(k)}\right)_{k\in\N}$):}  Suppose that Assumptions~(i)-(iv) hold. 
	\begin{itemize}[labelindent=0em,leftmargin=1em,topsep=0cm,partopsep=0cm,parsep=0cm,itemsep=2mm]
		\item Existence of $\left(\b f^{(k)},\b     b^{(k)}\right)_{k\in \N}$: First we prove the existence of the iterates $\left(\b f^{(k)},\b     b^{(k)}\right)$ by induction over $k$. For $k=0$, the  existence of $(\b f^{(0)},\b b^{(0)})$ is guaranteed by  Assumptions~(i)-(ii) and Theorem~\ref{thm:certificate}. Suppose we reached the $k^{th}$ step, and let $d_k:=d_H\left(\bar{\Omega}^{(0)}, \bar{\Omega}^{(k)}\right)$. Let us recall that $\left(\hat{\b f}, \hat{\b b}\right)$ is an admissible pair  for $\Psc\left(\bar{\Omega}^{(0)}\right)$ (see Assumption (ii)), and let us define $\hat{\b b}_k:=\hat{\b b}+\frac{d_k \left\|\hat{\b f}- \b  f_{0}\right\|_K}{\|\bm{\Gamma}\|_2^2} \bm{\Gamma}^\top \in \R^B$ which exists since $\bm{\Gamma}\neq \b 0$ by Assumption~(iv). With this choice, we show that
		\begin{align}
			\bar{\Omega}^{(k)} \subseteq H^{+}_{\K}\left(\hat{\b f} - \b f_0,b_{0}-\bm{\Gamma}\hat{\b b}_k\right). 
			\label{eq:Omega-k:containing}
		\end{align}
		Indeed, by the definition of the Hausdorff distance for any $\b g\in \bar{\Omega}^{(k)}$ there exists some $\b u\in \BB_{\K}(\b 0,1)$ and $\b g_0\in \bar{\Omega}^{(0)}$ such that $\b g = \b g_0 + d_k \b u$. This implies  \eqref{eq:Omega-k:containing} as
		\begin{eqnarray}
			\lefteqn{\left< \hat{\b f} - \b  f_{0}, \b g\right>_K+ \bm{\Gamma}\hat{\b b}_k-  b_{0} = 
				\left< \hat{\b f} - \b  f_{0}, \b g_0 + d_k \b u\right>_K+ \bm{\Gamma}\left( \hat{\b b}+\frac{d_k \left\|\hat{\b f}- \b  f_{0}\right\|_K}{\|\bm{\Gamma}\|_2^2} \bm{\Gamma}^\top \right)-  b_{0}} \nonumber\\ &&=\underbrace{\left< \hat{\b f} - \b  f_{0}, \b g_0\right>_K+ \bm{\Gamma}\hat{\b b}-  b_{0}}_{\ge 0 \text{ by Assumption (ii)}} + \hspace{-0.1cm} \underbrace{d_k \left< \hat{\b f} - \b  f_{0}, \b u\right>_K + d_k\left\|\hat{\b f}- \b  f_{0}\right\|_K}_{\ge 0\text{ by }\b u\in \BB_K(\b 0,1)\text{ and the Cauchy-Schwartz inequality}} \hspace{-0.7cm}\ge 0.\label{eq:omega-cover_dk}
		\end{eqnarray}
		\eqref{eq:Omega-k:containing} means that $\left(\hat{\b f}, \hat{\b b}_k\right)$ is admissible for $\Psc\left(\bar{\Omega}^{(k)}\right)$ as $\hat{\b b}_k\in \dom\left(\Lcal\left(\hat{\b f},\cdot\right)\right)=\RR^B$ by Assumption~(iii). The existence of $\left(\b f^{(k)},\b b^{(k)}\right)$ follows from the proved admissibility of $\left(\hat{\b f}, \hat{\b b}_k\right)$ and since the conditions of Theorem~\ref{thm:certificate} hold.
		\item Boundedness of $\left(\b f^{(k)},\b b^{(k)}\right)_{k\in\N}$:
		Let us define the bound $d_{\text{max}}:=\sup_{k\in\N} d_k<\infty$ with $d_k = d_H\left(\bar{\Omega}^{(0)},\bar{\Omega}^{(k)}\right)$;  $d_{\text{max}}$ exists since $\left(\bar{\Omega}^{(k)}\right)_{k\in\N}$ converges as it was proved in \eqref{eq:lim-Omega}. Let $\hat{\b b}_{\text{max}}:=\hat{\b b}+\frac{d_{\text{max}} \left\|\hat{\b f}- \b  f_{0}\right\|_K}{\|\bm{\Gamma}\|_2^2} \bm{\Gamma}^\top$. Then $\left(\hat{\b f}, \hat{\b b}_{\text{max}}\right)$ is admissible for $\Psc\left(\bar{\Omega}^{(k)}\right)$ for all $k\in\N$ by a  computation analogous to \eqref{eq:Omega-k:containing}-\eqref{eq:omega-cover_dk} and by using Assumption~(iii). This admissibility means that $\Lcal\left(\b f^{(k)},\b b^{(k)}\right) \le \Lcal\left(\hat{\b f}, \hat{\b b}_{\text{max}}\right)$ , in other words $\left\{\left(\b f^{(k)},\b b^{(k)}\right)\right\}_{k\in\N} \subseteq \Lcal^{-1}\left(\left(-\infty, \Lcal\left(\hat{\b f}, \hat{\b b}_{\text{max}}\right)\right]\right)=:S$. The set $S$ is closed and bounded as Assumption~(i) states the coercivity of $\Lcal$ over the Hilbert space $\FK\times \R^B$ equipped with the sum of the inner products. By the boundedness of $\left(\b f^{(k)},\b b^{(k)}\right)_{k\in\N}$, it has a weakly converging subsequence (w.l.o.g. it is the sequence itself) to some $\left(\bar{\b f}_{\app},\bar{\b b}_{\app}\right)\in \hatFK\times \R^B$.
		\item $\left(\bar{\b f}_{\app},\bar{\b b}_{\app}\right)$ is admissible for $\Psc\left(\bar{\Omega}^{(\infty)}\right)$:  Next we show that $\left(\bar{\b f}_{\app},\bar{\b b}_{\app}\right)$ is admissible for $\Psc\left(\bar{\Omega}^{(\infty)}\right)$. Indeed,  let $\epsilon>0$. Then for any 
		$\b g\in\bar{\Omega}^{(\infty)}$, one can find $k\in \N$,  $\left(\b f^{(k)},\b b^{(k)}\right)$ and $\b g_k\in\bar{\Omega}^{(k)}$ such that 
		\begin{align}
			\left|\left< \b f^{(k)} - \b  f_{0}, \b g_k-\b g\right>_K\right| + \left|\left< \b f^{(k)} - \bar{\b f}_{\app}, \b g\right>_K\right|+ \left|\bm{\Gamma}\left(\b b^{(k)}-\bar{\b b}_{\app}\right)\right|\le \epsilon \label{eq:3terms}
		\end{align}
		using the boundedness of $\left(\b f^{(k)}\right)_{k\in \N}$ and the convergence of $\left(\bar{\Omega}^{(k)}\right)_{k\in\N}$ to  $\bar{\Omega}^{(\infty)}$ in Hausdorff distance (in the first term), and the weak convergence of $\left(\b f^{(k)},\b b^{(k)}\right)_{k\in\N}$ to $\left(\bar{\b f}_{\app},\bar{\b b}_{\app}\right)$ (in the 2nd and the 3rd terms). Notice that 
		\begin{eqnarray}
			\lefteqn{b_{0}-\bm{\Gamma}\bar{\b b}_{\app}- \bm{\Gamma}\left(\b b^{(k)}-\bar{\b b}_{\app}\right) = b_{0}-\bm{\Gamma}\b b^{(k)} \stackrel{(a)}{\le}\left< \b f^{(k)} - \b  f_{0}, \b g_k\right>_K} \nonumber\\
			&&= \left< \b f^{(k)} - \b  f_{0}, \b g_k-\b g\right>_K + \left< \b f^{(k)} - \bar{\b f}_{\app}, \b g\right>_K+\left< \bar{\b f}_{\app} - \b  f_{0}, \b g\right>_K, \label{eq:admi}
		\end{eqnarray}
		where (a) holds since $\b g_k\in\bar{\Omega}^{(k)}$. Rearranging \eqref{eq:admi} leads to
		\begin{align*}
			\left< \bar{\b f}_{\app} - \b  f_{0}, \b g\right>_K & \ge b_{0}-\bm{\Gamma}\bar{\b b}_{\app} - \bm{\Gamma}\left(\b b^{(k)}-\bar{\b b}_{\app}\right) -
			\left< \b f^{(k)} - \b  f_{0}, \b g_k-\b g\right>_K - \left< \b f^{(k)} - \bar{\b f}_{\app}, \b g\right>_K\\
			& \stackrel{\eqref{eq:3terms}}{\ge} b_{0}-\bm{\Gamma}\bar{\b b}_{\app}  - \epsilon. 
		\end{align*}
		Taking the limit $\epsilon\rightarrow 0$, we get that $\left(\bar{\b f}_{\app},\bar{\b b}_{\app}\right)$ is admissible for $\Psc \left(\bar{\Omega}^{(\infty)}\right)$.
		
		\item $\left(\bar{\b f}_{\app},\bar{\b b}_{\app}\right)$ is an optimal solution of $\Psc\left(\bar{\Omega}^{(\infty)}\right)$: Fix $\xi>0$. Since $\left(\b f^{(k)},\b b^{(k)}\right)_{k\in\N}$ weakly converges to $\left(\bar{\b f}_{\app},\bar{\b b}_{\app}\right)$, and, by Assumption~(i), $\Lcal$ is weakly l.s.c.\ so there exists some $N_0 \in \N$ such that
		\begin{align}
			\Lcal\left(\b f^{(k)},\b b^{(k)}\right)\ge\Lcal\left(\bar{\b f}_{\app},\bar{\b b}_{\app}\right) -\xi \text{ for all $k\ge N_0$.}
			\label{eq:k-N_0}
		\end{align}
		Consider an arbitrary pair $\left(\b f, \b b\right)$ admissible for $\Psc\left(\bar{\Omega}^{(\infty)}\right)$. Let us define $c_{\b f}:=\frac{\|\b f- \b  f_{0}\|_K}{\|\bm{\Gamma}\|_2^2} \bm{\Gamma}^\top$ and $\epsilon_k:=d_H\left(\bar{\Omega}^{(\infty)}, \bar{\Omega}^{(k)}\right)$. A computation similar to \eqref{eq:omega-cover_dk}  combined with Assumption~(iii) implies that $(\b f, \b b + \epsilon_k c_{\b f})$ is admissible for $\Psc(\bar{\Omega}^{(k)})$ for all $k\in\N$, and that 
		\begin{align}
			\Lcal(\b f, \b b + \epsilon_kc_{\b f})\ge \Lcal\left(\b f^{(k)},\b b^{(k)}\right) \stackrel{(a)}{\ge}\Lcal\left(\bar{\b f}_{\app},\bar{\b b}_{\app}\right) -\xi,
			\label{eq:-psi}
		\end{align}
		where (a) holds by \eqref{eq:k-N_0} for $k\ge N_0$. This inequality shows that 
		\begin{align}
			\Lcal\left(\b f, \b b\right)\ge\Lcal\left(\bar{\b f}_{\app},\bar{\b b}_{\app}\right) -\xi
			\label{eq:xi-now}
		\end{align}
		by taking in \eqref{eq:-psi} the limit $k\rightarrow \infty$ (implying $\epsilon_k \rightarrow 0$). Taking the limit of \eqref{eq:xi-now} as $\xi\rightarrow 0$ shows that
		$\left(\bar{\b f}_{\app},\bar{\b b}_{\app}\right)$ is a solution of $\Psc\left(\bar{\Omega}^{(\infty)}\right)$. Assuming that $\hatFK=\FK$, this means that $\left(\bar{\b f}_{\app},\bar{\b b}_{\app}\right)$  also solves the original problem \eqref{opt-cons} 
		\begin{align*}
			\left(\bar{\b f}_{\app},\bar{\b b}_{\app}\right) & \in \argmin_{\substack{\b f\,\in\,\FK,\, \b b\,\in\,\R^B,\\\,(\b f, \b b)\,\in \, C
			}} \Lcal(\b f,\b b) \ni \left(\bar{\b f},\bar{\b b}\right)
		\end{align*}
		by applying the same argument used to derive \eqref{eq:f-b-inf}. Consequently if $\left(\bar{\b f},\bar{\b b}\right)$ is unique, then $\left(\bar{\b f}_{\app},\bar{\b b}_{\app}\right)=\left(\bar{\b f},\bar{\b b}\right)$. Hence every weakly converging subsequence of $\left(\b f^{(k)},\b b^{(k)}\right)_{k\in\N}$ converges to $\left(\bar{\b f},\bar{\b b}\right)$, so the whole sequence $\left(\b f^{(k)},\b b^{(k)}\right)_{k\in\N}$ weakly converges to $\left(\bar{\b f},\bar{\b b}\right)$.
	\end{itemize}

	\bibliography{./BIB/aubin22a}

\begin{thebibliography}{68}
\providecommand{\natexlab}[1]{#1}
\providecommand{\url}[1]{\texttt{#1}}
\expandafter\ifx\csname urlstyle\endcsname\relax
  \providecommand{\doi}[1]{doi: #1}\else
  \providecommand{\doi}{doi: \begingroup \urlstyle{rm}\Url}\fi

\bibitem[Agrell(2019)]{agrell19gaussian}
Christian Agrell.
\newblock Gaussian processes with linear operator inequality constraints.
\newblock \emph{Journal of Machine Learning Research}, 20:\penalty0 1--36,
  2019.

\bibitem[A\"it-Sahalia and Duarte(2003)]{aitsahalia03nonparametric}
Yacine A\"it-Sahalia and Jefferson Duarte.
\newblock Nonparametric option pricing under shape restrictions.
\newblock \emph{Journal of Econometrics}, 116\penalty0 (1-2):\penalty0 9--47,
  2003.

\bibitem[Allon et~al.(2007)Allon, Beenstock, Hackman, Passy, and
  Shapiro]{allon07nonparametric}
Gad Allon, Michael Beenstock, Steven Hackman, Ury Passy, and Alexander Shapiro.
\newblock Nonparametric estimation of concave production technologies by
  entropic methods.
\newblock \emph{Journal of Applied Econometrics}, 22\penalty0 (4):\penalty0
  795--816, 2007.

\bibitem[{\'A}lvarez et~al.(2012){\'A}lvarez, Rosasco, and
  Lawrence]{alvarez12kernels}
Mauricio {\'A}lvarez, Lorenzo Rosasco, and Neil Lawrence.
\newblock Kernels for vector-valued functions: a review.
\newblock \emph{Foundations and Trends in Machine Learning}, 4\penalty0
  (3):\penalty0 195--266, 2012.

\bibitem[Aronszajn(1950)]{aronszajn50theory}
Nachman Aronszajn.
\newblock Theory of reproducing kernels.
\newblock \emph{Transactions of the American Mathematical Society},
  68:\penalty0 337--404, 1950.

\bibitem[Attouch et~al.(2014)Attouch, Buttazzo, and
  Michaille]{attouch14variational}
Hedy Attouch, Giuseppe Buttazzo, and G{\'{e}}rard Michaille.
\newblock \emph{Variational Analysis in {S}obolev and {BV} Spaces}.
\newblock Society for Industrial and Applied Mathematics, 2014.

\bibitem[Aubin-Frankowski(2021)]{aubin2020hard_control}
Pierre-Cyril Aubin-Frankowski.
\newblock Linearly constrained linear quadratic regulator from the viewpoint of
  kernel methods.
\newblock \emph{{SIAM} Journal on Control and Optimization}, 59\penalty0
  (4):\penalty0 2693--2716, 2021.

\bibitem[Aubin-Frankowski and Szab{\'o}(2020)]{aubin20hard}
Pierre-Cyril Aubin-Frankowski and Zolt{\'a}n Szab{\'o}.
\newblock Hard shape-constrained kernel machines.
\newblock In \emph{Advances in Neural Information Processing Systems
  (NeurIPS)}, pages 384--395, 2020.

\bibitem[Aubin-Frankowski et~al.(2020)Aubin-Frankowski, Petit, and
  Szab{\'o}]{aubin20kernel}
Pierre-Cyril Aubin-Frankowski, Nicolas Petit, and Zolt{\'a}n Szab{\'o}.
\newblock Kernel regression for vehicle trajectory reconstruction under speed
  and inter-vehicular distance constraints.
\newblock In \emph{IFAC World Congress (IFAC WC)}, pages 15084--15089, 2020.

\bibitem[Berlinet and Thomas-Agnan(2004)]{berlinet04reproducing}
Alain Berlinet and Christine Thomas-Agnan.
\newblock \emph{Reproducing Kernel Hilbert Spaces in Probability and
  Statistics}.
\newblock Kluwer, 2004.

\bibitem[Blundell et~al.(2012)Blundell, Horowitz, and
  Parey]{blundell12measuring}
Richard Blundell, Joel~L. Horowitz, and Matthias Parey.
\newblock Measuring the price responsiveness of gasoline demand: economic shape
  restrictions and nonparametric demand estimation.
\newblock \emph{Quantitative Economics}, 3:\penalty0 29--51, 2012.

\bibitem[Bouche et~al.(2021)Bouche, Clausel, Roueff, and d'Alch{\'e}
  Buc]{bouche21nonlinear}
Dimitri Bouche, Marianne Clausel, Francois Roueff, and Florence d'Alch{\'e}
  Buc.
\newblock Nonlinear functional output regression: a dictionary approach.
\newblock In \emph{International Conference on Artificial Intelligence and
  Statistics (AISTATS)}, pages 235--243, 2021.

\bibitem[Brault et~al.(2019)Brault, Lambert, Szab{\'o}, Sangnier, and
  d'Alch{\'e} Buc]{brault19infinite}
Romain Brault, Alex Lambert, Zolt{\'a}n Szab{\'o}, Maxime Sangnier, and
  Florence d'Alch{\'e} Buc.
\newblock Infinite-task learning with {RKHS}s.
\newblock In \emph{International Conference on Artificial Intelligence and
  Statistics (AISTATS)}, pages 1294--1302, 2019.

\bibitem[Brouard et~al.(2011)Brouard, d'Alch{\'e} Buc, and
  Szafranski]{brouard11semisupervised}
C{\'e}line Brouard, Florence d'Alch{\'e} Buc, and Marie Szafranski.
\newblock Semi-supervised penalized output kernel regression for link
  prediction.
\newblock In \emph{International Conference on Machine Learning (ICML)}, pages
  593--600, 2011.

\bibitem[Brunk(1955)]{brunk55maximum}
Hugh~D. Brunk.
\newblock Maximum likelihood estimates of monotone parameters.
\newblock \emph{Annals of Mathematical Statistics}, 26\penalty0 (4):\penalty0
  607--616, 1955.

\bibitem[Carmeli et~al.(2010)Carmeli, Vito, Toigo, and
  Umanit{\'a}]{carmeli10vector}
Claudio Carmeli, Ernesto~De Vito, Alessandro Toigo, and Veronica Umanit{\'a}.
\newblock Vector valued reproducing kernel {H}ilbert spaces and universality.
\newblock \emph{Analysis and Applications}, 8:\penalty0 19--61, 2010.

\bibitem[Chen and Samworth(2016)]{chen16generalized}
Yining Chen and Richard~J. Samworth.
\newblock Generalized additive and index models with shape constraints.
\newblock \emph{Journal of the Royal Statistical Society -- Statistical
  Methodology, Series {B}}, 78\penalty0 (4):\penalty0 729--754, 2016.

\bibitem[Chetverikov et~al.(2018)Chetverikov, Santos, and
  Shaikh]{chetverikov18econometrics}
Denis Chetverikov, Andres Santos, and Azeem~M. Shaikh.
\newblock The econometrics of shape restrictions.
\newblock \emph{Annual Review of Economics}, 10\penalty0 (1):\penalty0 31--63,
  2018.

\bibitem[Curmei and Hall(2021)]{curmei21shape}
Mihaela Curmei and Georgina Hall.
\newblock Shape-constrained regression using sum of squares polynomials.
\newblock Technical report, 2021.
\newblock (\url{https://arxiv.org/abs/2004.03853}).

\bibitem[Deng and Zhang(2020)]{deng20isotonic}
Hang Deng and Cun-Hui Zhang.
\newblock Isotonic regression in multi-dimensional spaces and graphs.
\newblock \emph{Annals of Statistics}, 48\penalty0 (6):\penalty0 3672--3698,
  2020.

\bibitem[Dubovitskii and Milyutin(1965)]{dubovitskii65extremum}
A.~Ya. Dubovitskii and A.~A. Milyutin.
\newblock Extremum problems in the presence of restrictions.
\newblock \emph{{USSR} Computational Mathematics and Mathematical Physics},
  5\penalty0 (3):\penalty0 1--80, 1965.

\bibitem[Freyberger and Reeves(2018)]{freyberger18inference}
Joachim Freyberger and Brandon Reeves.
\newblock Inference under shape restrictions.
\newblock Technical report, University of Wisconsin-Madison, 2018.
\newblock
  (\url{https://www.ssc.wisc.edu/~jfreyberger/Shape_Inference_Freyberger_Reeves.pdf}).

\bibitem[Guntuboyina and Sen(2018)]{guntuboyina18nonparametric}
Adityanand Guntuboyina and Bodhisattva Sen.
\newblock Nonparametric shape-restricted regression.
\newblock \emph{Statistical Science}, 33\penalty0 (4):\penalty0 568--594, 2018.

\bibitem[Hall(2018)]{hall18thesis}
Georgina Hall.
\newblock Optimization over nonnegative and convex polynomials with and without
  semidefinite programming.
\newblock {PhD Thesis}, Princeton University, 2018.

\bibitem[Han and Wellner(2016)]{han16multivariate}
Qiyang Han and Jon~A. Wellner.
\newblock Multivariate convex regression: global risk bounds and adaptation.
\newblock Technical report, 2016.
\newblock (\url{https://arxiv.org/abs/1601.06844}).

\bibitem[Han et~al.(2019)Han, Wang, Chatterjee, and Samworth]{han19isotonic}
Qiyang Han, Tengyao Wang, Sabyasachi Chatterjee, and Richard~J. Samworth.
\newblock Isotonic regression in general dimensions.
\newblock \emph{Annals of Statistics}, 47\penalty0 (5):\penalty0 2440--2471,
  2019.

\bibitem[Hartl et~al.(1995)Hartl, Sethi, and Vickson]{hartl95survey}
Richard~F. Hartl, Suresh~P. Sethi, and Raymond~G. Vickson.
\newblock A survey of the maximum principles for optimal control problems with
  state constraints.
\newblock \emph{SIAM Review}, 37\penalty0 (2):\penalty0 181--218, 1995.

\bibitem[Hu et~al.(2005)Hu, Kapoor, Zhang, Hamilton, and Coombes]{hu05analysis}
Jianhua Hu, Mini Kapoor, Wei Zhang, Stanley~R. Hamilton, and Kevin~R. Coombes.
\newblock Analysis of dose-response effects on gene expression data with
  comparison of two microarray platforms.
\newblock \emph{Bioinformatics}, 21\penalty0 (17):\penalty0 3524--3529, 2005.

\bibitem[Huusari and Kadri(2021)]{huusari21entangled}
Riikka Huusari and Hachem Kadri.
\newblock Entangled kernels -- beyond separability.
\newblock \emph{Journal of Machine Learning Research}, 22:\penalty0 1--40,
  2021.

\bibitem[Johnson and Jiang(2018)]{johnson18shape}
Andrew~L. Johnson and Daniel~R. Jiang.
\newblock Shape constraints in economics and operations research.
\newblock \emph{Statistical Science}, 33\penalty0 (4):\penalty0 527--546, 2018.

\bibitem[Kadri et~al.(2016)Kadri, Duflos, Preux, Canu, Rakotomamonjy, and
  Audiffren]{kadri16operator}
Hachem Kadri, Emmanuel Duflos, Philippe Preux, St{\'e}phane Canu, Alain
  Rakotomamonjy, and Julien Audiffren.
\newblock Operator-valued kernels for learning from functional response data.
\newblock \emph{Journal of Machine Learning Research}, 17:\penalty0 1--54,
  2016.

\bibitem[Koenker(2005)]{koenker05quantile}
Roger Koenker.
\newblock \emph{Quantile Regression}.
\newblock Cambridge University Press, 2005.

\bibitem[Koppel et~al.(2019)Koppel, Zhang, Zhu, and
  Ba\c{s}ar]{koppel19projected}
Alec Koppel, Kaiqing Zhang, Hao Zhu, and Tamer Ba\c{s}ar.
\newblock Projected stochastic primal-dual method for constrained online
  learning with kernels.
\newblock \emph{IEEE Transactions on Signal Processing}, 67\penalty0
  (10):\penalty0 2528--2542, 2019.

\bibitem[Kur et~al.(2020)Kur, Dagan, and Rakhlin]{kur20optimality}
Gil Kur, Yuval Dagan, and Alexander Rakhlin.
\newblock Optimality of maximum likelihood for log-concave density estimation
  and bounded convex regression.
\newblock Technical report, 2020.
\newblock (\url{https://arxiv.org/abs/1903.05315}).

\bibitem[Lim(2020)]{lim20limiting}
Eunji Lim.
\newblock The limiting behavior of isotonic and convex regression estimators
  when the model is misspecified.
\newblock \emph{Electronic Journal of Statistics}, 14:\penalty0 2053--2097,
  2020.

\bibitem[Lofberg(2004)]{lofberg04yalmip}
Johan Lofberg.
\newblock {YALMIP}: A toolbox for modeling and optimization in {MATLAB}.
\newblock In \emph{IEEE International Conference on Robotics and Automation},
  pages 284--289, 2004.

\bibitem[Luss et~al.(2012)Luss, Rossett, and Shahar]{luss12efficient}
Ronny Luss, Saharon Rossett, and Moni Shahar.
\newblock Efficient regularized isotonic regression with application to
  gene-gene interaction search.
\newblock \emph{Annals of Applied Statistics}, 6\penalty0 (1):\penalty0
  253--283, 2012.

\bibitem[Marteau-Ferey et~al.(2020)Marteau-Ferey, Bach, and
  Rudi]{marteauferey20nonparametric}
Ulysse Marteau-Ferey, Francis Bach, and Alessandro Rudi.
\newblock Non-parametric models for non-negative functions.
\newblock In \emph{Advances in Neural Information Processing Systems
  (NeurIPS)}, pages 12816--12826, 2020.

\bibitem[Mazumder et~al.(2019)Mazumder, Choudhury, Iyengar, and
  Sen]{mazumder19computational}
Rahul Mazumder, Arkopal Choudhury, Garud Iyengar, and Bodhisattva Sen.
\newblock A computational framework for multivariate convex regression and its
  variants.
\newblock \emph{Journal of the American Statistical Association}, 114\penalty0
  (525):\penalty0 318--331, 2019.

\bibitem[Meyer(2018)]{meyer18framework}
Mary~C. Meyer.
\newblock A framework for estimation and inference in generalized additive
  models with shape and order restrictions.
\newblock \emph{Statistical Science}, 33\penalty0 (4):\penalty0 595--614, 2018.

\bibitem[Micchelli and Pontil(2005)]{micchelli05learning}
Charles Micchelli and Massimiliano Pontil.
\newblock On learning vector-valued functions.
\newblock \emph{Neural Computation}, 17:\penalty0 177--204, 2005.

\bibitem[Micchelli et~al.(2006)Micchelli, Xu, and Zhang]{micchelli06universal}
Charles Micchelli, Yuesheng Xu, and Haizhang Zhang.
\newblock Universal kernels.
\newblock \emph{Journal of Machine Learning Research}, 7:\penalty0 2651--2667,
  2006.

\bibitem[Micheli and Glaun{\'e}s(2014)]{micheli14matrix}
Mario Micheli and Joan~A. Glaun{\'e}s.
\newblock Matrix-valued kernels for shape deformation analysis.
\newblock \emph{Geometry, Imaging and Computing}, 1\penalty0 (1):\penalty0
  57--139, 2014.

\bibitem[Muzellec et~al.(2022)Muzellec, Bach, and Rudi]{muzellec22learning}
Boris Muzellec, Francis Bach, and Alessandro Rudi.
\newblock Learning {PSD}-valued functions using kernel sums-of-squares.
\newblock Technical report, 2022.
\newblock (\url{https://arxiv.org/abs/2111.11306}).

\bibitem[Papp and Alizadeh(2014)]{papp14shape}
D{\'a}vid Papp and Farid Alizadeh.
\newblock Shape-constrained estimation using nonnegative splines.
\newblock \emph{Journal of Computational and Graphical Statistics}, 23\penalty0
  (1):\penalty0 211--231, 2014.

\bibitem[Peypouquet(2015)]{peypouquet15convex}
Juan Peypouquet.
\newblock \emph{Convex optimization in normed spaces}.
\newblock Springer Cham, 2015.

\bibitem[Price(1940)]{price40completeness}
G.~Baley Price.
\newblock On the completeness of a certain metric space with an application to
  {B}laschke's selection theorem.
\newblock \emph{Bulletin of the American Mathematical Society}, 46\penalty0
  (4):\penalty0 278--280, 1940.

\bibitem[Pya and Wood(2015)]{pya15shape}
Natalya Pya and Simon~N. Wood.
\newblock Shape constrained additive models.
\newblock \emph{Statistics and Computing}, 25:\penalty0 543--559, 2015.

\bibitem[Royset and Wets(2015)]{royset15fusion}
Johannes~O. Royset and Roger J-B Wets.
\newblock Fusion of hard and soft information in nonparametric density
  estimation.
\newblock \emph{European Journal of Operational Research}, 247\penalty0
  (2):\penalty0 532--547, 2015.

\bibitem[Rudi et~al.(2020)Rudi, Marteau-Ferey, and Bach]{rudi20finding}
Alessandro Rudi, Ulysse Marteau-Ferey, and Francis Bach.
\newblock Finding global minima via kernel approximations.
\newblock Technical report, 2020.
\newblock (\url{https://arxiv.org/abs/2012.11978}).

\bibitem[Saitoh and Sawano(2016)]{saitoh16theory}
Saburou Saitoh and Yoshihiro Sawano.
\newblock \emph{Theory of Reproducing Kernels and Applications}.
\newblock Springer Singapore, 2016.

\bibitem[Sangnier et~al.(2016)Sangnier, Fercoq, and d'Alch{\'e}
  Buc]{sangnier16joint}
Maxime Sangnier, Olivier Fercoq, and Florence d'Alch{\'e} Buc.
\newblock Joint quantile regression in vector-valued {RKHS}s.
\newblock \emph{Advances in Neural Information Processing Systems (NIPS)},
  pages 3693--3701, 2016.

\bibitem[Sch{\"o}lkopf et~al.(2001)Sch{\"o}lkopf, Herbrich, and
  Smola]{scholkopf01generalized}
Bernhard Sch{\"o}lkopf, Ralf Herbrich, and Alex~J. Smola.
\newblock A generalized representer theorem.
\newblock In \emph{Conference on Learning Theory (COLT)}, pages 416--426, 2001.

\bibitem[Simchi-Levi et~al.(2014)Simchi-Levi, Chen, and
  Bramel]{simchilevi14logic}
David Simchi-Levi, Xin Chen, and Julien Bramel.
\newblock \emph{The Logic of Logistics: Theory, Algorithms, and Applications
  for Logistics Management}.
\newblock Springer, 2014.

\bibitem[Simon-Gabriel and Sch{\"o}lkopf(2018)]{simon-gabriel18kernel}
Carl-Johann Simon-Gabriel and Bernhard Sch{\"o}lkopf.
\newblock Kernel distribution embeddings: Universal kernels, characteristic
  kernels and kernel metrics on distributions.
\newblock \emph{Journal of Machine Learning Research}, 19\penalty0
  (44):\penalty0 1--29, 2018.

\bibitem[Sriperumbudur et~al.(2011)Sriperumbudur, Fukumizu, and
  Lanckriet]{sriperumbudur11universality}
Bharath Sriperumbudur, Kenji Fukumizu, and Gert Lanckriet.
\newblock Universality, characteristic kernels and {RKHS} embedding of
  measures.
\newblock \emph{Journal of Machine Learning Research}, 12:\penalty0 2389--2410,
  2011.

\bibitem[Steinwart(2001)]{steinwart01influence}
Ingo Steinwart.
\newblock On the influence of the kernel on the consistency of support vector
  machines.
\newblock \emph{Journal of Machine Learning Research}, 6\penalty0 (3):\penalty0
  67--93, 2001.

\bibitem[Steinwart and Christmann(2008)]{steinwart08support}
Ingo Steinwart and Andreas Christmann.
\newblock \emph{Support Vector Machines}.
\newblock Springer, 2008.

\bibitem[Takeuchi et~al.(2006)Takeuchi, Le, Sears, and
  Smola]{takeuchi06nonparametric}
Ichiro Takeuchi, Quoc Le, Timothy Sears, and Alexander Smola.
\newblock Nonparametric quantile estimation.
\newblock \emph{Journal of Machine Learning Research}, 7:\penalty0 1231--1264,
  2006.

\bibitem[Topkis(1998)]{topkis98supermodularity}
Donald~M. Topkis.
\newblock \emph{Supermodularity and complementarity}.
\newblock Princeton University Press, 1998.

\bibitem[Turlach(2005)]{turlach05shape}
Berwin~A. Turlach.
\newblock Shape constrained smoothing using smoothing splines.
\newblock \emph{Computational Statistics}, 20:\penalty0 81--104, 2005.

\bibitem[Varian(1984)]{varian84nonparametric}
Hal~R. Varian.
\newblock The nonparametric approach to production analysis.
\newblock \emph{Econometrica}, 52\penalty0 (3):\penalty0 579--597, 1984.

\bibitem[Vito et~al.(2013)Vito, Umanit{\'a}, and Villa]{devito13extension}
Ernesto~De Vito, Veronica Umanit{\'a}, and Silvia Villa.
\newblock An extension of {M}ercer theorem to matrix-valued measurable kernels.
\newblock \emph{Applied and Computational Harmonic Analysis}, 34\penalty0
  (3):\penalty0 339--351, 2013.

\bibitem[Wahba(1990)]{wahba90spline}
Grace Wahba.
\newblock \emph{Spline Models for Observational Data}.
\newblock SIAM, CBMS-NSF Regional Conference Series in Applied Mathematics,
  1990.

\bibitem[Wang(2011)]{wang11splines}
Yuedong Wang.
\newblock \emph{Smoothing Splines -- Methods and Applications}.
\newblock CRC Press, 2011.

\bibitem[Wu and Sickles(2018)]{wu18semiparametric}
Ximing Wu and Robin Sickles.
\newblock Semiparametric estimation under shape constraints.
\newblock \emph{Econometrics and Statistics}, 6:\penalty0 74--89, 2018.

\bibitem[Yang et~al.(2019)Yang, Wang, Kiyavash, and He]{yang19learning}
Yingxiang Yang, Haoxiang Wang, Negar Kiyavash, and Niao He.
\newblock Learning positive functions with pseudo mirror descent.
\newblock In \emph{Advances in Neural Information Processing Systems
  (NeurIPS)}, pages 14144--14154, 2019.

\bibitem[Zhou(2008)]{zhou08derivative}
Ding-Xuan Zhou.
\newblock Derivative reproducing properties for kernel methods in learning
  theory.
\newblock \emph{Journal of Computational and Applied Mathematics},
  220:\penalty0 456--463, 2008.

\end{thebibliography}

\end{document}